\def\eqref#1{equation~\ref{#1}}
\def\1{\bm{1}}
\def\vepsilon{{{\epsilon}}}
\DeclareMathAlphabet{\mathsfit}{\encodingdefault}{\sfdefault}{m}{sl}
\SetMathAlphabet{\mathsfit}{bold}{\encodingdefault}{\sfdefault}{bx}{n}
\def\gN{{\mathcal{N}}}
\def\gP{{\mathcal{P}}}
\def\gW{{\mathcal{W}}}
\def\sR{{\mathbb{R}}}
\newcommand{\E}{\mathbb{E}}
\let\log\relax
\DeclareMathOperator{\log}{ln}
\newcommand{\KL}{\mathrm{KL}}
\DeclareMathOperator*{\argmin}{arg\,min}
\def\cossqablation{1}
\title{Denoising Diffusion Samplers}
\author{Francisco Vargas$^{1}$\thanks{Work done while at DeepMind}~~, Will Grathwohl$^2$ \& Arnaud Doucet$^2$ \\
$^{1}$ University of Cambridge, $^{2}$ DeepMind
}
\newtheorem*{rep@theorem}{\rep@title}
\newcommand{\newreptheorem}[2]{%
\newenvironment{rep#1}[1]{%
 \def\rep@title{#2 \ref{##1}}%
 \begin{rep@theorem}}%
 {\end{rep@theorem}}}
\newtheorem{proposition}{Proposition}
\newtheorem{corollary}{Corollary}
\begin{document}

\maketitle

\begin{abstract}
Denoising diffusion models are a popular class of generative models providing state-of-the-art results in many domains. One adds gradually noise to data using a diffusion to transform the data distribution into a Gaussian distribution. Samples from the generative model are then obtained by simulating an approximation of the time-reversal of this diffusion initialized by Gaussian samples. Practically, the intractable score terms appearing in the time-reversed process are approximated using score matching techniques. We explore here a similar idea to sample approximately from unnormalized probability density functions and estimate their normalizing constants. We consider a process where the target density diffuses towards a Gaussian. Denoising Diffusion Samplers (DDS) are obtained by approximating the corresponding time-reversal.  While score matching is not applicable in this context, we can leverage many of the ideas introduced in generative modeling for Monte Carlo sampling. Existing theoretical results from denoising diffusion models also provide theoretical guarantees for DDS. We discuss the connections between DDS, optimal control and Schr\"odinger bridges and finally demonstrate DDS experimentally on a variety of challenging sampling tasks.
\end{abstract}

\section{Introduction}
Let $\pi$ be a probability density on $\mathbb{R}^d$ of the form
\begin{equation}
    \pi(x)=\frac{\gamma(x)}{Z},\qquad Z=\int_{\mathbb{R}^d} \gamma(x) \mathrm{d}x,
\end{equation}
where $\gamma:\mathbb{R}^d \rightarrow \mathbb{R}^{+}$ can be evaluated pointwise but the normalizing constant $Z$ is intractable. We are here interested in both estimating $Z$ and obtaining approximate samples from $\pi$.

A large variety of Monte Carlo techniques has been developed to address this problem. In particular Annealed Importance Sampling (AIS) \citep{neal2001annealed} and its Sequential Monte Carlo (SMC) extensions \citep{del2006sequential} are often regarded as the gold standard to compute normalizing constants. Variational techniques are a popular alternative to Markov Chain Monte Carlo (MCMC) and SMC where one considers a flexible family of easy-to-sample distributions $q^{\theta}$ whose parameters are optimized by minimizing a suitable metric, typically the reverse Kullback--Leibler discrepancy $\KL(q^{\theta}||\pi)$. Typical choices for $q^{\theta}$ include mean-field approximation \citep{Wainwright:2008} or normalizing flows \citep{papamakarios2019normalizing}. To be able to model complex variational distributions, it is often useful to model $q^{\theta}(x)$ as the marginal of an auxiliary extended distribution; i.e. $q^{\theta}(x)=\int q^{\theta}(x,u)\mathrm{d}u$. As this marginal is typically intractable, $\theta$ is then learned by minimizing a discrepancy measure between $q^{\theta}(x,u)$ and an extended target $p^{\theta}(x,u)=\pi(x)p^\theta(u|x)$ where $p^\theta(u|x)$ is an auxiliary conditional distribution \citep{Agakov2004}.

Over recent years, Monte Carlo techniques have also been fruitfully combined to variational techniques. For example, AIS can be thought of a procedure where $q^{\theta}(x,u)$ is the joint distribution of a Markov chain defined by a sequence of MCMC kernels whose final state is $x$ while $p^{\theta}(x,u)$ is the corresponding AIS extended target \citep{neal2001annealed}. The parameters $\theta$ of these kernels can then be optimized by minimizing $\KL(q^{\theta}||p^{\theta})$ using stochastic gradient descent \citep{wunoe2020stochastic,Geffner:2021,Thin:2021,ZhangAIS2021,doucet2022annealed,geffner2022langevin}.

Instead of following an AIS-type approach to define a flexible variational family, we follow here an approach inspired by Denoising Diffusion Probabilistic Models (DDPM), a powerful class of generative models \citep{sohl2015deep,ho2020denoising,song2020score}. In this context, one adds noise progressively to data using a diffusion to transform the complex data distribution into a Gaussian distribution. The time-reversal of this diffusion can then be used to transform a Gaussian sample into a sample from the target. While superficially similar to Langevin dynamics, this process mixes fast even in high dimensions as it inherits the mixing properties of the forward diffusion \citep[Theorem 1]{debortoli2021diffusion}. However, as the time-reversal involves the derivatives of the logarithms of the intractable marginal densities of the forward diffusion, these so-called scores are practically approximated using score matching techniques. If the score estimation error is small, the approximate time-reversal still enjoys remarkable theoretical properties  \citep{debortoli2022convergence,ChenChewiSinhosamplingaseasyaslearningscore2022,lee2022convergencenocondition}.

These results motivate us to introduce Denoising Diffusion Samplers (DDS). Like DDPM, we consider a forward diffusion which progressively transforms the target $\pi$ into a Gaussian distribution. This defines an extended target distribution $p(x,u)=\pi(x)p(u|x)$. DDS are obtained by approximating the time-reversal of this diffusion using a process of distribution $q^{\theta}(x,u)$. What distinguishes DDS from DDPM is that we cannot simulate sample paths from the diffusion we want to time-reverse, as we cannot sample its initial state $x$ from $\pi$. Hence score matching ideas cannot be used to approximate the score terms. 

We focus on minimizing $\KL(q^{\theta}||p)$, equivalently maximizing an Evidence Lower Bound (ELBO), as in variational inference. We leverage a representation of this KL discrepancy based on the introduction of a suitable auxiliary reference process that provides low variance estimate of this objective and its gradient. We can exploit the many similarities between DDS and DDPM to leverage some of the ideas developed in generative modeling for Monte Carlo sampling. This includes using the probability flow ordinary differential equation (ODE) \citep{song2020score} to derive novel normalizing flows and the use of underdamped Langevin diffusions as a forward noising diffusion \citep{Dockhorn2022}. The implementation of these samplers requires designing numerical integrators for the resulting stochastic differential equations (SDE) and ODE. However, simple  integrators such as the standard Euler--Maryuama scheme do not yield a valid ELBO in discrete-time. So as to guarantee one obtains a valid ELBO, DDS relies instead on an integrator for an auxiliary stationary reference process which preserves its invariant distribution as well as an integrator for the approximate time-reversal inducing a distribution absolutely continuous w.r.t. the distribution of the discretized reference process. Finally we compare experimentally DDS to AIS, SMC and other state-of-the-art Monte Carlo methods on a variety of sampling tasks.

\section{Denoising Diffusion Samplers: Continuous Time}\label{sec:DDSCT}
We start here by formulating DDS in continuous-time to gain insight on the structure of the time-reversal we want to approximate. We introduce $\mathcal{C}=C([0,T],\mathbb{R}^d)$ the space of continuous functions from $[0,T]$ to $\mathbb{R}^d$ and $\mathcal{B}(\mathcal{C})$ the Borel sets on $\mathcal{C}$. We will consider in this section path measures which are probability measures on $(\mathcal{C},\mathcal{B}(\mathcal{C}))$, see \cite{leonard2013survey} for a formal definition. Numerical integrators are discussed in the following section. 
\subsection{Forward diffusion and its time-reversal}
Consider the forward noising diffusion given by an Ornstein--Uhlenbeck (OU) process\footnote{This is referred to as a Variance Preserving diffusion by \cite{song2020score}.}
\begin{equation}\label{eq:forwarddiffusionP}
    \mathrm{d}x_t=-\beta_t x_t \mathrm{d}t+\sigma \sqrt{2\beta_t}\mathrm{d}B_t,\qquad x_0 \sim \pi,
\end{equation}
where $(B_t)_{t\in[0,T]}$ is a $d$-dimension Brownian motion and $t \rightarrow \beta_t$ is a non-decreasing positive function. This diffusion induces the path measure $\mathcal{P}$ on the time interval $[0,T]$ and the marginal density of $x_t$ is denoted $p_t$. The transition density of this diffusion is given by $p_{t|0}(x_t|x_0)=\mathcal{N}(x_t;\sqrt{1-\lambda_t}x_0,\sigma^2 \lambda_t I)$ where $\lambda_t=1-\exp(-2\int^t_0\beta_s \mathrm{d}s)$.

From now on, we will always consider a scenario where  $\int_0^T \beta_s \mathrm{d}s \gg 1$ so that $p_T(x_T)\approx \mathcal{N}(x_T;0,\sigma^2 I)$. We can thus think of (\ref{eq:forwarddiffusionP}) as transporting approximately the target density $\pi$ to this Gaussian density.

From \citep{haussmann1986time}, its time-reversal $(y_t)_{t\in[0,T]}=(x_{T-t})_{t\in[0,T]}$, where equality is here in distribution, satisfies
\begin{equation}\label{eq:exacttimereversalCT}
    \mathrm{d}y_t=\beta_{T-t}\{y_t+2\sigma^2 \nabla \log p_{T-t}(y_t)\} \mathrm{d}t+\sigma \sqrt{2\beta_{T-t}}\mathrm{d}W_t,\qquad y_0 \sim p_{T},
\end{equation}
where $(W_t)_{t\in[0,T]}$ is another $d$-dimensional Brownian motion. By definition this time-reversal starts from $y_0 \sim p_T(y_0)\approx \mathcal{N}(y_0;0,\sigma^2 I)$ and is such that $y_T \sim \pi$. This suggests that if we could approximately simulate the diffusion (\ref{eq:exacttimereversalCT}), then we would obtain approximate samples from $\pi$. 

However, putting this idea in practice requires being able to approximate the intractable scores $(\nabla \log p_t(x))_{t \in [0,T]}$. To achieve this, DDPM rely on score matching techniques \citep{Hyvarinen:2005a,vincent2011connection} as it is easy to sample from (\ref{eq:forwarddiffusionP}). This is impossible in our scenario as sampling from (\ref{eq:forwarddiffusionP}) requires sampling $x_0 \sim \pi$ which is impossible by assumption.

\subsection{Reference diffusion and value function}\label{sec:refdiffusionvaluefunction}
In our context, it is useful to introduce a \emph{reference} process defined by the diffusion following (\ref{eq:forwarddiffusionP}) but initialized at $p^{\textup{ref}}_0(x_0)=\mathcal{N}(x_0;0,\sigma^2 I)$ rather than $\pi(x_0)$ thus ensuring that the marginals of the resulting path measure $\mathcal{P}^{\textup{ref}}$ all satisfy $p^{\textup{ref}}_t(x_t)=\mathcal{N}(x_t;0,\sigma^2 I)$. From the chain rule for KL for path measures (see e.g. \cite[Theorem 2.4]{leonard2014some} and \cite[Proposition 24]{debortoli2021diffusion}), one has $\KL(\mathcal{Q}||\mathcal{P}^{\textup{ref}})=\KL(q_0||p^{\textup{ref}}_0)+\mathbb{E}_{x_0 \sim q_0}[\KL(\mathcal{Q}(\cdot|x_0)||\mathcal{P}^{\textup{ref}}(\cdot|x_0))]$. Thus $\mathcal{P}$ can be identified as the path measure minimizing the KL discrepancy w.r.t. $\mathcal{P}^{\textup{ref}}$ over the set of path measures $\mathcal{Q}$ with marginal $q_0(x_0)=\pi(x_0)$ at time $t=0$, i.e. $ \mathcal{P}= \argmin_\mathcal{Q} \{\KL(\mathcal{Q}||\mathcal{P}^{\textup{ref}}): q_0=\pi\}$.

A time-reversal representation of $\mathcal{P}^{\textup{ref}}$ is given by $(y_t)_{t\in[0,T]}=(x_{T-t})_{t\in[0,T]}$ satisfying 
\begin{equation}\label{eq:timereversalrefprocessCT}
    \mathrm{d}y_t =-\beta_{T-t} y_t \mathrm{d}t+\sigma \sqrt{2\beta_{T-t}} \mathrm{d}W_t,\qquad y_0 \sim p^{\textup{ref}}_0.
\end{equation}
As $\beta_{T-t} y_t +2\sigma^2 \nabla \log p^{\textup{ref}}_{T-t}(y_t)=-\beta_{T-t} y_t$, we can rewrite the time-reversal (\ref{eq:exacttimereversalCT}) of $\mathcal{P}$ as 
\begin{equation}\label{eq:diffusionvaluefunction}
    \mathrm{d}y_t= -\beta_{T-t} \{y_t -2\sigma^2 \nabla \log \phi_{T-t}(y_t)\} \mathrm{d}t+\sigma \sqrt{2\beta_{T-t}} \mathrm{d}W_t,\qquad y_0 \sim p_T,
\end{equation}
where $\phi_t(x)=p_t(x)/p^{\textup{ref}}_t(x)$ is a so-called value function which can be shown to satisfy a Kolmogorov equation such that $\phi_t(x_t)=\mathbb{E}_{\mathcal{P}^{\textup{ref}}}[\phi_0(x_0)|x_t]$, the expectation being w.r.t. $\mathcal{P}^{\textup{ref}}$.

\subsection{Learning the time-reversal}
To approximate the time-reversal (\ref{eq:exacttimereversalCT}) of $\mathcal{P}$, consider a path measure $\mathcal{Q}^\theta$ whose time-reversal is induced by
\begin{equation}\label{eq:Qthetascore}
\mathrm{d}y_t=\beta_{T-t}\{y_t+2\sigma^2 s_{\theta}(T-t,y_t) \} \mathrm{d}t+\sigma \sqrt{2\beta_{T-t}}\mathrm{d}W_t,\qquad y_0  \sim \mathcal{N}(0,\sigma^2 I),
\end{equation}
so that $y_t \sim q^{\theta}_{T-t}$. To obtain $s_{\theta}(t,x) \approx \nabla \log p_t(x)$, we parameterize $s_{\theta}(t,x)$ by a neural network whose parameters are obtained by minimizing
\begin{align}\label{eq:scorematchinglike}
 \KL(\mathcal{Q}^\theta||\mathcal{P})&=\KL(\mathcal{N}(0,\sigma^2 I)||p_T)+\mathbb{E}_{y_0\sim \mathcal{N}(0,\sigma^2 I)}[\KL(\mathcal{Q}^\theta(\cdot |y_0)||\mathcal{P}(\cdot|y_0))]\\
    &=\KL(\mathcal{N}(0,\sigma^2 I)||p_T)+\sigma^2 \mathbb{E}_{\mathcal{Q}^\theta}\Bigr[\scaleobj{.8}{\int_0^T} \beta_{T-t}||s_\theta(T-t,y_t)-\nabla \log p_{T-t}(y_t)||^2 \mathrm{d}t \Bigr],\nonumber
\end{align}
where we have used the chain rule for KL then Girsanov's theorem (see e.g. \citep{klebaner2012introduction}). This expression of the KL is reminiscent of the expression obtained in \cite[Theorem 1]{song2021maximum} in the context of DDPM. However, the expectation appearing in (\ref{eq:scorematchinglike}) is here w.r.t. $\mathcal{Q}^\theta$ and not w.r.t.  $\mathcal{P}$ and we cannot get rid of the intractable scores $(\nabla \log p_{t}(x))_{t \in [0,T]}$ using score matching ideas.

Instead, taking inspiration from (\ref{eq:diffusionvaluefunction}), we reparameterize the time-reversal of  $\mathcal{Q}^\theta$ using
\begin{equation}\label{eq:approximatetimereversalCT}
    \mathrm{d}y_t=-\beta_{T-t}\{y_t -2\sigma^2 f_{\theta}(T-t,y_t)\} \mathrm{d}t+\sigma \sqrt{2\beta_{T-t}} \mathrm{d}W_t,\qquad y_0 \sim \mathcal{N}(0,\sigma^2 I),
\end{equation}
i.e. $f_\theta(t,x)=s_\theta(t,x)-\nabla \log p^{\textup{ref}}_t(x)=s_{\theta}(t,x)+x/\sigma^2$. This reparameterization allows us to express  $\KL(\mathcal{Q}^\theta||\mathcal{P})$ in a compact form.

\begin{proposition}\label{prop:RNcontinuous}
The Radon--Nikodym derivative $\frac{\mathrm{d}\mathcal{Q}^{\theta}}{\mathrm{d}\mathcal{P}^{\textup{ref}}}(y_{[0,T]})$ satisfies under $\mathcal{Q}^{\theta}$
\begin{equation}\label{eq:RadonNykodyn}
   \log \left(\frac{\mathrm{d}\mathcal{Q}^{\theta}}{\mathrm{d}\mathcal{P}^{\textup{ref}}}\right)=\sigma^2 \scaleobj{.8}{\int_0^T}  \beta_{T-t} ||f_\theta(T-t,y_t)||^2 \mathrm{d}t+\sigma \scaleobj{.8}{\int_0^T} \sqrt{2 \beta_{T-t}} f_\theta(T-t,y_t)^\top \mathrm{d}W_t.
\end{equation}
From the identity $\KL(\mathcal{Q}^\theta||\mathcal{P})=\KL(\mathcal{Q}^\theta||\mathcal{P}^{\textup{ref}})+\mathbb{E}_{y_T \sim q^{\theta}_0}[\ln \left(\frac{p^{\textup{ref}}_0(y_T)}{p_0(y_T)}\right)]$, it follows that
\begin{align}
\KL(\mathcal{Q}^\theta||\mathcal{P})&
=\mathbb{E}_{\mathcal{Q}^\theta} \Bigr[ \sigma^2 \scaleobj{.8}{\int_0^T} \beta_{T-t} ||f_\theta(T-t,y_t)||^2 \mathrm{d}t
     +\scaleobj{.8}{\ln \left(\frac{\gN(y_T; 0, \sigma^2 I)}{\pi(y_T)}\right)} \Bigr]\label{eq:KLpathintegral}.
\end{align}
\end{proposition}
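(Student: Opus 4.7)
The plan is to prove the two displays in the proposition by applying Girsanov's theorem to the pair $(\mathcal{Q}^\theta,\mathcal{P}^{\textup{ref}})$, then combining this with a chain-rule decomposition of $\KL(\mathcal{Q}^\theta||\mathcal{P})$ that isolates the difference between $\mathcal{P}$ and $\mathcal{P}^{\textup{ref}}$ at the initial time.

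First I would compare the time-reversal SDEs (\ref{eq:approximatetimereversalCT}) for $\mathcal{Q}^\theta$ and (\ref{eq:timereversalrefprocessCT}) for $\mathcal{P}^{\textup{ref}}$. Both start from the same initial distribution $\mathcal{N}(0,\sigma^2 I)$ and have identical volatility $\sigma\sqrt{2\beta_{T-t}}$; they differ only in drift, the gap being $b^\theta_t - b^{\textup{ref}}_t = 2\sigma^2 \beta_{T-t} f_\theta(T-t,y_t)$. Girsanov's theorem (as stated e.g.\ in \citep{klebaner2012introduction}) then yields the Radon--Nikodym derivative with integrand $u_t = (\sigma\sqrt{2\beta_{T-t}})^{-1}(b^\theta_t - b^{\textup{ref}}_t) = \sigma\sqrt{2\beta_{T-t}}\, f_\theta(T-t,y_t)$. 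Writing the density under $\mathcal{Q}^\theta$ (where $W$ is a $\mathcal{Q}^\theta$-Brownian motion) one gets
\begin{equation*}
\log\!\left(\frac{\mathrm{d}\mathcal{Q}^\theta}{\mathrm{d}\mathcal{P}^{\textup{ref}}}\right) = \int_0^T u_t^\top \mathrm{d}W_t + \tfrac{1}{2}\int_0^T \|u_t\|^2 \mathrm{d}t,
\end{equation*}
which, after substituting $u_t$, is exactly (\ref{eq:RadonNykodyn}). The sign convention needs care here: under $\mathcal{P}^{\textup{ref}}$ the stochastic integral would appear with a minus a quadratic term; under $\mathcal{Q}^\theta$ the quadratic term is positive, which is the convention used in the statement.

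For the second display, since $\mathcal{P}$ and $\mathcal{P}^{\textup{ref}}$ are driven by the same SDE and differ only at $t=0$ (with initial densities $\pi$ and $p_0^{\textup{ref}}=\mathcal{N}(0,\sigma^2 I)$), disintegration gives $\frac{\mathrm{d}\mathcal{P}}{\mathrm{d}\mathcal{P}^{\textup{ref}}}(x_{[0,T]}) = \pi(x_0)/p_0^{\textup{ref}}(x_0)$. Multiplying Radon--Nikodym derivatives yields
\begin{equation*}
\log\!\left(\frac{\mathrm{d}\mathcal{Q}^\theta}{\mathrm{d}\mathcal{P}}\right) = \log\!\left(\frac{\mathrm{d}\mathcal{Q}^\theta}{\mathrm{d}\mathcal{P}^{\textup{ref}}}\right) + \log\!\left(\frac{p_0^{\textup{ref}}(x_0)}{\pi(x_0)}\right).
\end{equation*}
Taking expectations under $\mathcal{Q}^\theta$, and recalling that $x_0$ equals $y_T$ in distribution so has marginal $q_0^\theta$, establishes the identity $\KL(\mathcal{Q}^\theta\|\mathcal{P}) = \KL(\mathcal{Q}^\theta\|\mathcal{P}^{\textup{ref}}) + \mathbb{E}_{y_T\sim q_0^\theta}[\ln(p_0^{\textup{ref}}(y_T)/\pi(y_T))]$ stated in the proposition.

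Finally I would take the $\mathcal{Q}^\theta$-expectation of (\ref{eq:RadonNykodyn}). Assuming standard Novikov/integrability conditions on $f_\theta$, the Itô stochastic integral is a true martingale and vanishes in expectation, leaving $\KL(\mathcal{Q}^\theta\|\mathcal{P}^{\textup{ref}}) = \sigma^2\,\mathbb{E}_{\mathcal{Q}^\theta}[\int_0^T \beta_{T-t}\|f_\theta(T-t,y_t)\|^2\mathrm{d}t]$. Combining with the displayed identity and substituting $p_0^{\textup{ref}}=\mathcal{N}(\cdot;0,\sigma^2 I)$ produces (\ref{eq:KLpathintegral}). The only non-routine step is justifying that the stochastic integral has zero mean, which is standard once one assumes $f_\theta$ is smooth and has appropriate growth so Novikov's condition holds; everything else is mechanical once Girsanov and the initial-time disintegration are in place.
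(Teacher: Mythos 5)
Your proposal is correct and follows essentially the same route as the paper: Girsanov applied to the time-reversed SDEs of $\mathcal{Q}^\theta$ and $\mathcal{P}^{\textup{ref}}$ (which share the initial law $\mathcal{N}(0,\sigma^2 I)$ and volatility, differing only in drift) gives (\ref{eq:RadonNykodyn}), the disintegration $\mathrm{d}\mathcal{P}/\mathrm{d}\mathcal{P}^{\textup{ref}}=\pi(x_0)/p^{\textup{ref}}_0(x_0)$ gives the KL decomposition, and the vanishing of the It\^o integral under $\mathcal{Q}^\theta$ gives (\ref{eq:KLpathintegral}). Your sign bookkeeping for the Girsanov exponent under $\mathcal{Q}^\theta$ and the identification $x_0=y_T$ match the paper's argument exactly.
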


For $\theta$ minimizing (\ref{eq:KLpathintegral}), approximate samples from $\pi$ can be obtained by simulating (\ref{eq:approximatetimereversalCT}) and returning $y_T\! \sim \!q^{\theta}_0$. We can obtain an unbiased estimate of $Z$ via the following importance sampling identity
\begin{equation}\label{eq:hatZCT}
 \hat{Z}=\frac{\gamma(y_{T})}{\mathcal{N}(y_T;0,\sigma^2 I)} \frac{\mathrm{d}\mathcal{P}^{\textup{ref}}}{\mathrm{d}\mathcal{Q}^{\theta}}(y_{[0,T]}),
\end{equation}
where the second term can be computed directly from (\ref{eq:RadonNykodyn}) and $y_{[0,T]}$ is obtained by solving (\ref{eq:approximatetimereversalCT}). 

\subsection{Continuous-time normalizing flow}\label{sec:CTNF}
To approximate the log likelihood of DDPM, it was proposed by \cite{song2020score} to rely on the probability flow ODE. This is an ODE admitting the same marginal distributions $(p_t)_{t\in[0,T]}$ as the noising diffusion given by $\mathrm{d}x_t=-\beta_t \big\{ x_t + \sigma^2 \nabla \log p_t(x_t)\big\}\mathrm{d}t$. We use here this ODE to design a continuous-time normalizing flow to sample from $\pi$. For $\theta$ such that $f_{\theta}(t,x) \approx \nabla \log \phi_t(x)$ (obtained by minimization of the KL in (\ref{eq:KLpathintegral})), we have $x_t+ \sigma^2 \nabla \log p_t(x) \approx \sigma^2 f_{\theta}(t,x)$. So it is possible to sample approximately from $\pi$ by integrating the following ODE over $[0,T]$
\begin{equation}\label{eq:ODEproposal}
   \mathrm{d}y_t=\sigma^2 \beta_{T-t} f_{\theta}(T-t,y_t) \mathrm{d}t,\qquad y_0\sim  \mathcal{N}(0,\sigma^2 I).
\end{equation}
If denote by $\bar{q}^{\theta}_{T-t}$ the distribution of $y_t$ then $y_T \sim \bar{q}^{\theta}_{0}$ is an approximate sample from $\pi$. We can use this normalizing flow to obtain an unbiased estimate of $Z$ using importance sampling $\hat{Z}=\gamma(y_{T})/\bar{q}^{\theta}_{0}(y_T)$ for $y_T\sim \bar{q}^{\theta}_{0}$.
Indeed, contrary to the proposal $q^{\theta}_0$ induced by (\ref{eq:approximatetimereversalCT}), we can compute pointwise $\bar{q}^{\theta}_{0}$ using the instantaneous change of variables formula \citep{chen2018neural} such that $\log \bar{q}^{\theta}_0(y_T) = \log \mathcal{N}(y_0;0,\sigma^2 I)-\sigma^2 \scaleobj{.8}{\int^T_0} \beta_{T-t} \nabla \cdot f_{\theta}(T-t,y_t) \mathrm{d}t$ and where $(y_t)_{t\in [0,T]}$ arises from the ODE (\ref{eq:ODEproposal}). 
The expensive divergence term can be estimated using the Hutchinson estimator \citep{grathwohl2018ffjord,song2020score}.

\subsection{Extension to Underdamped Langevin Dynamics}\label{subsec:underdampedLangevin}
For DDPM, it has been proposed to extend the original state $x \in \mathbb{R}^d$ by a momentum variable $p \in \mathbb{R}^d$. One then diffuses the data distribution augmented by a Gaussian distribution on the momentum using an underdamped Langevin dynamics targeting $\mathcal{N}(x;0,\sigma^2I)\mathcal{N}(m;0,M)$ where $M$ is a positive definite mass matrix \citep{Dockhorn2022}. It was demonstrated empirically that the resulting scores are smoother, hence easier to estimate and this leads to improved performance. We adapt here this approach to Monte Carlo sampling; see Section \ref{sec:underdampeddetailsSM} for more details.

We diffuse $\bar{\pi}(x,m)=\pi(x)\mathcal{N}(m;0,M)$ using an underdamped Langevin dynamics, i.e.
\begin{equation}\label{eq:forwardunderdamped}
\mathrm{d}x_t=M^{-1}m_t\mathrm{d}t,\quad \mathrm{d}m_t=-\frac{x_t}{\sigma^2}\mathrm{d}t -\beta_t m_t\mathrm{d}t +\sqrt{2\beta_t}M^{1/2}\mathrm{d}B_t,\quad (x_0,m_0)\sim \bar{\pi}.
\end{equation}
The resulting path measure on $[0,T]$ is denoted $\mathcal{P}$ and the marginal of $(x_t,m_t)$ is denoted $\eta_t$.

Here, the reference process $\mathcal{P}^{\textup{ref}}$ is defined by the diffusion (\ref{eq:forwardunderdamped}) initialized using $x_0 \sim \mathcal{N}(0,\sigma^2 I), m_0\sim \mathcal{N}(0,M)$. This ensures that $\eta^{\textup{ref}}_t(x_t,m_t)=\mathcal{N}(x_t;0,\sigma^2I)\mathcal{N}(m_t;0,M)$ for all $t$ and the time-reversal process $(y_t,n_t)_{t\in[0,T]}=(x_{T-t},m_{T-t})_{t\in[0,T]}$ (where equality is in distribution) of this stationary diffusion satisfies
\begin{align}\label{eq:reverseunderdampedref}
\mathrm{d}y_t&=-M^{-1}n_t\mathrm{d}t,\quad
\mathrm{d}n_t
=\frac{y_t}{\sigma^2}\mathrm{d}t -\beta_{T-t} n_t\mathrm{d}t+\sqrt{2\beta_{T-t}}M^{1/2}\mathrm{d}W_t. 
\end{align}

Using manipulations identical to Section \ref{sec:refdiffusionvaluefunction}, the time-reversal of $\mathcal{P}$ can be also be written for $\phi_t(x,m):=\eta_t(x,m)/\eta_t^{\textup{ref}}(x,m)$ as $\mathrm{d}y_t=-M^{-1}n_t\mathrm{d}t$ and
\begin{align}\label{eq:reverseunderdampedphi}
\mathrm{d}n_t=
\frac{y_t}{\sigma^2}\mathrm{d}t - \beta_{T-t}n_t\mathrm{d}t +2\beta_{T-t}\nabla_{n_t} \log \phi_{T-t}(y_t,n_t)\mathrm{d}t+\sqrt{2\beta_{T-t}}M^{1/2}\mathrm{d}W_t. %\nonumber
\end{align}
To approximate $\mathcal{P}$, we will consider a parameterized path measure $\mathcal{Q}^\theta$ whose time reversal is defined for $(y_0,n_0)\sim \mathcal{N}(y_0;0,\sigma^2 I)\mathcal{N}(n_0;0,M)$ by $\mathrm{d}y_t=-M^{-1}n_t\mathrm{d}t$ and 
\begin{align}\label{eq:reverseunderdampedQ}
\mathrm{d}n_t=\frac{y_t}{\sigma^2}\mathrm{d}t - \beta_{T-t} n_t\mathrm{d}t +2\beta_{T-t}M f_{\theta}(T-t,y_t,n_t)\mathrm{d}t+\sqrt{2\beta_{T-t}}M^{1/2}\mathrm{d}W_t.
\end{align}
We can then express the KL of interest in a compact way similar to Proposition \ref{prop:RNcontinuous}, see Appendix \ref{appdx:udmpcont}. A normalising flow formulation is presented in Appendix \ref{app:NFunderdamped}.

\begin{algorithm*}[t!]
\caption{DDS Training}\label{alg:dds_alg}
\begin{algorithmic}[1]
\State {\textbf{Input: }  $\sigma >0$, $\gamma: \sR^d \rightarrow \sR^+$, $(\beta_k)_{k=1}^K \in (\sR^+)^K$, $\theta \in  \sR^p$, $\lambda>0$}
\For{$i = 1, \dots, \mathrm{training\;iterations}$}
    \For{$n = 1, \dots, N$} \textcolor{gray}{ $\quad$ \# Iterate over training batch size.}
        \State{Sample~$y_{0,n} \sim \gN(0,\sigma^2 I)$ and set $r_0=0$}
        \For{$k = 0, \dots, K-1$} \textcolor{gray}{ $\quad$ \# Iterate over integration steps.}
            \State{$\lambda_{K-k}:=1-\sqrt{1-\alpha_{K-k}}$}
            \State{$ y_{k+1,n}\!=\!\sqrt{1\!\!-\!\alpha_{K\!-\!k}} y_{k,n}+2\sigma^2\lambda_{K\!-k} f_\theta(K\!\!-k,y_{k,n}\!) +\sigma \!\sqrt{\alpha_{K-k}} \varepsilon_{k,n},\; \varepsilon_{k,n}\! \overset{\textup{i.i.d.}}{\sim}\! \mathcal{N}(0,I)$}
            \State{$r_{k+1,n} = r_{k,n} + \frac{2\sigma^2 \lambda_{K-k}^2}{\alpha_{K-k}} || f_\theta(K-k,y_{k,n})||^2$}
        \EndFor
        \State{$\theta \leftarrow \theta - \lambda \nabla_\theta \frac{1}{N} \sum_{n=1}^N \left(r_{K,n} + \ln \left(\frac{\gN(y_{K,n}; 0, \sigma^2 I)}{\pi(y_{K,n})}\right)\right)$}
    \EndFor    
\EndFor
\State{\textbf{Return: }  $\theta$}
\end{algorithmic}
\end{algorithm*}

\section{Denoising Diffusion Samplers: Discrete-Time}\label{sec:DDSDT}
We introduce here discrete-time integrators for the SDEs and ODEs introduced in Section \ref{sec:DDSCT}. Contrary to DDPM, we not only need an integrator for $\mathcal{Q}_{\theta}$ but also for $\mathcal{P}^{\textup{ref}}$ to be able to compute an approximation of the Radon--Nikodym derivative $\mathrm{d}\mathcal{Q}_{\theta}/\mathrm{d}\mathcal{P}^{\textup{ref}}$. Additionally this integrator needs to be carefully designed as explained below to preserve an ELBO. For sake of simplicity, we consider a constant discretization step $\delta\!>\!0$ such that $K\!=T/\delta$ is an integer. In the indices, we write $k$ for $t_k=k\delta$ to simplify notation; e.g. $x_k$ for $x_{k\delta}$.

\subsection{Integrator for $\gP^{\mathrm{ref}}$}

Proposition \ref{prop:RNcontinuous} shows that learning the time reversal in continuous-time can be achieved by minimising the objective $\KL(\mathcal{Q}^\theta||\mathcal{P})$ given in (\ref{eq:KLpathintegral}). This expression is obtained by using the identity $\KL(\mathcal{Q}^\theta||\mathcal{P})=\KL(\mathcal{Q}^\theta||\mathcal{P}^{\textup{ref}})+\mathbb{E}_{y_T \sim q^{\theta}_0}[\ln \left(p^{\textup{ref}}_0(y_T)/p_0(y_T)\right)]$. This is also equivalent to maximizing the corresponding ELBO, $\mathbb{E}_{\mathcal{Q}^\theta}[\log \hat{Z}]$, for the unbiased estimate $\hat{Z}$ of $Z$ given in (\ref{eq:hatZCT})

An Euler--Maruyama (EM) discretisation of the corresponding SDEs is obviously applicable. However, it is problematic as established below.
\begin{proposition}\label{prop:notelbo}
Consider integrators of $\mathcal{P}^{\textup{ref}}$ and $\mathcal{Q}^{\theta}$ leading to approximations $p^{\textup{ref}}(x_{0:K})$ and $q^{\theta}(x_{0:K})$. Then $\KL(q^\theta||{p}^{\textup{ref}})+\mathbb{E}_{y_K \sim q^{\theta}_0}[\ln \left(p^{\textup{ref}}_0(y_K)/\pi(y_K)\right)]$ is guaranteed to be non-negative and $\mathbb{E}_{\mathcal{Q}^\theta}[\log \hat{Z}]$ is an ELBO for $\hat{Z}=(\gamma(y_{K})p^{\textup{ref}}(y_{0:K}))/(\mathcal{N}(y_K;0,\sigma^2) q^\theta(y_{0:K}))$ if one has $p^{\mathrm{ref}}_K(y_K) = p_0^{\mathrm{ref}}(y_K)$. The EM discretisation of $\mathcal{P}^{\textup{ref}}$ does \emph{not} satisfy this condition.
\end{proposition}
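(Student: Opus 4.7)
The plan is to recognize that both parts of the statement hinge on a single observation: the discretized joint $p^{\textup{ref}}(y_{0:K})$ factors as $p^{\textup{ref}}_K(y_K)\, p^{\textup{ref}}(y_{0:K-1}\mid y_K)$, and the hypothesis $p^{\textup{ref}}_K = p_0^{\textup{ref}}$ is precisely what makes the candidate $\hat Z$ in the statement the importance weight between $\mathcal{Q}^\theta$ and a genuine extended target on path space. Once this is noted, both the non-negativity claim and the ELBO inequality follow from a short rewrite plus Jensen's inequality.

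Concretely, I would introduce the extended target $p(y_{0:K}):=\pi(y_K)\, p^{\textup{ref}}(y_{0:K-1}\mid y_K)$, a genuine probability density integrating to $1$, and expand the left-hand side of the first claim as
\begin{align*}
\KL(q^\theta \| p^{\textup{ref}}) + \mathbb{E}_{y_K\sim q_0^\theta}\!\Bigl[\log\tfrac{p_0^{\textup{ref}}(y_K)}{\pi(y_K)}\Bigr]
= \mathbb{E}_{q^\theta}\!\Bigl[\log\tfrac{q^\theta(y_{0:K})\, p_0^{\textup{ref}}(y_K)}{p^{\textup{ref}}_K(y_K)\, p^{\textup{ref}}(y_{0:K-1}\mid y_K)\,\pi(y_K)}\Bigr].
\end{align*}
Under $p^{\textup{ref}}_K = p_0^{\textup{ref}}$ the factor $p_0^{\textup{ref}}(y_K)/p^{\textup{ref}}_K(y_K)$ cancels and the right-hand side collapses to $\KL(q^\theta \| p) \geq 0$. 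The very same substitution rewrites the candidate estimator as $\hat Z = \gamma(y_K)\, p^{\textup{ref}}(y_{0:K-1}\mid y_K)/q^\theta(y_{0:K})$, which is a bona fide unnormalized importance weight for $p$; the standard variational identity then gives $\mathbb{E}_{\mathcal{Q}^\theta}[\log \hat Z] = \log Z - \KL(q^\theta \| p) \leq \log Z$, i.e.\ the ELBO property. Without the marginal identity, the cancellation fails, $\hat Z$ is no longer an importance weight for any proper extended target, and the inequality can be violated.

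For the negative part, I would verify directly that EM breaks the condition. Applying one EM step with stepsize $\delta$ to $\mathrm{d}x_t = -\beta_t x_t\, \mathrm{d}t + \sigma\sqrt{2\beta_t}\, \mathrm{d}B_t$ initialized at $x_0 \sim \mathcal{N}(0, \sigma^2 I)$ yields $x_1 = (1-\beta_0\delta)\, x_0 + \sigma\sqrt{2\beta_0\delta}\,\epsilon_0$, whose covariance equals $\sigma^2\bigl((1-\beta_0\delta)^2 + 2\beta_0\delta\bigr)\, I = \sigma^2(1+\beta_0^2\delta^2)\, I \neq \sigma^2 I$. Hence $p_1^{\textup{ref}} \neq p_0^{\textup{ref}}$ and, iterating, $p^{\textup{ref}}_K \neq p_0^{\textup{ref}}$ in general.

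The main obstacle is conceptual rather than calculational: one must spot the extended target $p$ and realize that $p^{\textup{ref}}_K = p_0^{\textup{ref}}$ is exactly what promotes $\hat Z$ to an honest importance weight; once found, the rest is a few lines of algebra. The punchline motivates the reliance in Algorithm \ref{alg:dds_alg} on an integrator (the exact OU transition with $\lambda_{K-k} = 1 - \sqrt{1-\alpha_{K-k}}$) that leaves $\mathcal{N}(0, \sigma^2 I)$ invariant at every step, which is the discrete-time analogue of $\mathcal{P}^{\textup{ref}}$ being stationary in continuous time.
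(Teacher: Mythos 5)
Your proof is correct and follows essentially the same route as the paper: you identify the backward-decomposed extended target $\pi(y_K)\,p^{\textup{ref}}(y_{0:K-1}\mid y_K)$, show the condition $p^{\textup{ref}}_K=p^{\textup{ref}}_0$ is exactly what makes it a normalized density so that the objective becomes a KL (hence non-negative) and $\hat Z$ an honest importance weight. You in fact go slightly further than the paper on the last claim, carrying out explicitly the variance computation $\sigma^2\bigl((1-\beta_0\delta)^2+2\beta_0\delta\bigr)=\sigma^2(1+\beta_0^2\delta^2)\neq\sigma^2$ that the paper only asserts as ``simple calculations.''
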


A direct consequence of this result is that the estimator for $\ln Z$ that uses the EM discretisation can be such that $\mathbb{E}_{\mathcal{Q}^\theta}[\log \hat{Z}] \geq \ln Z$ as observed empirically in Table \ref{fig:euvsexp}. To resolve this issue, we  derive in the next section an integrator for $\gP^{\mathrm{ref}}$ which ensures that $p^{\mathrm{ref}}_k(y) = p_0^{\mathrm{ref}}(y)$ for all $k$. 

\subsection{Ornstein--Uhlenbeck}
We can integrate exactly (\ref{eq:timereversalrefprocessCT}). This is given by $y_0\sim \mathcal{N}(0,\sigma^2 I)$ and
\begin{equation}\label{eq:OUrefintegrated}
    y_{k+1}=\sqrt{1-\alpha_{K-k}} y_k+\sigma \sqrt{\alpha_{K-k}} \varepsilon_k,\quad \varepsilon_k \overset{\textup{i.i.d.}}{\sim} \mathcal{N}(0,I),
\end{equation}
for $\alpha_k=1-\exp\left(-2 \int_{(k-1)\delta}^{k\delta}\beta_s \mathrm{d}s\right)$~\footnote{Henceforth, it is assumed that $\alpha_k$ can be computed exactly.}. This defines the discrete-time reference process.
We propose the following exponential type integrator \citep{debortoli2022convergence} for (\ref{eq:approximatetimereversalCT}) initialized using $y_0 \sim \mathcal{N}(0,\sigma^2 I)$
\begin{equation}\label{eq:integratorapproxrevverse}
  y_{k+1}=\sqrt{1-\alpha_{K-k}} y_k+2\sigma^2 (1-\sqrt{1-\alpha_{K-k}}) f_\theta(K-k,y_k)+\sigma \sqrt{\alpha_{K-k}}
    \varepsilon_k,
\end{equation}
where $\varepsilon_k \overset{\textup{i.i.d.}}{\sim} \mathcal{N}(0,I)$. Equations (\ref{eq:OUrefintegrated}) and (\ref{eq:integratorapproxrevverse}) define the time reversals of the reference process $p^{\textup{ref}}(x_{0:K})$ and proposal $q^{\theta}(x_{0:K})$. For its time reversal, we write $q^{\theta}(y_{0:K})=\mathcal{N}(y_0;0,\sigma^2 I)\prod_{k=1}^K q^{\theta}_{k-1|k}(y_{K-k+1}|y_{K-k})$ abusing slightly notation, and similarly for $p^{\textup{ref}}(y_{0:K})$.
We will be relying on the discrete-time counterpart of (\ref{eq:RadonNykodyn}).
\begin{proposition}\label{prop:KL}
The log density ratio between $q^\theta(y_{0:K})$ and $p^{\textup{ref}}(y_{0:K})$ satisfies for $y_{0:K} \sim q^\theta(y_{0:K})$
\begin{equation}\label{eq:logDTRN}
 \log  \left(\frac{q^\theta(y_{0:K})}{p^{\textup{ref}}(y_{0:K})}\right)
    =2 \sigma^2 \sum_{k=1}^K \frac{\lambda_k^2}{\alpha_k}||f_\theta(k,y_{K-k})||^2 +2 \sigma \sum_{k=1}^K \frac{\lambda_k}{\sqrt{\alpha_k}}f_\theta(k,y_{K-k})^\top \varepsilon_k 
\end{equation}
where $\lambda_k:=1-\sqrt{1-\alpha_k}$ and $\varepsilon_k$ defined through (\ref{eq:integratorapproxrevverse}) is such that $\varepsilon_k \overset{\textup{i.i.d.}}{\sim} \mathcal{N}(0,I)$. In particular, one obtains from $\KL(q^{\theta}||p)=\KL(q^{\theta}||p^{\textup{ref}})+\mathbb{E}_{q^{\theta}_0}\left[\log \left( \frac{\pi(y_K)}{\mathcal{N}(y_K;0,\sigma^2 I)}\right)\right]$ that
\begin{align}
    \KL(q^{\theta}||p)&
    =\E_{q^\theta}\left[2 \sigma^2 \sum_{k=1}^K \frac{\lambda_k^2}{\alpha_k}||f_\theta(k,y_{K-k})||^2  + \scaleobj{.8}{\ln \left(\frac{\gN(y_K; 0, \sigma^2 I)}{\pi(y_K)}\right)}\right].\label{eq:KLneat}
\end{align}
\end{proposition}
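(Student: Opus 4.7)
The plan is to exploit the fact that the two discrete path measures $q^\theta(y_{0:K})$ and $p^{\textup{ref}}(y_{0:K})$ share the same initial law $\mathcal{N}(0,\sigma^2 I)$ and the same per-step Gaussian noise covariance $\sigma^2\alpha_{K-k} I$, differing only through an additive drift $2\sigma^2 \lambda_{K-k} f_\theta(K-k, y_k)$ in the mean of each transition kernel. Since both measures factorize along the same chain, the log density ratio telescopes into a sum of Gaussian log-ratios, and the initial law contribution cancels.

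Concretely, writing the $k$-th transition as $\mathcal{N}(y_{k+1};\mu_k + \Delta_k, \sigma^2\alpha_{K-k} I)$ with $\mu_k = \sqrt{1-\alpha_{K-k}}\,y_k$ and $\Delta_k = 2\sigma^2 \lambda_{K-k} f_\theta(K-k, y_k)$ (with $\Delta_k=0$ under $p^{\textup{ref}}$), completing the square gives
\[
\log \frac{q^\theta(y_{k+1}\mid y_k)}{p^{\textup{ref}}(y_{k+1}\mid y_k)}
= \frac{\Delta_k^\top (y_{k+1}-\mu_k)}{\sigma^2 \alpha_{K-k}} - \frac{\|\Delta_k\|^2}{2\sigma^2 \alpha_{K-k}}.
\]
I would then substitute the update $y_{k+1} - \mu_k = \Delta_k + \sigma\sqrt{\alpha_{K-k}}\,\varepsilon_k$ valid under $q^\theta$, which turns the quadratic term into $\tfrac{\|\Delta_k\|^2}{2\sigma^2\alpha_{K-k}}$ plus a linear-in-$\varepsilon_k$ martingale increment. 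Plugging $\|\Delta_k\|^2 = 4\sigma^4 \lambda_{K-k}^2 \|f_\theta(K-k,y_k)\|^2$ yields the per-step contribution $\tfrac{2\sigma^2 \lambda_{K-k}^2}{\alpha_{K-k}}\|f_\theta(K-k,y_k)\|^2 + \tfrac{2\sigma\lambda_{K-k}}{\sqrt{\alpha_{K-k}}} f_\theta(K-k,y_k)^\top \varepsilon_k$. Summing over $k=0,\dots,K-1$ and reindexing by $j = K-k$ gives (\ref{eq:logDTRN}).

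For the KL formula, I would take expectation under $q^\theta$. The key observation is that $y_k$ is measurable with respect to $\varepsilon_{0:k-1}$ and hence independent of $\varepsilon_k$, so $\mathbb{E}_{q^\theta}[f_\theta(K-k,y_k)^\top \varepsilon_k] = 0$, killing the martingale sum and leaving $\KL(q^\theta\|p^{\textup{ref}}) = \mathbb{E}_{q^\theta}[2\sigma^2 \sum_k \lambda_k^2/\alpha_k \|f_\theta(k,y_{K-k})\|^2]$. Finally, $p$ (viewed as the discrete-time reversal of the target-initialized forward chain) and $p^{\textup{ref}}$ share the same conditional transitions given the forward initial state, so their Radon--Nikodym derivative reduces to the initial-state ratio $\pi(y_K)/\mathcal{N}(y_K;0,\sigma^2 I)$ (the roles of $y_0$ in forward time and $y_K$ in reversed time being identified), which yields the stated decomposition of $\KL(q^\theta\|p)$ and hence (\ref{eq:KLneat}). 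The only real obstacle is keeping the time-reversal bookkeeping consistent between the integrator index $k$ running $0,\dots,K-1$ (where $\varepsilon_k$ is fresh at step $k$) and the proposition's index $k$ running $1,\dots,K$ (which labels the forward-time noise level); once this reindexing is done carefully the martingale argument is immediate.
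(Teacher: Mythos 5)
Your proposal is correct and follows essentially the same route as the paper's proof: telescope the density ratio over the shared-covariance Gaussian transitions (the initial laws cancel), re-express each increment in terms of the fresh noise $\varepsilon_k$ of the $q^\theta$-update to get the quadratic-plus-martingale form, and kill the martingale term in expectation via the independence of $\varepsilon_k$ from the current state; the paper writes the per-step term as $\tfrac{1}{2}\bigl[\lVert\varepsilon_k+2\sigma\lambda_k\alpha_k^{-1/2}f_\theta\rVert^2-\lVert\varepsilon_k\rVert^2\bigr]$ whereas you complete the square in the Gaussian log-ratio first, but these are the same algebra. Your justification of the decomposition of $\KL(q^{\theta}\|p)$ via the shared conditionals of $p$ and $p^{\textup{ref}}$ is likewise the argument the paper uses (in its proof of Proposition \ref{prop:RNcontinuous}) to obtain that identity.
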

We compute an unbiased gradient of this objective using the reparameterization trick and the JAX software package \citep{jax2018github}. The training procedure is summarized in Algorithm \ref{alg:dds_alg} in Appendix \ref{apdx:discrete}. Unfortunately, contrary to DDPM, $q^{\theta}_{k|K}$ is not available in closed form for $k<K-1$ so we can neither mini-batches over the time index $k$ without having to simulate the process until the minimum sampled time nor reparameterize $x_k$ as in \cite{ho2020denoising}. 
Once we obtain the parameter $\theta$ minimizing (\ref{eq:KLneat}), DDS samples from $q^{\theta}$ using (\ref{eq:integratorapproxrevverse}). The final sample $y_K$ has a distribution $q^{\theta}_0$ approximating $\pi$ by design. By using importance sampling, we obtain an unbiased estimate of the normalizing constant $\hat{Z}=(\gamma(y_{K})p^{\textup{ref}}(y_{0:K}))/(\mathcal{N}(y_K;0,\sigma^2) q^\theta(y_{0:K}))$ for $y_{0:K}\sim q^\theta(y_{0:K})$. Finally Appendices \ref{subsec:integratorunderdamped} and \ref{apdx:KLunderdampedDT} extend this approach to provide a similar approach to discretize the underdamped dynamics proposed in Section \ref{subsec:underdampedLangevin}. In this context, the proposed integrators rely on a leapfrog scheme \citep{leimkuhler2016molecular}.

\subsection{Theoretical Guarantees}
Motivated by DDPM, bounds on the total variation between the target distribution and the distribution of the samples generated by a time-discretization of an approximate time reversal of a forward noising diffusion have been first obtained in \citep{debortoli2021diffusion} then refined in \citep{debortoli2022convergence,ChenChewiSinhosamplingaseasyaslearningscore2022,lee2022convergencenocondition} and extended to the Wasserstein metric. These results are directly applicable to DDS because their proofs rely on assumptions on the score approximation error but not on the way these approximations are learned, i.e. via score matching for DDPM and reverse KL for DDS. For example, the main result in \cite{ChenChewiSinhosamplingaseasyaslearningscore2022} shows that if the true scores are $L$-Lipschitz, the $L_2(p_t)$ error on the scores is bounded and other mild integrability assumptions then DDS outputs samples $\epsilon$-close in total variation to $\pi$ in $O(L^2d/\epsilon^2)$ time steps. As pointed out by the authors, this matches state-of-the-art complexity bounds for Langevin Monte Carlo algorithm for sampling targets satisfying a log-Sobolev inequality \emph{without} having to make any log-concavity assumption on $\pi$.
However, the assumption on the approximation error for score estimates is less realistic for DDS than DDPM as we do not observe realizations of the forward diffusion.

\begin{figure}
    \centering
    \includegraphics[width=0.8\linewidth, height=3.8cm]{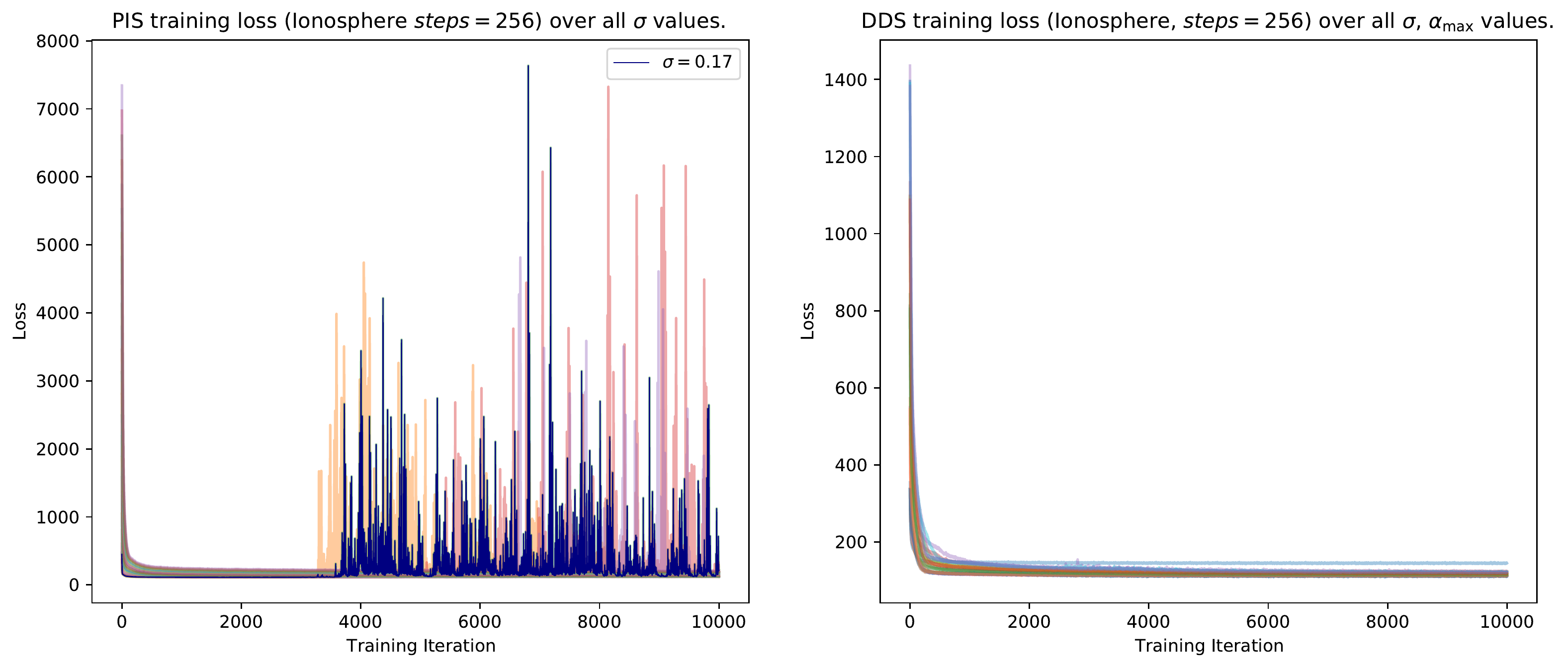}
    \caption{Training loss per hyperparameter: PIS (left) vs DDS (right).}
    \label{fig:stability2}
\end{figure}

\subsection{Intepretation, Related Work and Extensions}

\textbf{DDS as KL and path integral control.} The reverse KL we minimize can be expressed as
\begin{equation}
  \KL(q^{\theta}||p)=\mathbb{E}_{q^{\theta}}\left[\scaleobj{.9}{\biggl\{ \ln \left(\frac{\gN(x_0; 0, \sigma^2 I)}{\pi(x_0)}\right)+ \sum_{k=1}^K 
    \log \left( \frac{q^\theta_{k-1|k}(x_{k-1}|x_k)}{p^{\textup{ref}}_{k-1|k}(x_{k-1}|x_k)} \right)  \biggl\}}\right].
\end{equation}
This objective is a specific example of KL control problem \citep{kappen2012optimal}. In continuous-time, (\ref{eq:KLpathintegral}) corresponds to a path integral control problem; see e.g. \citep{kappen2016adaptive}. 

\begin{figure*}[t!]
    \centering
    \begin{minipage}{0.33\linewidth }
    \centering
    \includegraphics[width=\linewidth ]{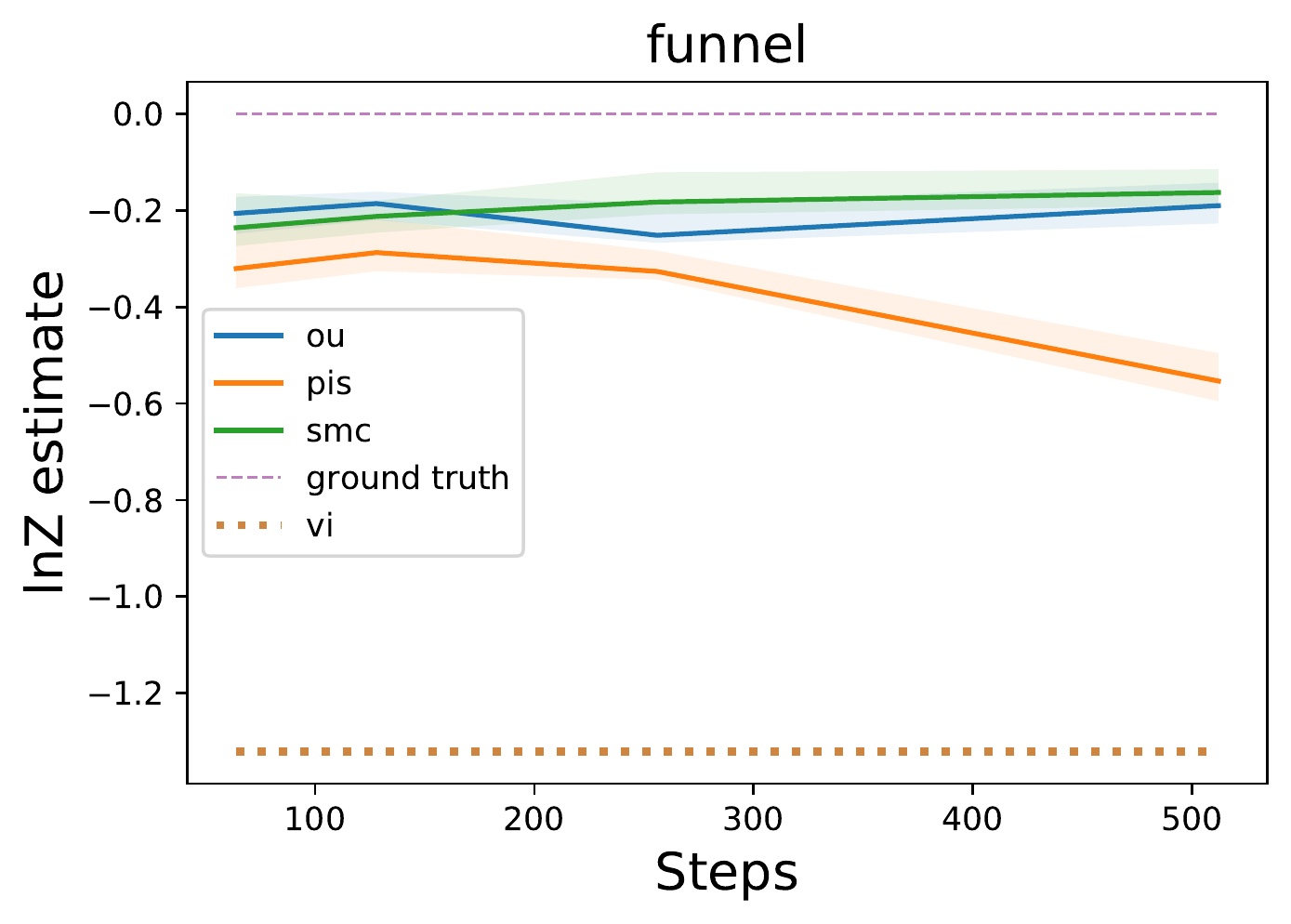}
    \end{minipage}
    \hspace*{\fill}\begin{minipage}{0.32\linewidth }
    \centering
    \includegraphics[width=\linewidth ]{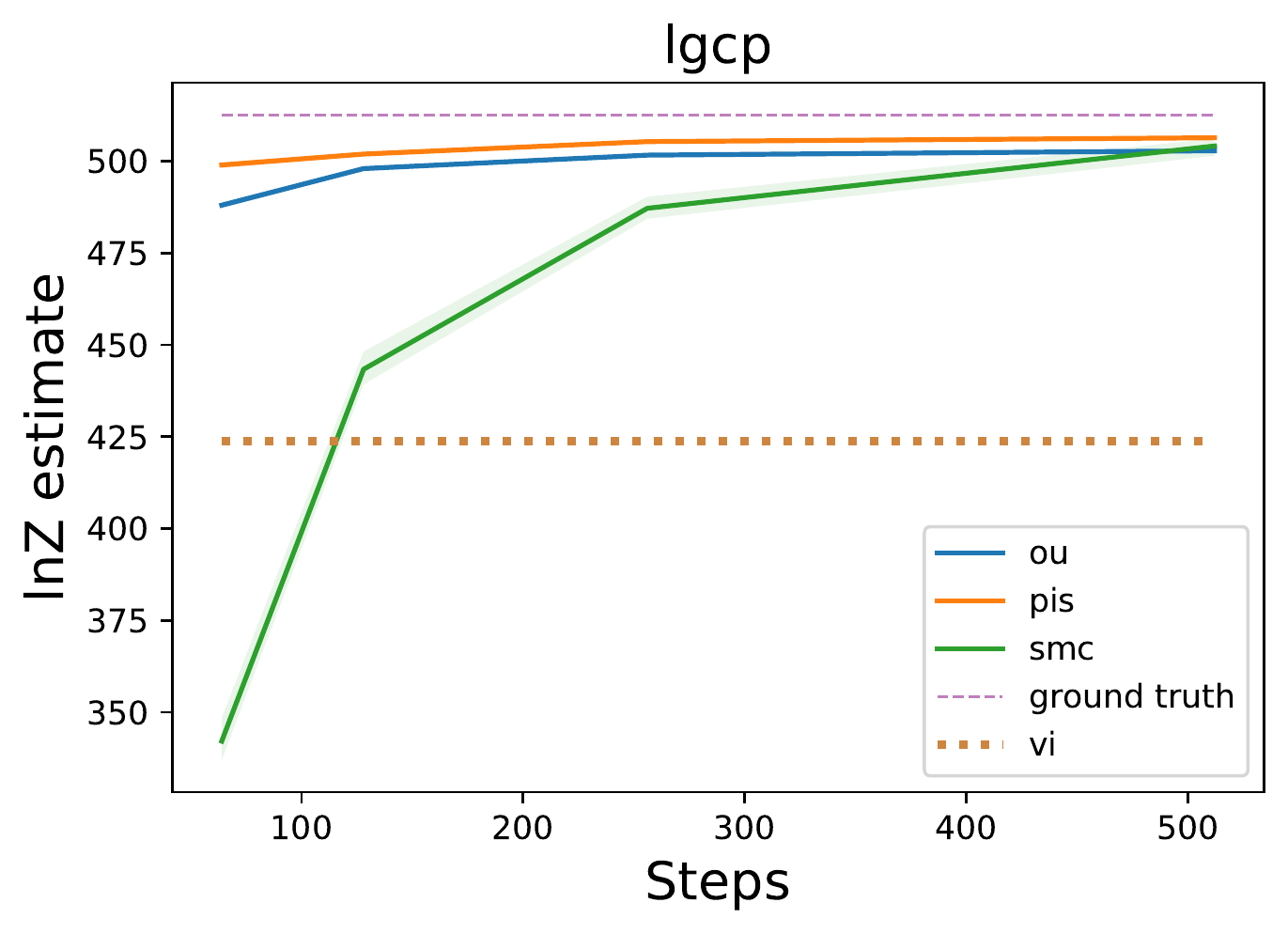}
    \end{minipage}
     \hspace*{\fill}
    \hspace*{\fill}\begin{minipage}{0.32\linewidth }
    \centering
    \includegraphics[width=\linewidth ]{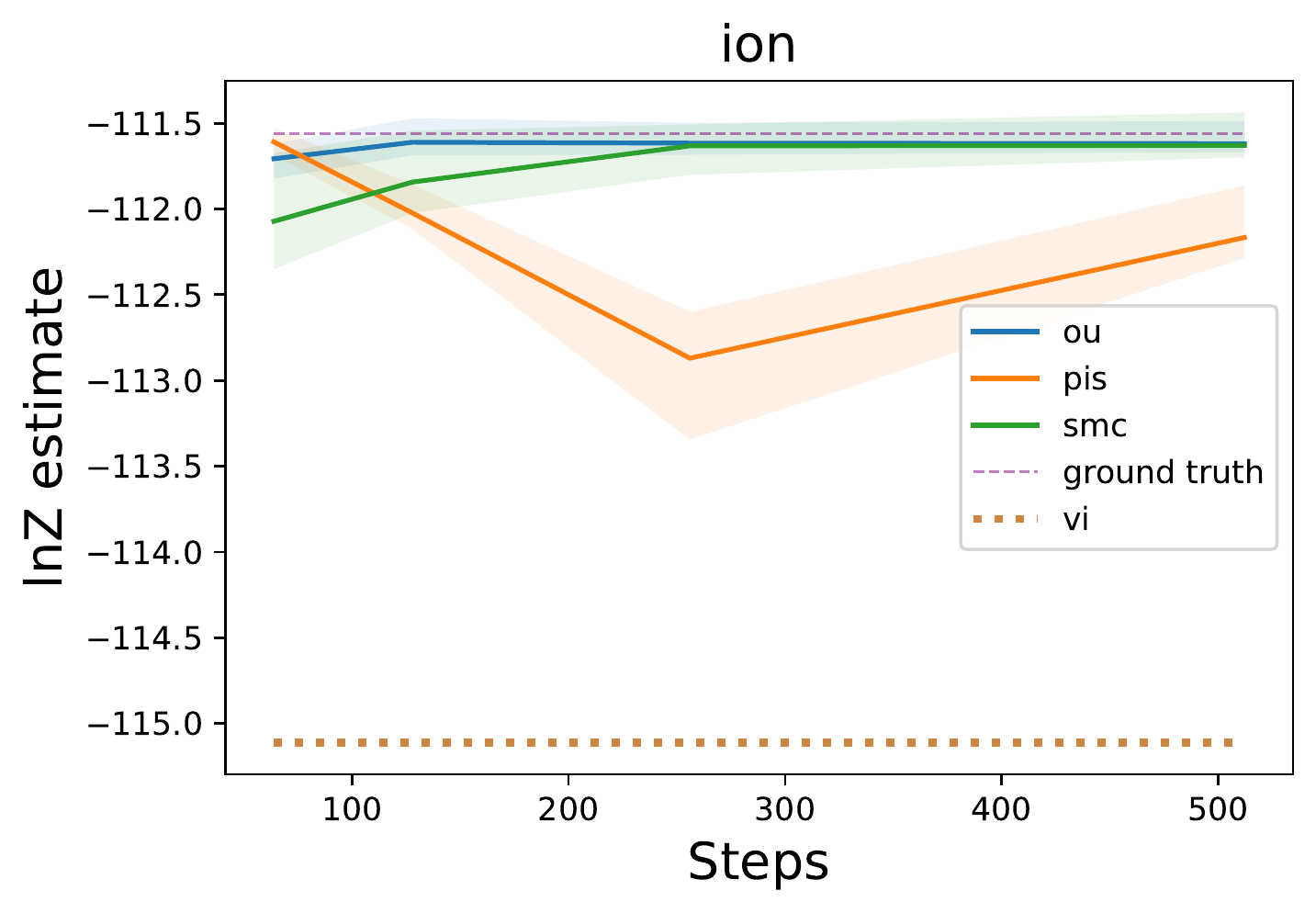}
    \end{minipage}
    
    \caption{$\log Z$ estimate (median plus upper/lower quartiles) as a function of number of steps $K$ - a) Funnel , b) LGCP, c) Logistic Ionosphere dataset. Yellow dotted line is MF-VI and dashed magenta is the gold standard. \label{fig:base_targets}} 
\end{figure*}

\cite{heng2020controlled} use KL control ideas so as to sample from a target $\pi$ and estimate $Z$. However, their algorithm relies on $p^{\textup{ref}}(x_{0:K})$ being defined by a discretized non-homogeneous Langevin dynamics such that $p_K(x_K)$ is typically not approximating a known distribution. Additionally it approximates the value functions $(\phi_k)_{k=0}^K$ using regression using simple linear/quadratic functions. Finally it relies on a good initial estimate of $p(x_{0:K})$ obtained through SMC. This limits the applicability of this methodology to a restricted class of models.

\textbf{Connections to Schr\"odinger Bridges.}
The Schr\"odinger Bridge (SB) problem \citep{leonard2013survey,debortoli2021diffusion} takes the following form in discrete time. Given a reference density $p^{\textup{ref}}(x_{0:K})$, we want to find the density $p^{\textup{sb}}(x_{0:K})$ s.t. $p^{\textup{sb}}=\argmin_q \{\KL(q||p^{\textup{ref}}): q_0=\mu_0,~~q_K=\mu_K\}$ where $\mu_0,\mu_K$ are prescribed distributions. This problem can be solved using iterative proportional fitting (IPF) which is defined by the following recursion with initialization $p^1=p^{\textup{ref}}$
\begin{equation}
p^{2n}=\argmin_q \{\KL(q||p^{2n-1}):q_0=\mu_0 \},~~
    p^{2n+1}=\argmin_q \{\KL(q||p^{2n}):q_K=\mu_K \}.
\end{equation}
Consider the SB problem where $\mu_0(x_0)=\pi(x_0)$, $\mu_K(x_K)=\mathcal{N}(x_K;0,\sigma^2 I)$ and the time-reversal of $p^{\textup{ref}}(x_{0:K})$ is defined through (\ref{eq:OUrefintegrated}). In this case, $p^2=p$ corresponds to the discrete-time version of the noising process and $p^3$ to the time-reversal of $p$ but initialized at $\mu_K$ instead of $p_K$. This is the process DDS is approximating. As $p_K \approx \mu_K$ for $K$ large enough, we have approximately $p^{\textup{sb}} \approx p_3 \approx p_2$. We can thus think of DDS as approximating the solution to this SB problem.

Consider now another SB problem where $\mu_0(x_0)=\pi(x_0)$, $\mu_K(x_K)=\delta_0(x_K)$ and $p^{\textup{ref}}(x_{0:K})=\delta_0(x_K)\prod_{k=0}^{K-1} \mathcal{N}(x_k;x_{k+1},\delta \sigma^2 I)$, i.e. $p^{\textup{ref}}$ is a pinned Brownian motion running backwards in time. This SB problem was discussed in discrete-time in  \citep{beghi1996relative} and in continuous-time in \citep{follmer1984entropy,daipra1991stochastic,tzen2019theoretical}. In this case, $p^2(x_{0:K})=\pi(x_0)p^{\textup{ref}}(x_{1:K}|x_0)$ is a modified ``noising" process that transports $\pi$ to the degenerate measure $\delta_0$ and it is easily shown that $p^{\textup{sb}}=p^2$. 
Sampling from the time-reversal of this measure would generate samples from $\pi$ starting from $\delta_0$. Algorithmically, \cite{zhangmarzouk2021sampling} proposed approximating this time-reversal by some projection on some eigenfunctions. In parallel, \cite{barrlamacraft2020quantum}, \cite{vargas2021bayesian} and \cite{zhangyongxinchen2021path} approximated this SB by using a neural network parameterization of the gradient of the logarithm of the corresponding value function trained by minimizing a reverse KL. We will adopt the Path Integral Sampler (PIS) terminology proposed by \cite{zhangyongxinchen2021path} for this approach. DDS can thus be seen as an alternative to PIS which relies on a reference dynamics corresponding to an overdamped or underdamped OU process instead of a pinned Brownian motion. Theoretically the drift of the resulting time-reversal for DDS is not as a steep as for PIS (see Appendix \ref{app:comparisonDDSPIS}) and empirically this significantly improves numerical stability of the training procedure; see Figure \ref{fig:stability2}. The use of a pinned Brownian motion is also not amenable to the construction of normalizing flows.

\begin{figure*}[t!]
    \centering
    \begin{minipage}{0.33\linewidth }
    \centering
    \includegraphics[width=\linewidth ]{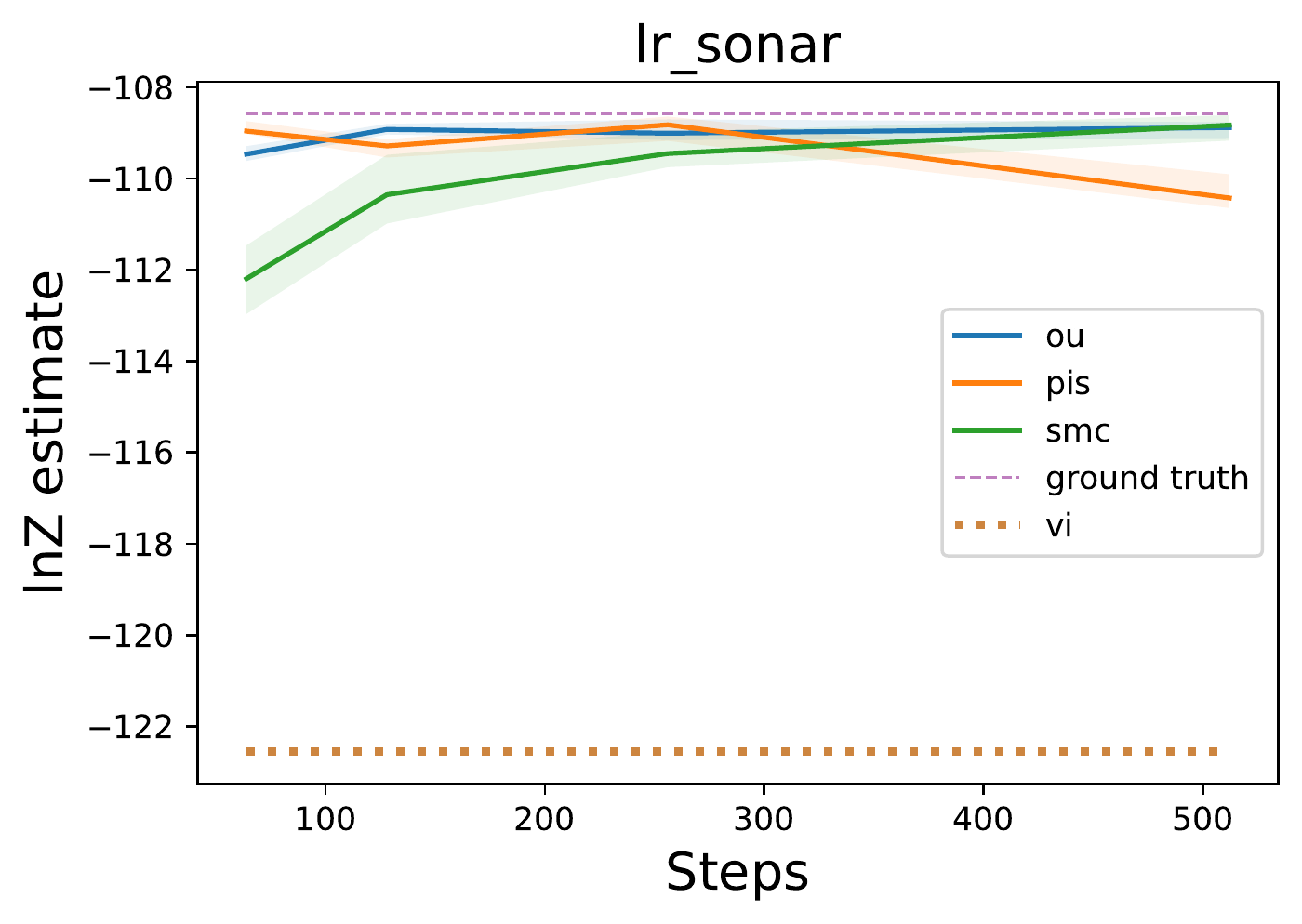}
    \end{minipage}
    \hspace*{\fill}\begin{minipage}{0.32\linewidth }
    \centering
    \includegraphics[width=\linewidth ]{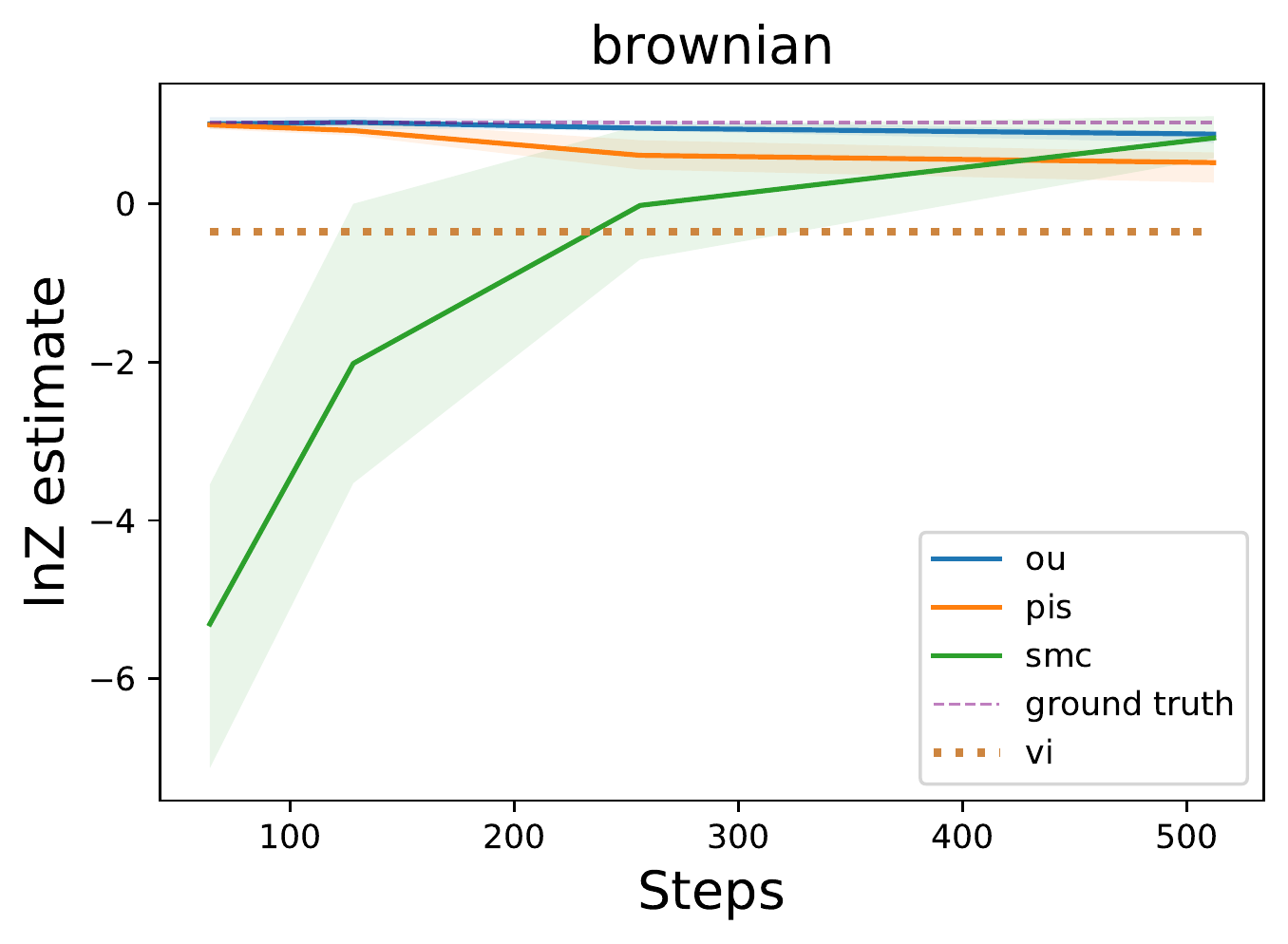}
    \end{minipage}
     \hspace*{\fill}
    \hspace*{\fill}\begin{minipage}{0.32\linewidth }
    \centering
    \includegraphics[width=\linewidth ]{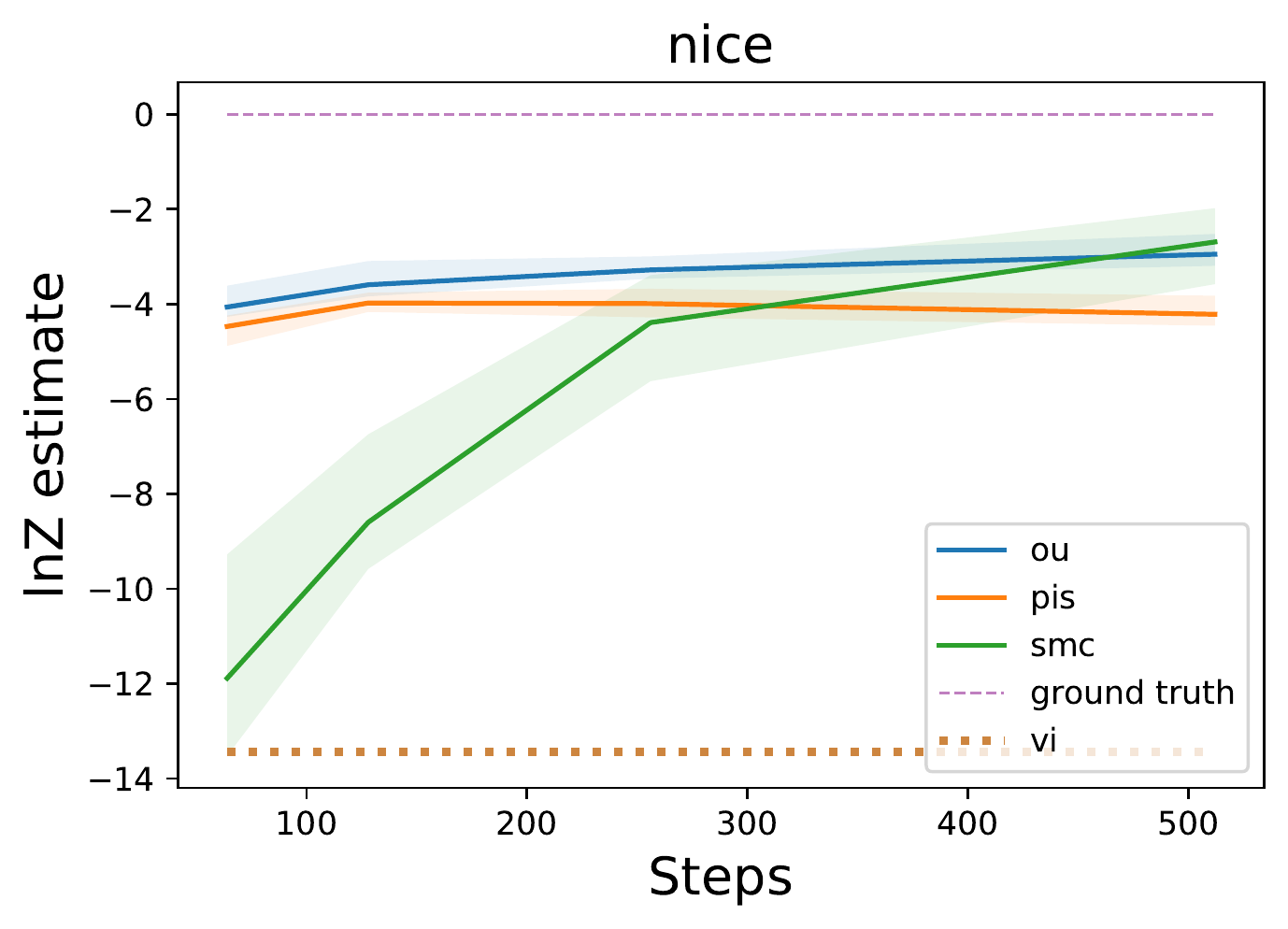}
    \end{minipage}
    
    \caption{$\log Z$ estimate as a function of number of steps $K$ - a) Logistic Sonar dataset, b) Brownian motion, c) NICE. Yellow dotted line is MF-VI and dashed magenta is the gold standard. \label{fig:logreg_targets}} 
\end{figure*}

\textbf{Forward KL minimization.}
In \cite{jing2022torsional}, diffusion models ideas are also use to sample unnormalized probability densities. The criterion being minimized therein is the forward KL as in DDPM, that is $\KL(p||q^{\theta})$. As samples from $p$ are not available, an importance sampling approximation of $p$ based on samples from $q_\theta$ is used to obtain an estimate of this KL and its gradient w.r.t. $\theta$. The method is shown to perform well in low dimensional examples but is expected to degrade significantly in high dimension as the importance sampling approximation will be typically poor in the first training iterations.

\section{Experiments}\label{sec:experiments}
We present here experiments for Algorithm \ref{alg:dds_alg}. In our implementation, $f_\theta$ follows the PIS-GRAD network proposed in \citep{zhangyongxinchen2021path}: $ f_\theta(k, x) \!=\! \mathrm{NN}_1(k, x;\theta) + \mathrm{NN}_2(k;\theta) \odot \nabla \ln  \pi(x)$. Across all experiments we use a two layer architecture with 64 hidden units each (for both networks), as in \cite{zhangyongxinchen2021path}, with the exception of the NICE \citep{dinh2014nice} target where we use 3 layers with 512, 256, and 64 units respectively. The final layers are initialised to $0$ in order to make the path regularisation term null. We use $\alpha_k^{1/2} \!\propto\! \alpha_{\max}^{1/2}\cos^2\left(\frac{\pi}{2}\frac{1- k  /K  + s}{1 +s}\right)$ with $s\!=\!0.008$ as in \citep{nichol2021improved}.
We found that detaching the target score stabilised optimization in both approaches without affecting the final result. We adopt this across experiments, an ablation of this feature can be seen in Appendix \ref{sec:detach_exp}. 

Across all tasks we compare DDS to SMC \citep{del2006sequential,zhou2016toward}, PIS \citep{barrlamacraft2020quantum,vargas2021bayesian,zhangyongxinchen2021path}, and Mean Field-VI (MF-VI) with a Gaussian variational distribution.  Finally we explore a task introduced in \citep{doucet2022annealed} that uses a pre-trained normalising flow as a target. Within this setting we propose a benchmarking criterion that allows us to assess mode collapse in high dimensions and explore the benefits of incorporating inductive biases into $f_\theta$. We carefully tuned the hyper-parameters of all algorithms (e.g. step size, diffusion coefficient, and such), details can be found in Appendix \ref{apdx:tune}. Finally training time can be found in Appendix \ref{sec:training_time}.
Additional experiments for the normalizing flows are presented in Appendix \ref{app:probaflowODE} and for the underdamped approach in Appendix \ref{apdx:undexp}. We note that these extensions did \emph{not} bring any benefit compared to Algorithm \ref{alg:dds_alg}.

% We also compare DDS to AIS \citep{neal2001annealed} and optimized variants of AIS using score matching (MCD) \citep{doucet2022annealed,geffner2022langevin} for two standard Bayesian models.

\subsection{Benchmarking Targets}
We first discuss two standard target distributions which are often used to benchmark methods; see e.g. \citep{Neal:2003,arbel2021annealed,heng2020controlled,zhangyongxinchen2021path}. Results are presented in Figure \ref{fig:base_targets}.

\textbf{Funnel Distribution:} This 10-dimensional challenging distribution is given by $\gamma(x_{1:10})= \mathcal{N}(x_1;0, \sigma_{f}^2)\mathcal{N}(x_{2:10};0, \exp(x_1) I)$,
where $\sigma_{f}^2=9$ \citep{Neal:2003}.
\footnote{\cite{zhangyongxinchen2021path} inadvertently considered the more favourable $\sigma_f=1$ scenario for PIS but used $\sigma_f=3$ for other methods. This explains the significant differences between their results and ours.}

\textbf{Log Gaussian Cox process:} This model arises in spatial statistics \citep{Moller:1998}. We use a $d\!=\!M \times M\!=\!1600$ grid, resulting in the unnormalized target density $\gamma(x) = \mathcal{N}(x ; \mu, K) \textstyle\prod_{i \in [1:M]^{2}} \exp(x_{i} y_{i} - a \exp(x_i) )$.
\subsection{Bayesian Models}
We explore two standard Bayesian models and compare them with standard AIS and MCD benchmarks (See  Appendix \ref{app:mcd}) in addition to the SMC and VI benchmarks presented so far. Results for Ionosphere can be found in the 3rd pane of Figure \ref{fig:base_targets} whilst SONAR and Brownian are in Figure \ref{fig:logreg_targets}.

\textbf{Logistic Regression:} We set $ x \sim \mathcal{N}(0, \sigma_w^2 I),y_i \sim \mathrm{Bernoulli}(\mathrm{sigmoid}( x^\top u_i))$. This Bayesian logistic model is evaluated on two datasets, Ionosphere ($d=32$) and Sonar ($d=61$).

\textbf{Brownian Motion:} We consider a discretised Brownian motion with a Gaussian observation model and a latent volatility as a time series model, $d=32$. This model, proposed in the  software package developed by \cite{inferencegym2020}, is specified in more detail in Appendix \ref{apdx:brownian}.

\subsection{Mode Collapse In High Dimensions}
\paragraph{Normalizing Flow Evaluation:} Following \cite{doucet2022annealed} we train NICE \citep{dinh2014nice} on a down-sampled $d=14 \times 14$ variant of MNIST \citep{mnist} and use the trained model as our target. As we can generate samples from our target, we evaluate the methods samples by measuring the Sinkhorn distance between true and estimated samples. This evaluation criteria allows to asses mode collapse for samplers in high dimensional settings. Results can be seen in Table \ref{tab:was} and the third pane of Figure \ref{fig:logreg_targets}.

\begin{table}[H]
\centering
\begin{tabular}{@{}ccccc@{}}
\toprule
Method & DDS  & PIS & SMC & MCD \\ \midrule
$\ln Z$ Estimate     & $-3.204 \pm 0.645$  & $-3.933 \pm 0.754$  & $-4.255 \pm 2.043$ &       $-6.25 $                                  \\
$\gW^2_{\gamma=0.01}(\pi_{\mathrm{true}}, \hat{\pi})$    & $658.079$  & $658.778$ & $750.245$ &  NA \\
 \bottomrule
\end{tabular}
\caption{Results on NICE target, we performed 30 runs with different seeds for each approach. For MCD we used the results from \citep{doucet2022annealed}. \label{tab:was}}
\end{table}
\section{Discussion}
We have explored the use of DDPM ideas to sample unnormalized probability distributions and estimate their normalizing constants. 

The DDS in Algorithm \ref{alg:dds_alg} is empirically competitive with state-of-the-art SMC and numerically more stable than PIS. This comes at the cost of a non-negligible training time compared to SMC. When accounting for it, SMC often provide better performance on simple targets. However, in the challenging multimodal NICE example, even a carefully tuned SMC sampler using Hamiltonian Monte Carlo transitions was not competitive to DDS. This is despite the fact that DDS (and PIS) are prone to mode dropping as any method relying on the reverse KL.

We have also investigated normalizing flows based on the probability flow ODE as well as DDS based on an underdamped dynamics. Our experimental results were disappointing in both cases in high dimensional scenarios. We conjecture that more sophisticated numerical integrators need to be developed for the normalizing flows to be competitive and that the neural network parameterization used in the underdamped scenario should be improved and better leverage the structure of the logarithmic derivative of the  value functions.   

Overall DDS are a class of algorithms worth investigating and further developing. There has much work devoted to improving successfully DDPM over the past two years including, among many others, modified forward noising mechanisms \citep{hoogeboom2022blurring}, denoising diffusion implicit models \citep{song2020denoising} and sophisticated numerical integrators \citep{karras2022elucidating}. It is highly likely that some of these techniques will lead to more powerful DDS. Advances in KL and path integral control might also be adapted to DDS to provide better training procedures; see e.g. \cite{thalmeier2020adaptive}.  

\bibliography{iclr2023_conference,references}
\bibliographystyle{iclr2023_conference}

\appendix

\section{Appendix}
\subsection{Proof of Proposition \ref{prop:RNcontinuous}}
The Radon--Nikodym derivative expression (\ref{eq:RadonNykodyn}) directly follows from an application of Girsanov theorem (see e.g. \cite{klebaner2012introduction}).
The path measures $\mathcal{P}$ and $\mathcal{P}^{\textup{ref}}$ are induced by two diffusions following the same dynamics but with different initial conditions so that
\begin{align*}
    \mathcal{P}(\omega)&=\mathcal{P}(\omega|y_T)\pi(y_T)\\
                     &=\mathcal{P}^{\textup{ref}}(\omega|y_T)\pi(y_T)\\
                     &=\mathcal{P}^{\textup{ref}}(\omega|y_T)\mathcal{N}(y_T;0,\sigma^2I)\frac{\pi(y_T)}{\mathcal{N}(y_T;0,\sigma^2I)}\\ &=\mathcal{P}^{\textup{ref}}(\omega)\frac{\pi(y_T)}{\mathcal{N}(y_T;0,\sigma^2I)}.
\end{align*}
So it follows directly that  $\KL(\mathcal{Q}^\theta||\mathcal{P})=\KL(\mathcal{Q}^\theta||\mathcal{P}^{\textup{ref}})+\mathbb{E}_{y_T \sim q^{\theta}_0}\Bigr[\ln \left(\frac{p^{\textup{ref}}_0(y_T)}{p_0(y_T)}\right)\Bigr]$. Note we apply Girsanov theorem to the time reversals of the path measures $\mathcal{Q}^\theta$ and $\mathcal{P}$ using that the Radon--Nikodym between two path measures and their respective reversals are the same. As $W_t$ is a Brownian motion under $\mathcal{Q}^\theta$, the final expression (\ref{eq:KLpathintegral}) for $\KL(\mathcal{Q}^\theta||\mathcal{P})$ follows. 

\subsection{Comparing DDS and PIS Drifts for Gaussian targets}\label{app:comparisonDDSPIS}
The optimal drift for both PIS and DDS can be expressed in terms of logarithmic derivative of the value function plus additional terms:
\begin{align}\label{eq:drifteq}
     b_{\mathrm{DDS}}(x,t) = -\beta_{T-t}(x-2 \sigma^2 \nabla_x \ln \phi_{T-t}(x) ),  \quad  b_{\mathrm{PIS}}(x,t) = \sigma^2 \nabla_x \ln \phi_{T-t}(x)
\end{align}
where $\phi_t$ is the corresponding value function for DDS or PIS respectively.

Recall that $\log \phi_t(x)=\log p_t(x)-\log p^{\textup{ref}}_{t}(x)$. For a target $\pi(x)=\mathcal{N}(x;\mu,\Sigma)$, $p^{\textup{ref}}_{t}(x)=\mathcal{N}(x_t;\mu^{\textup{ref}}_t,b^{\textup{ref}}_t I)$  and $p_{t|0}(x_t|x_0)=\mathcal{N}(x_t;a_t x_0,b_t I)$, we obtain  
\begin{equation*}
    p_t(x)=\int \pi(x_0) p_{t|0}(x|x_0) \mathrm{d}x_0 =\mathcal{N}(x;a_t \mu, a^2_t \Sigma+b_t I)
\end{equation*}
so 
\begin{equation}
    \nabla \log \phi_t(x)=-(a^2_t \Sigma+b_t I)^{-1}(x-a_t \mu)+(b^{\textup{ref}}_{t})^{-1}(x-\mu^{\textup{ref}}_t).
\end{equation}

\begin{corollary}\label{drift:dds}
For DDS, we have $\mu^{\textup{ref}}_t=0$, $b^{\textup{ref}}_t=\sigma^2 I$, $a_t=\sqrt{1-\lambda_t}$, $b_t=\sigma^2 \lambda_t I$ with  $\lambda_t=1-\exp(-2\int^t_0\beta_s \mathrm{d}s)$.
So, for example, for $\sigma=\beta_t=1$ and $\Sigma=I$, we have 
\begin{equation}
      \nabla \log \phi_t(x)=\mu \exp(-t),\quad  b_{\mathrm{DDS}}(x,t) = -x+2 \mu \exp(-(T-t)).
\end{equation}
\end{corollary}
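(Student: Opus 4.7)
The plan is a direct calculation that plugs the DDS-specific choices into the general formula for $\nabla\log\phi_t$ derived in the paragraph preceding the corollary, and then substitutes into the drift expression (\ref{eq:drifteq}).

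First I would collect the ingredients. From the transition density $p_{t\mid 0}(x_t\mid x_0)=\mathcal{N}(x_t;\sqrt{1-\lambda_t}\,x_0,\sigma^2\lambda_t I)$ one reads off $a_t=\sqrt{1-\lambda_t}$ and $b_t=\sigma^2\lambda_t$, and by construction the reference marginal is $p^{\textup{ref}}_t=\mathcal{N}(\cdot;0,\sigma^2 I)$, so $\mu^{\textup{ref}}_t=0$ and $b^{\textup{ref}}_t=\sigma^2$. These are the values stated in the corollary and require no additional work beyond pointing to (\ref{eq:forwarddiffusionP}) and Section \ref{sec:refdiffusionvaluefunction}.

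Next I would specialise to $\sigma=\beta_t=1$ and $\Sigma=I$. Then $\lambda_t=1-e^{-2t}$ and $a_t=e^{-t}$, so the covariance of $p_t$ simplifies via the variance-preserving identity
\begin{equation*}
a_t^2\Sigma+b_t I=(1-\lambda_t)I+\lambda_t I=I=b^{\textup{ref}}_t I.
\end{equation*}
This cancellation is really the whole content of the computation: the two quadratic terms in $\nabla\log p_t(x)$ and $\nabla\log p^{\textup{ref}}_t(x)$ have the same coefficient on $x$. Substituting into the formula
\begin{equation*}
\nabla\log\phi_t(x)=-(a_t^2\Sigma+b_t I)^{-1}(x-a_t\mu)+(b^{\textup{ref}}_t)^{-1}(x-\mu^{\textup{ref}}_t)
\end{equation*}
yields $\nabla\log\phi_t(x)=-x+e^{-t}\mu+x=\mu\,e^{-t}$, as claimed.

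Finally I would plug this into (\ref{eq:drifteq}) with $\beta_{T-t}=1$:
\begin{equation*}
b_{\mathrm{DDS}}(x,t)=-\bigl(x-2\mu\,e^{-(T-t)}\bigr)=-x+2\mu\,e^{-(T-t)},
\end{equation*}
which is the stated expression. There is no real obstacle here; the only subtlety is recognising that the variance-preserving parameterisation is precisely what makes the $x$-dependent parts of the two scores cancel, leaving a drift whose deviation from the reference drift $-x$ decays as $e^{-(T-t)}$ and is therefore uniformly bounded in $t$. This bounded-drift property is exactly the point being used later in the comparison with PIS in Appendix \ref{app:comparisonDDSPIS}.
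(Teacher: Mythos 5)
Your proof is correct and is exactly the computation the paper intends: the corollary is a direct substitution of the DDS values $a_t=e^{-t}$, $b_t=1-e^{-2t}$, $b^{\textup{ref}}_t=1$ into the general formula for $\nabla\log\phi_t$ displayed just before it, with the variance-preserving cancellation $a_t^2\Sigma+b_tI=I$ doing the work, followed by substitution into (\ref{eq:drifteq}). No gaps; your closing remark about the bounded drift correctly identifies why this matters for the comparison with PIS.
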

\begin{corollary}\label{drift:pis}
For PIS, we have a reference process which is is a pinned Brownian motion running backwards in time with $\mu^{\textup{ref}}_t=0$, $b^{\textup{ref}}_t=\sigma^2 (T-t)I$, $a_t=\frac{T-t}{T}$, $b_t=\sigma^2 \frac{t(T-t)}{T}$. For $\sigma=1, \Sigma=I$, we obtain 
\begin{equation}
     b_{\mathrm{PIS}}(x,t)= -(a^2_{T-t}+b_{T-t})^{-1}(x-a_{T-t} \mu)+\frac{x}{t}.
\end{equation}
In particular
$b_{\mathrm{PIS}}(x,t) \approx \frac{x}{t}-(x-\mu)$ as $t \rightarrow 0$.
\end{corollary}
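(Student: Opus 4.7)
My plan is to identify the four Gaussian ingredients $a_t$, $b_t$, $\mu^{\textup{ref}}_t$, $b^{\textup{ref}}_t$ specific to the PIS setting, substitute them into the general expression for $\nabla\log\phi_t$ stated just before Corollary~\ref{drift:dds}, and then form $b_{\mathrm{PIS}}(x,t) = \sigma^2 \nabla\log\phi_{T-t}(x)$ using (\ref{eq:drifteq}); the small-$t$ asymptotic will then fall out of a Taylor expansion.

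The first step is to identify the reference marginals. Under PIS, $p^{\textup{ref}}$ is Brownian motion with diffusion coefficient $\sigma$ conditioned on $x_T = 0$ (a ``pinned Brownian motion running backwards in time''), so its time-$t$ marginal is $\mathcal{N}(0,\sigma^2(T-t)I)$, which gives $\mu^{\textup{ref}}_t = 0$ and $b^{\textup{ref}}_t = \sigma^2(T-t)$. Next I would identify the forward transition $p_{t\mid 0}(x_t\mid x_0)$. Because the full target path measure factorises as $p(x_{0:T}) = \pi(x_0)\,p^{\textup{ref}}(x_{1:T}\mid x_0)$ and the conditional factor is the Brownian bridge from $x_0$ at time $0$ to $0$ at time $T$, the classical Brownian bridge marginal formula gives $p_{t\mid 0}(x_t\mid x_0) = \mathcal{N}\bigl(x_0(T-t)/T,\ \sigma^2 t(T-t)/T\cdot I\bigr)$, so $a_t = (T-t)/T$ and $b_t = \sigma^2 t(T-t)/T$, as stated.

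The second step is substitution. Plugging these four quantities into the pre-Corollary expression $\nabla \log \phi_t(x) = -(a_t^2\Sigma + b_t I)^{-1}(x - a_t\mu) + (b^{\textup{ref}}_t)^{-1}(x - \mu^{\textup{ref}}_t)$ with $\sigma = 1$ and $\Sigma = I$, and then replacing $t$ by $T-t$ so as to evaluate at the time index appearing in $b_{\mathrm{PIS}}(x,t) = \sigma^2 \nabla\log\phi_{T-t}(x)$, uses $a_{T-t} = t/T$, $b_{T-t} = t(T-t)/T$ and $b^{\textup{ref}}_{T-t} = t$. The $(b^{\textup{ref}}_{T-t})^{-1} x$ contribution produces the explicit $x/t$ term, while the other term retains the compact form $-(a_{T-t}^2 + b_{T-t})^{-1}(x - a_{T-t}\mu)$, yielding the stated closed form for $b_{\mathrm{PIS}}(x,t)$.

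Finally, for the $t \to 0$ asymptotic, I would expand $a_{T-t}^2 + b_{T-t} = t^2/T^2 + t(T-t)/T = t\bigl[1 + t(1-T)/T^2\bigr]$, whose inverse behaves like $1/t$ plus a regular remainder in $t$. Combining this expansion with $x - a_{T-t}\mu = x - (t/T)\mu$ and then isolating the singular $x/t$ contribution recovers the stated approximation $b_{\mathrm{PIS}}(x,t) \approx x/t - (x-\mu)$. The main obstacle is essentially bookkeeping: both the explicit $x/t$ summand and the first term of the drift individually diverge at rate $1/t$ as $t\downarrow 0$, and the proof has to track their singular parts carefully so that their residual matches the claimed $-(x-\mu)$ form. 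This is in contrast to the DDS setting in Corollary~\ref{drift:dds}, where the analogous singular pieces cancel exactly and leave a bounded drift, which is precisely the qualitative distinction that the comparison in Section~\ref{app:comparisonDDSPIS} is designed to highlight.
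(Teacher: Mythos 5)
Your first two steps (identifying $\mu^{\textup{ref}}_t=0$, $b^{\textup{ref}}_t=\sigma^2(T-t)$ from the backward pinned Brownian motion, $a_t=(T-t)/T$, $b_t=\sigma^2 t(T-t)/T$ from the Brownian bridge marginal, and substituting into the pre-corollary formula for $\nabla\log\phi_t$ evaluated at forward time $T-t$) are correct and are exactly the paper's implicit derivation of the displayed equation. The problem is the final step, which you assert rather than compute, and which does not go through as described. Using your own expansion $a_{T-t}^2+b_{T-t}=t\bigl[1+t(1-T)/T^2\bigr]$, one gets $(a_{T-t}^2+b_{T-t})^{-1}=\tfrac{1}{t}-\tfrac{1-T}{T^2}+O(t)$, hence
\begin{equation*}
-(a_{T-t}^2+b_{T-t})^{-1}\Bigl(x-\tfrac{t}{T}\mu\Bigr)=-\tfrac{x}{t}+\tfrac{\mu}{T}+\tfrac{(1-T)}{T^2}x+O(t),
\end{equation*}
so the $-x/t$ singularity \emph{exactly cancels} the explicit $+x/t$ term and $b_{\mathrm{PIS}}(x,t)\to\tfrac{\mu}{T}+\tfrac{(1-T)}{T^2}x$ as $t\to 0$ --- a bounded affine function, not $\tfrac{x}{t}-(x-\mu)$. (Sanity check: for $T=1$ this limit is the constant $\mu$, the classical F\"ollmer drift of $\mathcal{N}(\mu,I)$, which is bounded for all $t$.) So the residual after "isolating the singular $x/t$ contribution" is not $-(x-\mu)$, and your closing claim that the singular pieces cancel for DDS but not for PIS is backwards: for a Gaussian target they cancel in both cases.

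The approximation $\tfrac{x}{t}-(x-\mu)$ is in fact the behaviour of the displayed formula at the \emph{other} end of time, $t\to T$ (equivalently forward time $T-t\to 0$), where $a_{T-t}\to 1$ and $a_{T-t}^2+b_{T-t}\to 1$, so the first term tends to $-(x-\mu)$ while $x/t\to x/T$ remains finite; this is consistent with the terminal F\"ollmer drift $\nabla\log\pi(x)+x/(\sigma^2T)$. So whichever reading of the corollary one adopts, your proposed route --- a $t\downarrow 0$ Taylor expansion whose singular parts are supposed to leave a residual $-(x-\mu)$ --- is the step that fails: carried out honestly, it yields a bounded limit with no surviving $x/t$ term, contradicting the conclusion you claim to recover.
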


Hence for PIS the drift function explodes close to the origin compared to DDS. This explosion holds for any target $\pi$ as it is only related to the fact that $p^{\textup{ref}}_{t}$ concentrates to $\delta_0$ as $t\rightarrow T$. This makes it harder to approximate for a neural network. Additionally when discretizing the resulting diffusion, this means that smaller discretization steps should be used close to $t=0$.

\subsection{Proof of Proposition \ref{prop:notelbo}}

\begin{proof}
Let us express the discrete time version of the reference process as
\begin{align}
    p^{\mathrm{ref}} (y_{0:K}) =   p_0^{\mathrm{ref}}(y_0) \prod_{k=0}^{K-1}  p^{\mathrm{ref}}_{k +1| k} (y_{k+1}| y_k)
\end{align}
and denote by $p^{\mathrm{ref}}_k$ the corresponding marginal density of $y_k$ which satisfies 
\begin{equation}
    p_{k+1}(y_{k+1}) = \int  p^{\mathrm{ref}}_{k +1| k} (y_{k+1}| y_{k})~~ p_{k}(y_k) \mathrm{d}y_k.
\end{equation}
The backward decomposition of this joint distribution is given by
\begin{align}
    p^{\mathrm{ref}} (y_{0:K}) =p_K^{\mathrm{ref}}(y_K)\prod_{k=0}^{K-1}  p^{\mathrm{ref}}_{k| k+1} (y_{k}| y_{k+1}),
\end{align}
where 
\begin{equation}
     p^{\mathrm{ref}}_{k| k+1} (y_{k}| y_{k+1})=\frac{p^{\mathrm{ref}}_{k +1| k} (y_{k+1}| y_k) p_{k}(y_k) }{ p_{k+1}(y_{k+1})}.
\end{equation}
If our chosen integrator induces a transition kernel $p^{\mathrm{ref}}_{k +1| k} (y_{k+1}| y_k)$ which is such that $p^{\mathrm{ref}}_K(y_K) = p_0^{\mathrm{ref}}(y_K)$, then 
\begin{align}
     p^{\mathrm{ref}} (y_{0:K})  \frac{\pi(y_K)}{p_0^{\mathrm{ref}}(y_K)} =  \pi(y_K) \prod_{k=0}^{K-1}  p^{\mathrm{ref}}_{k| k+1} (y_{k}| y_{k+1})
\end{align}
is a valid (normalised) probability density. Hence it follows that 
\begin{align}
    &\KL(q^\theta(y_{0:K})||{p}^{\textup{ref}}(y_{0:K}))+\mathbb{E}_{y_K \sim q^{\theta}_0}\Bigr[\ln \left(\frac{p^{\textup{ref}}_0(y_K)}{\pi(y_K)}\right)\Bigr]  \\
    =&  \KL\left(q^\theta(y_{0:K})\Big|\Big|{p}^{\textup{ref}}(y_{0:K}) \frac{\pi(y_K)}{p_0^{\mathrm{ref}}(y_K)} \right) \geq 0. \nonumber
\end{align}
If the integrator does not preserve the marginals we have that
\begin{align}
     p^{\mathrm{ref}} (y_{0:K})  \frac{\pi(y_K)}{p_0^{\mathrm{ref}}(y_K)} =  \frac{p_K^{\mathrm{ref}}(y_K)}{p_0^{\mathrm{ref}}(y_K)}\pi(y_K)\prod_{k=0}^{K-1}  p^{\mathrm{ref}}_{k| k+1} (y_{k}| y_{k+1}).
\end{align}
This is not a probability density and thus the objective is no longer guaranteed to be positive and consequently the expectation of our estimator of $\ln Z$ will not be necessarily a lower bound for $\ln Z$. Finally simple calculations show that the Euler discretisation does not preserve the invariant distribution of $\gP^{\mathrm{ref}}$ in DDS.
\end{proof}

\subsection{Proof of Proposition \ref{prop:KL}}
We have 
\begin{equation}\label{eq:logRNsum}
    \log \left( \frac{q^{\theta}(y_{0:K})}{p(y_{0:K})}\right)=\log  \left(\frac{q^{\theta}(y_{0:K})}{p^{\textup{ref}}(y_{0:K})}\right)+\log  \left( \frac{\mathcal{N}(y_0;0,\sigma^2 I)}{\pi(y_0)}\right).
\end{equation}
Now by construction, we have from (\ref{eq:OUrefintegrated}) that  $p^{\textup{ref}}_{k-1|k}(y_{K-k+1}|y_{K-k})=\mathcal{N}(y_{K-k+1};\sqrt{1-\alpha_k}y_{K-k},\sigma^2 \alpha_k I)$ and from (\ref{eq:integratorapproxrevverse}) we obtain $q^{\theta}_{k-1|k}(y_{K-k+1}|y_{K-k})=\mathcal{N}(y_{K-k+1};\sqrt{1-\alpha_k}y_{K-k}+2\sigma^2(1-\sqrt{1-\alpha_k})f_{\theta}(k,y_{K-k}),\sigma^2 \alpha_k I)$.

It follows that 
\begin{align}
    &\log  \left(\frac{q^{\theta}(y_{0:K})}{p^{\textup{ref}}(y_{0:K})}\right)  \label{eq:rationvareps1} \\
    =&\log \left( \frac{q_{K}(y_0)}{p^{\textup{ref}}_{K}(y_0)} \right)+\sum_{k=1}^K 
    \log \left( \frac{q^{\theta}_{k-1|k}(y_{K-k+1}|y_{K-k})}{p^{\textup{ref}}_{k-1|k}(y_{K-k+1}|y_{K-k})} \right) \nonumber \\
    =& \sum_{k=1}^K  \frac{1}{2\alpha_k \sigma^2} \Bigr[ ||y_{K-k+1}-\sqrt{1-\alpha_k}y_{K-k}||^2 \nonumber\\
    &- ||y_{K-k+1}-\sqrt{1-\alpha_k}y_{K-k}-2\sigma^2(1-\sqrt{1-\alpha_k})f(k,y_{K-k})||^2 \Bigr], \nonumber
\end{align}
where we have exploited the fact that $q_{K}(y_0)=p^{\textup{ref}}_{K}(y_0)=\mathcal{N}(y_0;0,\sigma^2 I)$.

Now using $\vepsilon_k:=\frac{1}{\sigma \sqrt{\alpha_k}}(y_{K-k+1}-\sqrt{1-\alpha_k}y_{K-k}-2\sigma^2(1-\sqrt{1-\alpha_k})f(k,y_{K-k}))$, we can rewrite (\ref{eq:rationvareps1}) as 
\begin{align}\label{eq:logDTRN2_2}
   &\log  \left(\frac{q^{\theta}(y_{0:K})}{p^{\textup{ref}}(y_{0:K})}\right) \\
   =&\frac{1}{2} \sum_{k=1}^K \bigr[ ||\vepsilon_k+2 \sigma \frac{(1-\sqrt{1-\alpha_k})}{\sqrt{\alpha_k}}f_\theta(k,y_{K-k})||^2 - ||\vepsilon_k||^2 \bigr] \nonumber \\
    =&2 \sigma^2 \sum_{k=1}^K \frac{(1-\sqrt{1-\alpha_k})^2}{\alpha_k}||f_\theta(k,y_{K-k})||^2 +2 \sigma \sum_{k=1}^K \frac{(1-\sqrt{1-\alpha_k})}{\sqrt{\alpha_k}}f_\theta(k,y_{K-k})^\top \vepsilon_k. \nonumber
\end{align}
Now (\ref{eq:logDTRN}) follows directly from (\ref{eq:logRNsum}) and (\ref{eq:logDTRN2}). Note we have for $\delta \ll 1$ that $2 \sigma^2 \frac{(1-\sqrt{1-\alpha_k})^2}{\alpha_k} \approx \sigma^2 \beta_k  \delta$ and $2 \sigma \frac{(1-\sqrt{1-\alpha_k})}{\sqrt{\alpha_k}} \approx \sqrt{2\beta_{k\delta} \delta}$ as expected from (\ref{eq:RadonNykodyn}).

Finally the final expression (\ref{eq:KLneat}) of the KL follows now from the fact that $\mathbb{E}_{q^\theta}[f_\theta(k,y_{K-k})^\top \vepsilon_k]=\mathbb{E}_{q^\theta_{k}}[\mathbb{E}_{q^\theta_{k-1|k}}[f_\theta(k,y_{K-k})^\top \vepsilon_k |y_{K-k}]]=\mathbb{E}_{q^\theta_{k}}[f_\theta(k,y_{K-k})^\top \mathbb{E}_{q^\theta_{k-1|k}}[ \vepsilon_k |y_{K-k}]]=0$.

\subsection{Alternative Kullback--Leibler decomposition}\label{app:alternativeKL}
A KL decomposition similar in spirit to the one developed for DDPM \citep{ho2020denoising} can be be derived. It leverages the fact that 
\begin{equation}
p_{k-1|k,0}(x_{k-1}|x_{k},x_0)=\mathcal{N}(x_k;\tilde{\mu}_{k}(x_{k},x_0),\sigma^2 \tilde{\beta}_{k} I)
\end{equation}
for $\tilde{\mu}_{k}(x_{k},x_0)=\frac{\sqrt{\bar{\alpha}_{k-1}}\beta_{k}}{1-\bar{\alpha}_{k}}x_0+\frac{\sqrt{\alpha_{k}}(1-\bar{\alpha}_{k-1})}{1-\bar{\alpha}_{k}}x_{k},~\tilde{\beta}_{k}=\frac{1-\bar{\alpha}_{k-1}}{1-\bar{\alpha}_{k}}\beta_k$.
\begin{proposition}
The reverse Kullback--Leibler discrepancy $\KL(q^{\theta}||p)$ satisfies
\begin{align*}
\KL(q^{\theta}||p)=\mathbb{E}_{q^{\theta}}[&\KL(q_K(x_K)||p_{K|0}(x_K|x_0))
+\sum_{k=2}^{K} \KL(q^{\theta}_{k-1|k}(x_{k-1}|x_k)||p_{k-1|0,k}(x_{k-1}|x_0,x_k))\\
&+   \KL(q^{\theta}_{0|1}(x_0|x_1)||\pi(x_0))].
\end{align*}
So for $q^\theta_{k-1|k}(x_{k-1}|x_k)=\mathcal{N}(x_{k-1};\sqrt{1-\beta_k}x_k+ \sigma^2 \beta_k f_\theta(k,x_k),\sigma^2 \beta_k I)$, the terms $ \KL(q^{\theta}_{k-1|k}(x_{k-1}|x_k)||p_{k-1|0,k}(x_{k-1}|x_0,x_k))$ are KL between two Gaussian distributions and can be calculated analytically.
\end{proposition}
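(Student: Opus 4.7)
The plan is to mirror the DDPM ELBO derivation of Ho et al.\ (2020), with the roles of the two path measures swapped since in DDS the variational law $q^\theta$ factorizes backward in time while the known extended target $p$ factorizes forward. First, I would rewrite $p(x_{0:K})$ so that its factors line up with the backward transitions of $q^\theta$. Starting from $p(x_{0:K}) = \pi(x_0)\prod_{k=1}^K p_{k|k-1}(x_k|x_{k-1})$, the Markov property gives $p_{k|k-1}(x_k|x_{k-1}) = p_{k|k-1,0}(x_k|x_{k-1},x_0)$, and Bayes' rule then yields
\[
p_{k|k-1,0}(x_k|x_{k-1},x_0) = \frac{p_{k-1|k,0}(x_{k-1}|x_k,x_0)\,p_{k|0}(x_k|x_0)}{p_{k-1|0}(x_{k-1}|x_0)}.
\]
Plugging this in for $k\ge 2$, the product $\prod_{k=2}^K p_{k|0}/p_{k-1|0}$ telescopes to $p_{K|0}(x_K|x_0)/p_{1|0}(x_1|x_0)$, and the residual $p_{1|0}(x_1|x_0)$ from the $k=1$ factor cancels the denominator, leaving
\[
p(x_{0:K}) = \pi(x_0)\,p_{K|0}(x_K|x_0)\prod_{k=2}^K p_{k-1|k,0}(x_{k-1}|x_k,x_0).
\]

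Second, I would divide by the backward factorization $q^\theta(x_{0:K}) = q^\theta_K(x_K)\,q^\theta_{0|1}(x_0|x_1)\prod_{k=2}^K q^\theta_{k-1|k}(x_{k-1}|x_k)$ and split the logarithm into the three groups that match the announced decomposition,
\[
\log\frac{q^\theta(x_{0:K})}{p(x_{0:K})} = \log\frac{q^\theta_K(x_K)}{p_{K|0}(x_K|x_0)} + \sum_{k=2}^K \log\frac{q^\theta_{k-1|k}(x_{k-1}|x_k)}{p_{k-1|k,0}(x_{k-1}|x_k,x_0)} + \log\frac{q^\theta_{0|1}(x_0|x_1)}{\pi(x_0)}.
\]
Taking $\mathbb{E}_{q^\theta}$ and using Fubini, each piece should collapse to an expected conditional KL of the form stated: the $L_0$-type term reads off immediately from $q^\theta(x_0,x_1) = q^\theta(x_1)\,q^\theta_{0|1}(x_0|x_1)$; for the middle terms, integrating out $x_{k-1}$ against the backward kernel $q^\theta_{k-1|k}(\cdot|x_k)$ exposes the inner KL $\KL(q^\theta_{k-1|k}(\cdot|x_k) \| p_{k-1|k,0}(\cdot|x_k,x_0))$ averaged over $(x_0,x_k)\sim q^\theta$; and for the $L_K$-type term one integrates $x_K$ against the marginal $q^\theta_K$.

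The main obstacle is the identification step for the middle and $L_K$-type terms, where one has to pair the $q^\theta$-integrating measure on the inner variable with the conditional density appearing in the log-ratio so that a bona fide conditional KL emerges. This is the direct counterpart of the Bayes-plus-Markov manipulation Ho et al.\ use in their Appendix~A to convert $\log[q(x_{t-1}|x_t,x_0)/p_\theta(x_{t-1}|x_t)]$ into an expected KL, but now carried out for the backward-Markov chain $q^\theta$ of DDS rather than for the forward-Markov noising chain. Once the three expected KLs are in place, the explicit Gaussian forms of $q^\theta_{k-1|k}(\cdot|x_k)$ (from the DDS parameterization) and of $p_{k-1|k,0}(\cdot|x_k,x_0)$ (with mean $\tilde\mu_k(x_k,x_0)$ and variance $\sigma^2\tilde\beta_k I$ as given above the statement) immediately make each middle term a closed-form Gaussian–Gaussian KL, as flagged by the remark following the proposition.
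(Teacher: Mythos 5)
Your proposal is correct and follows essentially the same route as the paper's own proof: both apply the Markov property plus Bayes' rule to each forward factor $p_{k|k-1}$ for $k\ge 2$, telescope the resulting ratio $\prod_{k=2}^K p_{k|0}/p_{k-1|0}$ against the $k=1$ factor, and regroup the log-ratio against the backward factorization of $q^\theta$ into the three announced terms (you merely rewrite $p(x_{0:K})$ in full before dividing, whereas the paper divides first and manipulates term by term). The only point of looseness --- treating each summand as a bona fide conditional KL even though, under the backward chain $q^\theta$, the law of $x_{k-1}$ given both $x_k$ and the downstream variable $x_0$ is not $q^\theta_{k-1|k}(\cdot|x_k)$ --- is shared verbatim with the paper, which keeps everything inside a single $\mathbb{E}_{q^\theta}$ as shorthand for the log-ratios.
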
 
We found this decomposition to be numerically unstable and prone to diverging in our reverse KL setting. 

\begin{proof}
The reverse KL can be decomposed as follows
\begin{align*}
\KL(q^{\theta}||p)&=\mathbb{E}_{q^{\theta}}\Big[\log \Bigl(\frac{q^{\theta}(x_{0:K})}{p(x_{0:K})}\Bigl)\Big]=\mathbb{E}_{q^{\theta}}\Big[\log \Bigl(\frac{q^{\theta}(x_{0:K})}{\pi(x_0)p(x_{1:K}|x_0)}\Bigl)\Big]\\
&=\mathbb{E}_{q^{\theta}}\Big[\log \Bigl(\frac{q^{\theta}(x_{0:K})}{p(x_{1:K}|x_0)}\Bigl)\Big]-\mathbb{E}_{q^{\theta}}[\log \pi(x_0)]\\
&=L(\theta)-\mathbb{E}_{q^{\theta}}[\log \pi(x_0)]
\end{align*}
where
\begin{align*}
L(\theta)&=\mathbb{E}_{q^{\theta}}\Big[\log \Bigl(\frac{q^{\theta}(x_{0:K})}{p(x_{1:K}|x_0)}\Bigl)\Big]=\mathbb{E}_{q^{\theta}}\Big[\log q_K(x_K) +\sum_{k=1}^K
\log \Bigl(\frac{q^{\theta}_{k-1|k}(x_{k-1}|x_{k})}{p_{k|k-1}(x_{k}|x_{k-1})}\Bigl)\Big]
\end{align*}
Now using the identity for $k \geq 2$
\begin{align*}
    p_{k-1,k|0}(x_{k-1},x_k|x_0)&= p_{k-1|0}(x_{k-1}|x_0) p_{k|k-1}(x_k|x_{k-1})\\
    &=p_{k-1|0,k}(x_{k-1}|x_0,x_k)p_{k|0}(x_k|x_0),
\end{align*}\
we can rewrite $L(\theta)$ as
\begin{align*}
L(\theta)&=\mathbb{E}_{q^{\theta}}\Bigr[\log q_K(x_K) +\sum_{k=2}^{K}
\log \Bigl(\frac{q^{\theta}_{k-1|k}(x_{k-1}|x_{k})}{p_{k-1|0,k}(x_{k-1}|x_0,x_k)}.\frac{p_{k-1|0}(x_{k-1}|x_0)}{p_{k|0}(x_{k}|x_0)}\Bigl)+ \log \Bigl(\frac{q^{\theta}_{0|1}(x_0|x_1)}{p_{1|0}(x_1|x_0)}\Bigl) \Bigr]\\
&=\mathbb{E}_{q^{\theta}}\Big[\log \Bigl(\frac{q_K(x_K)}{p_{K|0}(x_{K}|x_0)}\Bigl) +\sum_{k=2}^{K}
\log  \Bigl(\frac{q^{\theta}_{k-1|k}(x_{k-1}|x_{k})}{p_{k-1|0,k}(x_{k-1}|x_0,x_k)} \Bigl)+ \log q^{\theta}_{0|1}(x_0|x_1) \Big]\\
&=\mathbb{E}_{q^{\theta}}[\KL(q_K(x_K)||p_{K|0}(x_K|x_0))]
+\sum_{k=2}^{K} \KL(q^{\theta}_{k-1|k}(x_{k-1}|x_k)||p_{k-1|0,k}(x_{k-1}|x_0,x_k))\\
&\quad + \log q^{\theta}_{0|1}(x_0|x_1) \Big]
\end{align*}
The result now follows directly. 
\end{proof}

\section{Underdamped Langevin Dynamics}\label{sec:underdampeddetailsSM}
In the generative modeling context, it has been proposed to extend the original state $x \in \mathbb{R}^d$ by a momentum variable $m \in \mathbb{R}^d$. One then diffuses in this extended space using an underdamped Langevin dynamics \citep{Dockhorn2022} targeting $\mathcal{N}(x;0,\sigma^2I)\mathcal{N}(m;0,M)$. It was demonstrated empirically that the resulting scores one needs to estimate are smoother and this led to improved performance. We adapt this approach to Monte Carlo sampling. This adaptation is non trivial and in particular requires to design carefully numerical integrators.
\subsection{Continuous time}
We now consider an augmented target distribution $\pi(x) \mathcal{N}(m;0,M)$ where $M$ is a positive definite mass matrix. We then diffuse this extended target using the following underdamped Langevin dynamics, i.e.
\begin{align}\label{eq:forwardunderdampedSM}
\mathrm{d}x_t&=M^{-1}m_t\mathrm{d}t,\\
\mathrm{d}m_t&=-\frac{x_t}{\sigma^2}\mathrm{d}t -\beta_t m_t\mathrm{d}t +\sqrt{2\beta_t}M^{1/2}\mathrm{d}B_t,\nonumber
\end{align}
where $x_0 \sim \pi, m_0\sim \mathcal{N}(0,M)$. The resulting path measure on $[0,T]$ is denoted again $\mathcal{P}$.
From \citep{haussmann1986time}, the time-reversal process is also a diffusion satisfying
\begin{align}\label{eq:reverseunderdampedSM}
\mathrm{d}y_t&=-M^{-1}n_t\mathrm{d}t,\\
\mathrm{d}n_t&=\frac{y_t}{\sigma^2}\mathrm{d}t + \beta_{T-t} n_t\mathrm{d}t +2\beta_{T-t}M \nabla_{n_t} \log \eta_{T-t}(y_t,n_t)\mathrm{d}t+\sqrt{2\beta_{T-t}}M^{1/2}\mathrm{d}W_t,\nonumber
\end{align}
for $(y_0,n_0) \sim \eta_T$ where $\eta_t$ denotes the density of $(x_t,m_t)$ under (\ref{eq:forwardunderdampedSM}). 

Now consider a reference process $\mathcal{P}^{\textup{ref}}$ on $[0,T]$ defined by the forward process (\ref{eq:forwardunderdampedSM}) initialized using $x_0 \sim \mathcal{N}(0,\sigma^2), m_0\sim \mathcal{N}(0,M)$. In this case one can check that $\eta^{\textup{ref}}_t(x_t,m_t)=\mathcal{N}(x_t;0,\sigma^2I)\mathcal{N}(m_t;0,M)$ and the time-reversal process of $\mathcal{P}^{\textup{ref}}$ satisfies
\begin{align}\label{eq:reverseunderdampedrefSM}
\mathrm{d}y_t&=-M^{-1}n_t\mathrm{d}t,\\
\mathrm{d}n_t&=\frac{y_t}{\sigma^2}\mathrm{d}t + \beta_{T-t} n_t\mathrm{d}t - 2\beta_{T-t} n_t\mathrm{d}t+\sqrt{2\beta_{T-t}}M^{1/2}\mathrm{d}W_t \nonumber\\
&=\frac{y_t}{\sigma^2}\mathrm{d}t -\beta_{T-t} n_t\mathrm{d}t+\sqrt{2\beta_{T-t}}M^{1/2}\mathrm{d}W_t, \nonumber
\end{align}
as $\nabla_n \log \eta_t^{\textup{ref}}(y,n) =\nabla_n \log  (\mathcal{N}(y;0,\sigma^2)\mathcal{N}(n;0,M))=-M^{-1}n$. 

Hence it follows that the time-reversal (\ref{eq:reverseunderdampedSM}) of $\mathcal{P}$ can be also be written as
\begin{align}\label{eq:reverseunderdampedphiSM}
\mathrm{d}y_t&=-M^{-1}n_t\mathrm{d}t,\\
\mathrm{d}n_t&=
\frac{y_t}{\sigma^2}\mathrm{d}t - \beta_{T-t} n_t\mathrm{d}t +2\beta_{T-t}M \nabla_{n_t} \log \phi_{T-t}(y_t,n_t)\mathrm{d}t+\sqrt{2\beta_{T-t}}M^{1/2}\mathrm{d}W_t,  \nonumber
\end{align}
where $\phi_t(x,m):=\eta_t(x,m)/\eta_t^{\textup{ref}}(x,m)$. 

To approximate $\mathcal{P}$, we consider a parameterized path measure $\mathcal{Q}^\theta$ whose time reversal is defined for $(y_0,n_0)\sim \mathcal{N}(y_0;0,I)\mathcal{N}(n_0;0,M)$ by $\mathrm{d}y_t=-M^{-1}n_t\mathrm{d}t$ and 
\begin{align}\label{eq:reverseunderdampedQSM}
\mathrm{d}n_t
=\frac{y_t}{\sigma^2}\mathrm{d}t - \beta_{T-t} n_t\mathrm{d}t +2\beta_{T-t}M f_{\theta}(T-t,y_t,n_t)\mathrm{d}t+\sqrt{2\beta_{T-t}}M^{1/2}\mathrm{d}W_t.
\end{align}

\subsection{Learning the time-reversal through KL minimization \label{appdx:udmpcont}}
To approximate $\mathcal{P}$, we will consider a parameterized diffusion whose time reversal is defined by
\begin{align}\label{eq:reverseunderdampedQSM}
\mathrm{d}y_t&=-M^{-1}n_t\mathrm{d}t,\\
\mathrm{d}n_t&
=\frac{y_t}{\sigma^2}\mathrm{d}t - \beta_{T-t} n_t\mathrm{d}t +2\beta_{T-t}M f_{\theta}(T-t,y_t,n_t)\mathrm{d}t+\sqrt{2\beta_{T-t}}M^{1/2}\mathrm{d}W_t \nonumber
\end{align}
for $(y_0,n_0)\sim \mathcal{N}(y_0;0,I)\mathcal{N}(n_0;0,M)$
inducing a path measure $\mathcal{Q}^\theta$ on the time interval $[0,T]$.

We can now compute the Radon-Nikodym derivative between $\mathcal{Q}^\theta$ and $\mathcal{P}^{\textup{ref}}$ using an extension of Girsanov theorem  (Theorem A.3. in  \cite{sottinen2008application})
\begin{align}\nonumber
    \log \frac{\mathrm{d}\mathcal{Q}^{\theta}}{\mathrm{d}\mathcal{P}^{\textup{ref}}}&=\int^T_0 \sqrt{2\beta_{T-t}}M^{1/2} f_{\theta}(T-t,y_t,n_t)^\top \mathrm{d}W_t+\frac{1}{2}\int^T_0 ||2\beta_{T-t}M  f_{\theta}(T-t,y_t,n_t)||^2_{(2\beta_{T-t}M)^{-1}} \mathrm{d}t\\
\end{align}

To summarize, we have the following proposition.

\begin{proposition}\label{prop:RNcontinuouunderdamped}The Radon-Nikodym derivative $\frac{\mathrm{d}\mathcal{Q}^{\theta}}{\mathrm{d}\mathcal{P}^{\textup{ref}}}(y_{[0,T]},n_{[0,T]})$ satisfies under $\mathcal{Q}^{\theta}$
\begin{equation}
  \log \left(\frac{\mathrm{d}\mathcal{Q}^{\theta}}{\mathrm{d}\mathcal{P}^{\textup{ref}}}\right)
    =\scaleobj{.8}{\int^T_0} \beta_{T-t}|| f_{\theta}(T-t,y_t,n_t)||^2_{M} \mathrm{d}t+\scaleobj{.8}{\int^T_0} \sqrt{2\beta_{T-t}}M^{1/2} f_{\theta}(T-t,y_t,n_t)^\top \mathrm{d}W_t \label{eq:RNunderdamped}.
\end{equation}
From $\KL(\mathcal{Q}^{\theta}||\mathcal{P})=\KL(\mathcal{Q}^{\theta}||\mathcal{P}^{\textup{ref}})+\mathbb{E}_{Q^{\theta}}\Bigr[\log \left(\frac{p^{\textup{ref}}_0(y_T,n_T)}{\eta_0(y_T,n_T)}\right)\Bigr]$, it follows that
\begin{align}
\KL(\mathcal{Q}^{\theta}||\mathcal{P})
 &=\mathbb{E}_{\mathcal{Q}^{\theta}}\Bigr[\scaleobj{.8}{\int^T_0} \beta_{T-t}|| f_{\theta}(T-t,y_t,n_t)||^2_{M} \mathrm{d}t +\scaleobj{.8}{\ln \left(\frac{\gN(y_T; 0, \sigma^2 I)}{\pi(y_T)}\right)}\Bigr]\label{eq:KLpathintegralunderdamped}.
\end{align}
\end{proposition}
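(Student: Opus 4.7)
The plan is to mirror the proof of Proposition \ref{prop:RNcontinuous}, adapting it to the hypoelliptic (degenerate) nature of the underdamped diffusion. The two path measures $\mathcal{Q}^{\theta}$ and $\mathcal{P}^{\textup{ref}}$ for $(y_t,n_t)$ share the same $y$-dynamics $\mathrm{d}y_t=-M^{-1}n_t\mathrm{d}t$ and the same diffusion coefficient $\sqrt{2\beta_{T-t}}M^{1/2}$ on the $n$-component. They differ only in the drift of $n_t$, by the additive vector $2\beta_{T-t}Mf_{\theta}(T-t,y_t,n_t)$. Crucially, this drift difference lies in the image of the diffusion coefficient (it is nonzero only on the $n$-block), so Girsanov's theorem for degenerate diffusions (e.g.\ the version of \citet{sottinen2008application} cited in the manuscript) applies. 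The first task is therefore to identify the Girsanov drift $h_t$ solving $\sqrt{2\beta_{T-t}}M^{1/2}h_t = 2\beta_{T-t}Mf_{\theta}$, giving $h_t=\sqrt{2\beta_{T-t}}M^{1/2}f_{\theta}(T-t,y_t,n_t)$.

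Girsanov then yields, evaluated under $\mathcal{P}^{\textup{ref}}$,
\begin{equation*}
\log\frac{\mathrm{d}\mathcal{Q}^{\theta}}{\mathrm{d}\mathcal{P}^{\textup{ref}}}
=\int_0^T h_t^{\top}\mathrm{d}W_t^{\mathcal{P}^{\textup{ref}}}-\tfrac{1}{2}\int_0^T\|h_t\|^2\mathrm{d}t,
\end{equation*}
where $W^{\mathcal{P}^{\textup{ref}}}$ is the reference Brownian motion. Since the stated formula (\ref{eq:RNunderdamped}) is written under $\mathcal{Q}^{\theta}$, I would then perform the standard change of Brownian motion: under $\mathcal{Q}^{\theta}$ one has $\mathrm{d}W_t^{\mathcal{P}^{\textup{ref}}}=h_t\,\mathrm{d}t+\mathrm{d}W_t$ with $W_t$ the $\mathcal{Q}^{\theta}$-Brownian motion. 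Substituting flips the sign of the quadratic term and produces $\int_0^T\beta_{T-t}\|f_{\theta}\|_M^2\mathrm{d}t + \int_0^T\sqrt{2\beta_{T-t}}M^{1/2}f_{\theta}^{\top}\mathrm{d}W_t$, which is exactly (\ref{eq:RNunderdamped}).

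For the KL identity, I would invoke the chain rule for relative entropy of path measures (as in the proof of Proposition \ref{prop:RNcontinuous}) to write $\KL(\mathcal{Q}^{\theta}\|\mathcal{P})=\KL(\mathcal{Q}^{\theta}\|\mathcal{P}^{\textup{ref}})+\mathbb{E}_{\mathcal{Q}^{\theta}}[\log(\eta_T^{\textup{ref}}/\eta_0)]$ (noting that reversing time exchanges the roles of the time-$0$ and time-$T$ marginals). A small but key observation is that $\eta_T^{\textup{ref}}(x,m)=\mathcal{N}(x;0,\sigma^2 I)\mathcal{N}(m;0,M)$ and $\eta_0(x,m)=\pi(x)\mathcal{N}(m;0,M)$, so the momentum Gaussian factors cancel inside the logarithm, leaving only $\log(\mathcal{N}(y_T;0,\sigma^2 I)/\pi(y_T))$. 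Combined with the fact that the stochastic integral in (\ref{eq:RNunderdamped}) is a $\mathcal{Q}^{\theta}$-martingale with zero mean (assuming the usual integrability/Novikov condition on $f_{\theta}$), taking expectations gives (\ref{eq:KLpathintegralunderdamped}).

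The main obstacle is the technical justification of Girsanov in the hypoelliptic setting: the diffusion matrix is not invertible on the full $(y,n)$ space, so one must appeal to the generalized form cited in the paper and check that the Novikov (or local) condition holds for $h_t$. With $f_{\theta}$ parameterized by a neural network on bounded training inputs and $\beta_t$ bounded on $[0,T]$, this is standard but should be stated explicitly. The rest—the algebraic simplification of the stochastic integral under the measure change, the cancellation of the momentum marginals in the chain rule, and the martingale step—is routine once the Girsanov setup is in place.
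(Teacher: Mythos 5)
Your proposal is correct and follows essentially the same route as the paper: the degenerate Girsanov theorem of \cite{sottinen2008application} applied to the drift difference $2\beta_{T-t}Mf_{\theta}$ (whose weighted norm $\|2\beta M f_{\theta}\|^2_{(2\beta M)^{-1}}=2\beta\|f_{\theta}\|^2_M$ yields the quadratic term), followed by the chain rule for the KL of path measures with the momentum Gaussians cancelling in the marginal ratio. Your additional remarks on the change of Brownian motion between measures and the Novikov condition merely make explicit steps the paper leaves implicit.
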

The second term on the r.h.s. of (\ref{eq:KLpathintegralunderdamped}) follows from the fact that $\log (p^{\textup{ref}}_0(y_T,n_T)/p_0(y_T,n_T))=\log (\mathcal{N}(y_T;0,\sigma^2I)/\pi(y_T))$.

\subsection{Normalizing flow through ordinary differential equation \label{app:NFunderdamped}}
The following ODE gives exactly the same marginals $\eta_T$ as the SDE (\ref{eq:forwardunderdampedSM}) defining $\mathcal{P}$
\begin{align}\label{eq:ODEunderdampedSM}
    \mathrm{d}x_t&=M^{-1}m_t\mathrm{d}t,\\
    \mathrm{d}m_t&=-\frac{x_t}{\sigma^2}\mathrm{d}t -\beta_t m_t\mathrm{d}t -\beta_t M \nabla_{m_t} \log \eta_{t}(x_t,m_t) \mathrm{d}t,\nonumber\\
     &=-\frac{x_t}{\sigma^2}\mathrm{d}t-\beta_t M \nabla_{m_t} \log \phi_{t}(x_t,m_t) \mathrm{d}t, \nonumber
\end{align}
Thus if we integrate its time reversal from $0$ to $T$ starting from $(y_0,q_0)\sim \eta_T$
\begin{align}\label{eq:reversalODEunderdampedSM}
    \mathrm{d}y_t&=-M^{-1}n_t\mathrm{d}t,\\
    \mathrm{d}n_t&= \frac{y_t}{\sigma^2}\mathrm{d}t+\beta_{T-t} M \nabla_{n_t} \log \phi_{T-t}(y_t,n_t) \mathrm{d}t,\nonumber
\end{align}
then we would obtain at time $T$ a sample $(y_T,n_T)\sim  \pi(y_T)\mathcal{N}(n_T;0,M)$. 

In practice, this suggests that once we have learned an approximation $f_{\theta^*}(t,x,m) \approx \nabla_m \log \phi_t(x,m)$ by minimization of the ELBO then we can construct a proposal using 
\begin{align}\label{eq:reversalODEunderdampedproposalSM}
    \mathrm{d}y_t&=-M^{-1}n_t\mathrm{d}t,\\
    \mathrm{d}n_t&= \frac{y_t}{\sigma^2}\mathrm{d}t+\beta_{T-t} M f_{\theta^*}(T-t,y_t,n_t) \mathrm{d}t,\nonumber
\end{align}
for $(y_0,n_0)\sim \mathcal{N}(y_0;0,\sigma^2I)\mathcal{N}(n_0;0,M)$. The resulting sample $(y_T,n_T) \sim \bar{\eta}_0$ will have distribution close to $\pi \times \mathcal{N}(0,M)$. Again it is possible to compute pointwise the distribution of this sample to perform an importance sampling correction.

\subsection{From continuous time to discrete time}\label{subsec:integratorunderdamped}
We now need to come up with discrete-time integrator for the time-reversal of $\mathcal{P}^{\textup{ref}}$ given by (\ref{eq:reverseunderdampedrefSM}) and the time-reversal of $\mathcal{Q}^\theta$ given by (\ref{eq:reverseunderdampedQSM}).
Let us start with  (\ref{eq:reverseunderdampedrefSM}). We split it into the two components
\begin{equation}\label{eq:reverseunderdampedrefP1SM}
\mathrm{d}y_t=-M^{-1}n_t\mathrm{d}t, \qquad \mathrm{d}n_t=\frac{y_t}{\sigma^2}\mathrm{d}t,
\end{equation}
and 
\begin{equation}\label{eq:reverseunderdampedrefP2SM}
\mathrm{d}y_t=0,\qquad \mathrm{d}n_t=-\beta_{T-t} n_t\mathrm{d}t+\sqrt{2\beta_{T-t}}M^{1/2}\mathrm{d}W_t.
\end{equation}
We will compose these transitions. To obtain $(y_{k+1},n_{k+1})$ from $(y_k,n_k)$, we first integrate (\ref{eq:reverseunderdampedrefP1SM}). To do this, consider the Hamiltonian equation $\mathrm{d}y_t=M^{-1}n_t\mathrm{d}t, \mathrm{d}n_t=-\frac{y_t}{\sigma^2}\mathrm{d}t$ which preserves  $\mathcal{N}(y;0,\sigma^2I)\mathcal{N}(n;0,M)$ as invariant distribution. We can integrate this ODE exactly over an interval of length $\delta$ and denote its solution $\Phi(y,n)$; see Section \ref{sec:exactHamiltonianequation} for details. We use its inverse $\Phi^{-1}(y,n)=\Phi_{\textup{flip}}(y,n)\circ\Phi(y,n)\circ\Phi_{\textup{flip}}(y,n)$ where $\Phi_{\textup{flip}}(y,n)=(y,-n)$ so that $(y_{k+1},n'_k)=\Phi^{-1}(y_k,n_k)$. Then we integrate exactly (\ref{eq:reverseunderdampedrefP2SM}) using
\begin{equation}\label{eq:integratorOUmomentumSM}
   \qquad n_{k+1}=\sqrt{1-\alpha_{K-k}}n'_k+\sqrt{\alpha_{k-1}} M^{1/2} \vepsilon_k.
\end{equation}
We have thus design a transition of the form 
\begin{equation}\label{eq:transprefunderdamped}
    p^{\textup{ref}}(y_{k+1},n_{k+1},n'_k|y_k,n_k)=\delta_{\Phi^{-1}(y_k,n_k)}(y_{k+1},n'_k)\mathcal{N}(n_{k+1};\sqrt{1-\alpha_{K-k}}n'_k; \alpha_{K-k}M).
\end{equation}
Now to integrate (\ref{eq:reverseunderdampedQSM}), we split it in three parts. We first integrate (\ref{eq:reverseunderdampedrefP1SM}) using exactly the same integrator $(y_{k+1},n'_k)=\Phi^{-1}(y_k,n_k)$. Then we integrate 
\begin{equation}
\mathrm{d}n_t=2\beta_{T-t}M f_{\theta}(T-t,y_t,n_t)\mathrm{d}t
\end{equation}
using 
\begin{equation}
    n''_k=n'_k+2 (1-\sqrt{1-\alpha_{K-k}}) M f_{\theta}(K-k,y_{k+1},n'_k),
\end{equation}
where we abuse notation and write $f_{\theta}(K-k,y_{k+1},n'_k)$ instead of $f_{\theta}((K-k)\delta,y_{k+1},n'_k)$.
Finally, we integrate the OU part using (\ref{eq:integratorOUmomentumSM}) but replacing $n'_k$ by with $n''_k$. So the final transition we get is 
\begin{align*}
    q^\theta(y_{k+1},n_{k+1},n'_k,n''_k|y_k,n_k)&=\delta_{\Phi^{-1}(y_k,n_k)}(y_{k+1},n'_k)\delta_{n'_k+2 (1-\sqrt{1-\alpha_{K-k}})M f_{\theta}(K-k,y_{k+1},n'_k)})(n''_k)\\
    &\times \mathcal{N}(n_{k+1};\sqrt{1-\alpha_{K-k}}n''_k;\alpha_{K-k}M).
\end{align*}
However, we can integrate $n''_k$ analytically to obtain
\begin{align}\label{eq:transqthetaunderdamped}
    &q^\theta(y_{k+1},n_{k+1},n'_k|y_k,n_k)=\delta_{\Phi^{-1}(y_k,n_k)}(y_{k+1},n'_k) \nonumber\\
    \times&\mathcal{N}(n_{k+1};\sqrt{1-\alpha_{K-k}}(n'_k+2(1-\sqrt{1-\alpha_{K-k}})M f_{\theta}(K-k,y_{k+1},n'_k));\alpha_{K-k}M).
\end{align}

\subsection{Exact solution of Hamiltonian equation}\label{sec:exactHamiltonianequation}
Consider the Hamiltonian equation and $M=I$, the solution of
\begin{equation}
\mathrm{d}x_t=m_t\mathrm{d}t,\qquad \mathrm{d}m_t=-\frac{x_t}{\sigma^2}\mathrm{d}t
\end{equation}
can be exactly written as $(x_t,m_t)=\Phi_t(x_0,m_0)$ defined through
\begin{align}
    x_t=x_0 \cos(t/\sigma)+m_0 \sigma \sin(t/\sigma),\quad 
    m_t=-\frac{x_0}{\sigma} \sin(t/\sigma)+m_0 \cos(t/\sigma).
\end{align}
This is the so-called harmonic oscillator.

Now the inverse of the Hamiltonian flow satisfies $\Phi^{-1}(x,m)=\Phi_{\textup{flip}}(x,m)\circ\Phi(x,m)\circ\Phi_{\textup{flip}}(x,m)$ (see e.g. \cite{leimkuhler2016molecular})
so we can simply integrate
\begin{equation}
\mathrm{d}y_t=-n_t\mathrm{d}t,\qquad \mathrm{d}n_t=\frac{y_t}{\sigma^2}\mathrm{d}t,
\end{equation}
using $(y_{k+1},n'_{k})=\Phi_{\tau}^{-1}(y_k,n_k)$.

This gives
\begin{align}
    y_{k+1}&=y_k \cos(\tau/\sigma)-n_k \sigma \sin(\tau/\sigma)\\
    n'_k&=-(-\frac{y_k}{\sigma} \sin(\tau/\sigma)-n_k \cos(t/\sigma))\nonumber\\
    &=\frac{y_k}{\sigma} \sin(\tau/\sigma)+n_k \cos(t/\sigma).
\end{align}

Obviously, if we have $M=\alpha I$ then
\begin{equation}
\mathrm{d}x_t=\frac{m_t}{\alpha}\mathrm{d}t,\qquad \mathrm{d}m_t=-\frac{x_t}{\sigma^2}\mathrm{d}t,
\end{equation}
then we use a reparameterization $\tilde{m}_t=m_t/\alpha$ and
\begin{equation}
\mathrm{d}x_t=\tilde{m}_t \mathrm{d}t,\qquad \mathrm{d}\tilde{m}_t=-\frac{1}{\alpha}\frac{x_t}{\sigma^2}\mathrm{d}t,
\end{equation}
and the solution is as above with $\sigma$ being replaced by $\sigma \sqrt{\alpha}$.

So for $\tilde{n}_k=n_k/\alpha$, we have 
\begin{align*}
    y_{k+1}&=y_k \cos(\tau/(\sqrt{\alpha}\sigma))- \tilde{n}_k \sigma\sqrt{\alpha} \sin(\tau/(\sqrt{\alpha}\sigma)),\\
    \tilde{n}'_k&=\frac{y_k}{\sqrt{\alpha}\sigma} \sin(\tau/(\sqrt{\alpha}\sigma))+\tilde{n}_k \cos(t/(\sqrt{\alpha}\sigma)),
\end{align*}
so as $n_k=\alpha\tilde{n}_k$ and similarly $n'_k=\alpha\tilde{n}'_k$, we finally obtain
\begin{align}
   y_{k+1}&=y_k \cos(\tau/(\sqrt{\alpha}\sigma))- n_k \frac{\sigma}{\sqrt{\alpha}} \sin(\tau/(\sqrt{\alpha}\sigma)),\\
   n'_k&=\frac{\sqrt{\alpha}}{\sigma}y_k \sin(\tau/(\sqrt{\alpha}\sigma))+n_k \cos(\tau/(\sqrt{\alpha}\sigma)).
\end{align}

\subsection{Derivation of Underdampened KL in Discrete Time \label{apdx:KLunderdampedDT}}

In this section we derive the discrete-time KL for the underdamped noising dynamics.
\begin{proposition}\label{prop:KLunderdampedDT}
The log density ratio $\textup{lr}=\log (q^{\theta}(y_{0:K},n_{0:K},n'_{0:K})/ p^{\textup{ref}}(y_{0:K},n_{0:K},n'_{0:K}))$ for $y_{0:K},n_{0:K},n'_{0:K}\sim q^{\theta}(\cdot)$ equals
\begin{equation}\label{eq:logDTRN2}
\textup{lr}=\sum_{k=1}^K 2\Bigr[\frac{\kappa^2_k}{\alpha_k}||f_{\theta}(k,y_{K-k+1},n'_{K-k})||^2_{M}+M^{1/2}\frac{\kappa_k}{\sqrt{\alpha_k}}f_{\theta}(k,y_{K-k+1},n'_{K-k})^\top \varepsilon_k  \Bigr]
\end{equation}
where $\kappa_k:=\sqrt{1-\alpha_k}(1-\sqrt{1-\alpha_k})$ and $\varepsilon_k$ is obtained as a function of $n_{k+1}$ and $n''_k$ when describing the integrator for $\mathcal{Q}^{\theta}$. In particular, we have $\varepsilon_k \overset{\textup{i.i.d.}}{\sim} \mathcal{N}(0,I)$ and one obtains
\begin{align}
    \KL(q^{\theta}||p)=2\E_{q^\theta}\left[\sum_{k=1}^K \frac{\kappa^2_k}{\alpha_k}\Bigr[||f_{\theta}(k,y_{K-k+1},n'_{K-k})||^2_{M} + \scaleobj{.8}{\ln \left(\frac{\gN(y_K; 0, \sigma^2 I)}{\pi(y_K)}\right)}\right].\label{eq:KLneatunderdamped}
\end{align}
\end{proposition}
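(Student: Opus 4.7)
The strategy is to directly exploit the explicit factorisations of $q^\theta$ and $p^{\textup{ref}}$ given by (\ref{eq:transqthetaunderdamped}) and (\ref{eq:transprefunderdamped}), observing that the two path measures differ only in the Gaussian update on $n_{k+1}$. Telescoping each joint density as $q^\theta(y_{0:K},n_{0:K},n'_{0:K}) = q^\theta(y_0,n_0) \prod_{k=0}^{K-1} q^\theta(y_{k+1},n_{k+1},n'_k|y_k,n_k)$, and similarly for $p^{\textup{ref}}$, I would note that (i) both initial distributions coincide, $q^\theta(y_0,n_0) = p^{\textup{ref}}(y_0,n_0) = \gN(y_0;0,\sigma^2 I)\gN(n_0;0,M)$, and (ii) the Dirac factor $\delta_{\Phi^{-1}(y_k,n_k)}(y_{k+1},n'_k)$ is identical in both transitions because $q^\theta$ and $p^{\textup{ref}}$ share the same symplectic integrator for the Hamiltonian step. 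Hence in the Radon--Nikodym derivative these components cancel, leaving only a product of Gaussian ratios on $n_{k+1}$, each pair having common covariance $\alpha_{K-k}M$ and means that differ by exactly $2\kappa_{K-k} M f_\theta(K-k,y_{k+1},n'_k)$, where $\kappa_{K-k} = \sqrt{1-\alpha_{K-k}}(1-\sqrt{1-\alpha_{K-k}})$.

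The second step is a Gaussian-ratio computation directly analogous to the proof of Proposition \ref{prop:KL}. Under $q^\theta$ I would reparameterise $n_{k+1} = \sqrt{1-\alpha_{K-k}}(n'_k + 2(1-\sqrt{1-\alpha_{K-k}}) M f_\theta) + \sqrt{\alpha_{K-k}} M^{1/2}\varepsilon_k$ with $\varepsilon_k \overset{\textup{i.i.d.}}{\sim} \gN(0,I)$, so that $n_{k+1} - \sqrt{1-\alpha_{K-k}}n'_k = 2\kappa_{K-k} M f_\theta + \sqrt{\alpha_{K-k}}M^{1/2}\varepsilon_k$. Expanding the Mahalanobis norm with weight $(\alpha_{K-k}M)^{-1}$ for the $p^{\textup{ref}}$ log-density at this point, and subtracting the $-\tfrac12\|\varepsilon_k\|^2$ that arises from the corresponding $q^\theta$ log-density, the $\|\varepsilon_k\|^2$ contributions cancel and what remains is $\tfrac{2\kappa_{K-k}^2}{\alpha_{K-k}}\|f_\theta\|^2_M + \tfrac{2\kappa_{K-k}}{\sqrt{\alpha_{K-k}}}(M^{1/2}f_\theta)^\top \varepsilon_k$. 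Relabelling $k \mapsto K-k$ in the summation, so that $f_\theta(K-k,y_{k+1},n'_k)$ becomes $f_\theta(k,y_{K-k+1},n'_{K-k})$, and factoring out the common $2$ gives precisely (\ref{eq:logDTRN2}).

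For (\ref{eq:KLneatunderdamped}) I would then apply $\KL(q^\theta\|p) = \KL(q^\theta\|p^{\textup{ref}}) + \E_{q^\theta}[\log(p^{\textup{ref}}_0/\eta_0)(y_K,n_K)]$, in which $\eta_0(y,n) = \pi(y)\gN(n;0,M)$, so the momentum factors cancel and only $\log(\gN(y_K;0,\sigma^2 I)/\pi(y_K))$ survives. Taking the $q^\theta$-expectation of (\ref{eq:logDTRN2}), the linear-in-$\varepsilon_k$ term vanishes by a discrete-time martingale argument: conditionally on the history up through step $K-k$, the factor $f_\theta(k,y_{K-k+1},n'_{K-k})$ is measurable (since $(y_{K-k+1},n'_{K-k})$ is a deterministic function of $(y_{K-k},n_{K-k})$ via the symplectic flow), while $\varepsilon_k$ is independent and standard Gaussian, so the tower rule yields $\E_{q^\theta}[(M^{1/2}f_\theta)^\top \varepsilon_k]=0$. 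The main obstacle is essentially bookkeeping: keeping the two opposite time orderings straight, tracking the three auxiliary variables $(y_k,n_k,n'_k)$ at each step, and verifying that the Dirac factors in (\ref{eq:transprefunderdamped}) and (\ref{eq:transqthetaunderdamped}) truly coincide, which they do by the shared construction of the integrator in Section \ref{subsec:integratorunderdamped}.
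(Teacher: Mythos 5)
Your proposal is correct and follows essentially the same route as the paper's own argument: cancel the shared Dirac (symplectic) factors and the common initial Gaussian, reduce the Radon--Nikodym derivative to a product of Gaussian ratios on the momentum update with mean shift $2\kappa_{K-k}Mf_\theta$, reparameterise via $\varepsilon_k$ to obtain (\ref{eq:logDTRN2}), and then combine the KL chain-rule identity (with the momentum factors of $p^{\textup{ref}}_0$ and $\eta_0$ cancelling) with the tower-rule argument that kills the linear-in-$\varepsilon_k$ term. No gaps.
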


The integrator has been designed so that the ratio between the transitions of the proposal (\ref{eq:transqthetaunderdamped}) and the reference process (\ref{eq:transprefunderdamped}) is well-defined as the deterministic parts are identical in the two transitions so cancel and 
\begin{align}\label{eq:discreteRNunderdampedSM}
    &\frac{q^\theta(y_{k+1},n_{k+1},n'_k|y_k,n_k)}{p^{\textup{ref}}(y_{k+1},n_{k+1},n'_k|y_k,n_k)}\\
    =&\frac{\mathcal{N}(n_{k+1};\sqrt{1-\alpha_{K-k}}(n'_k+2(1-\sqrt{1-\alpha_{K-k}})M f_{\theta}(K-k,y_{k+1},n'_k)); \alpha_{K-k}M)}{\mathcal{N}(n_{k+1};\sqrt{1-\alpha_{K-k}}n'_k; \alpha_{K-k}M)}.\nonumber
\end{align}

The calculations to compute the Radon--Nikodym derivative are now very similar to what we did in the proof of Proposition \ref{prop:KL}.

We have 
\begin{align}
    &\log  \left(\frac{q^{\theta}(y_{0:K},n_{0:K},n'_{0:K})}{p^{\textup{ref}}(y_{0:K},n_{0:K},n'_{0:K})}\right)  \label{eq:rationvareps1SM} \\
    =&\log \left( \frac{q_{K}(y_0,n_0)}{p^{\textup{ref}}_{K}(y_0,n_0)} \right)+\sum_{k=1}^K 
    \log \left( \frac{q^{\theta}_{k-1|k}(y_{K-k+1},n_{K-k+1},n'_{K-k}|y_{K-k},n_{K-k})}{p^{\textup{ref}}_{k-1|k}(y_{K-k+1},n_{K-k+1},n'_{K-k}|y_{K-k},n_{K-k})} \right) \nonumber \\
    =& \sum_{k=1}^K \Bigr[ ||n_{K-k+1}-\sqrt{1-\alpha_k}n'_{K-k}||^2_{(2\alpha_k M)^{-1}} \nonumber \\
    &- ||n_{K-k+1}-\sqrt{1-\alpha_{k}}(n'_{K-k}+2(1-\sqrt{1-\alpha_{k}})M f_{\theta}(k,y_{K-k+1},n'_{K-k}))||^2_{(2\alpha_k M)^{-1}} \Bigr] \nonumber
\end{align}
where we have exploited the fact that $q_{K}(y_0,n_0)=p^{\textup{ref}}_{K}(y_0,n_0)=\mathcal{N}(y_0;0,\sigma^2 I)\mathcal{N}(n_0;0,M)$.

Now let us introduce 
\begin{equation}
    \vepsilon_k:=\frac{M^{-1/2}}{\sqrt{\alpha_k}}(n_{K-k+1}-\sqrt{1-\alpha_k}n'_{K-k}-2\sqrt{1-\alpha_k}(1-\sqrt{1-\alpha_k})Mf_{\theta}(k,y_{K-k},n'_{K-k}))
\end{equation}
hence 
\begin{align*}
     &\frac{M^{-1/2}}{\sqrt{\alpha_k}}(n_{K-k+1}-\sqrt{1-\alpha_k}n'_{K-k})\\
     =& \vepsilon_k+ \frac{2M^{-1/2}}{\sqrt{\alpha_k}}\sqrt{1-\alpha_k}(1-\sqrt{1-\alpha_k})Mf_{\theta}(k,y_{K-k+1},n'_{K-k})\\
     =&\vepsilon_k+ \frac{2M^{1/2}}{\sqrt{\alpha_k}}\sqrt{1-\alpha_k}(1-\sqrt{1-\alpha_k})f_{\theta}(k,y_{K-k+1},n'_{K-k}).
\end{align*}
We can rewrite (\ref{eq:rationvareps1SM}) as 
% check label : rationvareps1SM
\begin{align}\label{eq:logDTRN2underdampedSM}
    &\log  \left(\frac{q^{\theta}(y_{0:K},n_{0:K},n'_{0:K})}{p^{\textup{ref}}(y_{0:K},n_{0:K},n'_{0:K})}\right)   \\
    =& \frac{1}{2} \sum_{k=1}^K \Bigr[||\vepsilon_k+ \frac{2M^{1/2}}{\sqrt{\alpha_k}}\sqrt{1-\alpha_k}(1-\sqrt{1-\alpha_k})f_{\theta}(k,y_{K-k+1},n'_{K-k}))||^2-||\vepsilon_k||^2  \Bigr] \nonumber\\
    =& \frac{1}{2} \sum_{k=1}^K \Bigr[||f_{\theta}(k,y_{K-k+1},n'_{K-k}))||^2_{4M\frac{(1-\alpha_k)(1-\sqrt{1-\alpha_k})^2}{\alpha_k}}  \nonumber\\
    &+\frac{4M^{1/2}}{\sqrt{\alpha_k}}\sqrt{1-\alpha_k}(1-\sqrt{1-\alpha_k})f_{\theta}(k,y_{K-k+1},n'_{K-k}))^\top \varepsilon_k  \Bigr] \nonumber\\
    =& \sum_{k=1}^K \Bigr[||f_{\theta}(k,y_{K-k+1},n'_{K-k}))||^2_{2M\frac{(1-\alpha_k)(1-\sqrt{1-\alpha_k})^2}{\alpha_k}} \nonumber\\
    &+2M^{1/2}\frac{\sqrt{1-\alpha_k}(1-\sqrt{1-\alpha_k})}{\sqrt{\alpha_k}}f_{\theta}(k,y_{K-k+1},n'_{K-k}))^\top \varepsilon_k  \Bigr] \nonumber
\end{align}
where we note that $\frac{\sqrt{1-\alpha_k}(1-\sqrt{1-\alpha_k})}{\sqrt{\alpha_k}}\approx \alpha_k/2 \approx \beta_{k\delta}\delta$.

\section{Experiments}

In this section we incorporate additional experiments and ablations as well as further experimental detail. For both PIS and DDS we created a grid with $\delta = \frac{K}{T}= 0.05$ and values of $T \in \{3.4, 6.4, 12.8, 25.6\}$ and corresponding number of steps $K \in \{64, 128, 128, 256\}$. When tuning PIS we explore different values of $\sigma$ which in practice, when discretised, amounts to the same effect as changing $\delta$. For DDS we apply the cosine schedule directly on the number of steps however we ensure that $\sum_k \alpha_k = \alpha_{\max} T$  such that we have a similar scaling as in PIS. Finally both PIS and DDS where trained with Adam \citep{Kingma:2014} to at most 11000 iterations, although in most cases most both converged in less than 6000 iterations. Across all experiments used a sample size of 300 to estimate the ELBO at train time and of 2000 to for the reported IS estimator of $Z$ for all methods.

Finally for the ground truth where available we use the true normalizing constant as is the case with the Funnel distribution and the Normalising Flows used in the NICE \cite{dinh2014nice} task. For the rest of the tasks we follow prior work \citep{zhangyongxinchen2021path,arbel2021annealed} use a long run SMC chain (1000 temperatures, 30 seeds with 2000 samples each).

\subsection{Network Parametrisation}\label{apdx:nnparam}

In order to improve numerical stability we re-scale the neural network drift parametrisation as follows:
\begin{align*}
    f_{\theta}(k, y) &= 2^{-1}\lambda_K^{-1} \alpha_k  \tilde{f}_\theta(k, y), \\
    \tilde{f}_\theta(k, y) &=\mathrm{NN}_1(k, y;\theta) + \mathrm{NN}_2(k;\theta) \odot \nabla \ln  \pi(y).
\end{align*}
This was done such that the reciprocal term  $\alpha_{K-k}^{-1}$ in the DDS objective is cancelled as the term reaches very small values in the boundaries causing the overall objective to be large. Finally as $\lambda_k = 1- \sqrt{1-\alpha_k} \approx \frac{\alpha_k}{2}$ it follows that our proposed re-parametrisation converges to the same SDE as before, but now with stabler and simpler updates:
\begin{align}
 y_{k+1,n}&\!=\!\sqrt{1\!\!-\!\alpha_{K\!-\!k}} y_{k,n}+\sigma^2\alpha_{K-k}\tilde{f}_\theta(K\!\!-k,y_{k,n}\!) +\sigma \!\sqrt{\alpha_{K-k}} \varepsilon_{k,n},\; \varepsilon_{k,n}\! \overset{\textup{i.i.d.}}{\sim}\! \mathcal{N}(0,I),\\
r_{k+1,n} &= r_{k,n} + {2^{-1}\sigma^2 \alpha_{K-k}} || \tilde{f}_\theta(K-k,y_{k,n})||^2.
\end{align}

\subsection{Tuning Hyper-parameters}\label{apdx:tune}

For both DDS and PIS we explore a grid of 25 hyper-parameter values. For PIS we search for the best performing value of the volatility coefficient over $25$ values, depending on the task we vary the end points of the drift however we noticed that PIS leads to numerical instabilities for $\gamma > 4$ thus we never search for values larger than this. For DDS we searched across $5$ values for $\sigma$ and $5$ values for $\alpha_{\max}$, this led to a total of $25$ combinations we explored for each experiment. Finally for SMC we searched over 3 of its different step sizes  leading to a total of $5^3=125$.

\subsection{Brownian Motion Model With Gaussian Observation Noise} \label{apdx:brownian}

The model is given by
\begin{align*}
    \alpha_{\mathrm{inn}} &\sim \mathrm{LogNormal}(0,2),\\
    \alpha_{\mathrm{obs}} &\sim \mathrm{LogNormal}(0,2) ,\\
    x_1 &\sim \gN(0, \alpha_{\mathrm{inn}}),\\
    x_i &\sim \gN(x_{i-1}, \alpha_{\mathrm{inn}}), \quad i=2,\hdots 20,\\
    y_i &\sim \gN(x_{i}, \alpha_{\mathrm{obs}}), \quad i=1,\hdots 30.
\end{align*}

The goal is to perform inference over the variables $\alpha_{\mathrm{inn}}, \alpha_{\mathrm{obs}}$ and $\{x_i\}_{i=1}^{30}$ given the observations $\{y_i\}_{i=1}^{10} \cup \{y_i\}_{i=20}^{30}$.

\subsection{Drift Magnitude}
\begin{figure*}[t!]
\centering
    \includegraphics[width=0.6\linewidth ]{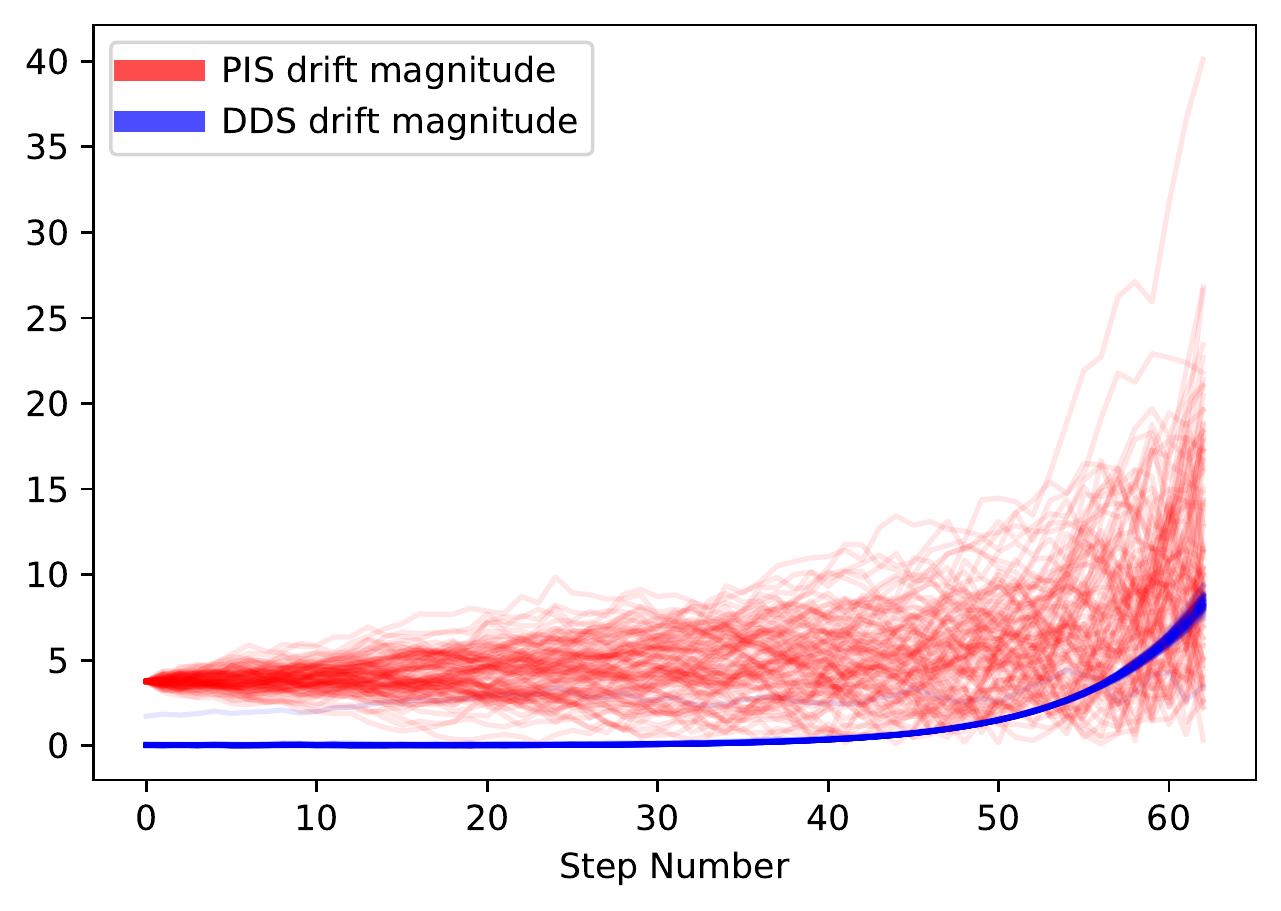}
    \caption{Magnitude of the learnt neural net approximation of the drift $\nabla_x\ln \phi_{t}(x)$ (see (\ref{eq:drifteq})) as a function of $t$.} \label{fig:driftmag}
\end{figure*}

In Figure \ref{fig:driftmag} we compare the magnitudes of the NN drifts learned by PIS and DDS on a scalar target $\gN(6, \sigma^2)$ target. The drifts where sampled on randomly evaluated trajectories from each of the learned samplers. We observe the PIS drift to have large magnitudes with higher variance more prone to numerical instabilities, whilst the OU drift has a notably lower magnitude with significantly less variance. We conjecture this is the reason that PIS becomes unstable at training time for a large number of steps across several of our PIS experiments, many of which did not converge due numerical instabilities. For simplicity in this experiment we set $\alpha_k$ to be uniform.

\subsection{Training Time} \label{sec:training_time}

We evaluate the training times of DDS and PIS on an 8-chip TPU circuit and average over 99 runs. Table \ref{fig:training_time} shows running times per training iteration. In total we trained for 11000 iterations thus the total training times are  19 minutes; 37 minutes; 1 hour and 14 minutes; 2 hours and 26 minutes for $k=64,128,256,512$ respectively.
\begin{table}[]
\centering
\begin{tabular}{@{}ccccc@{}}
\toprule
    & \multicolumn{4}{c}{Training Times per Iteration - seconds (LGCP)}                                \\ \cmidrule(l){1-5} 
    & $k=64$             & $k=128$            & $k=256$            & $k=512$                           \\ \cmidrule(l){2-5} 
DDS & $0.104 \pm 0.0004$ & $0.205 \pm 0.0004$ & $0.406 \pm 0.0004$ & $0.809 \pm 0.0004$ \\
PIS & $0.104 \pm 0.0004$ & $0.205 \pm 0.0003$ & $0.406 \pm 0.0005$ & $0.808 \pm 0.0006$                \\ \bottomrule
\end{tabular} 
\caption{Training time per ELBO gradient update on 300 samples. \label{fig:training_time}}
\end{table}

\subsection{Variational Encoder in Latent Space} \label{sec:vae_exp}

\begin{figure*}[t!]
    \centering
    \includegraphics[width=0.46\linewidth ]{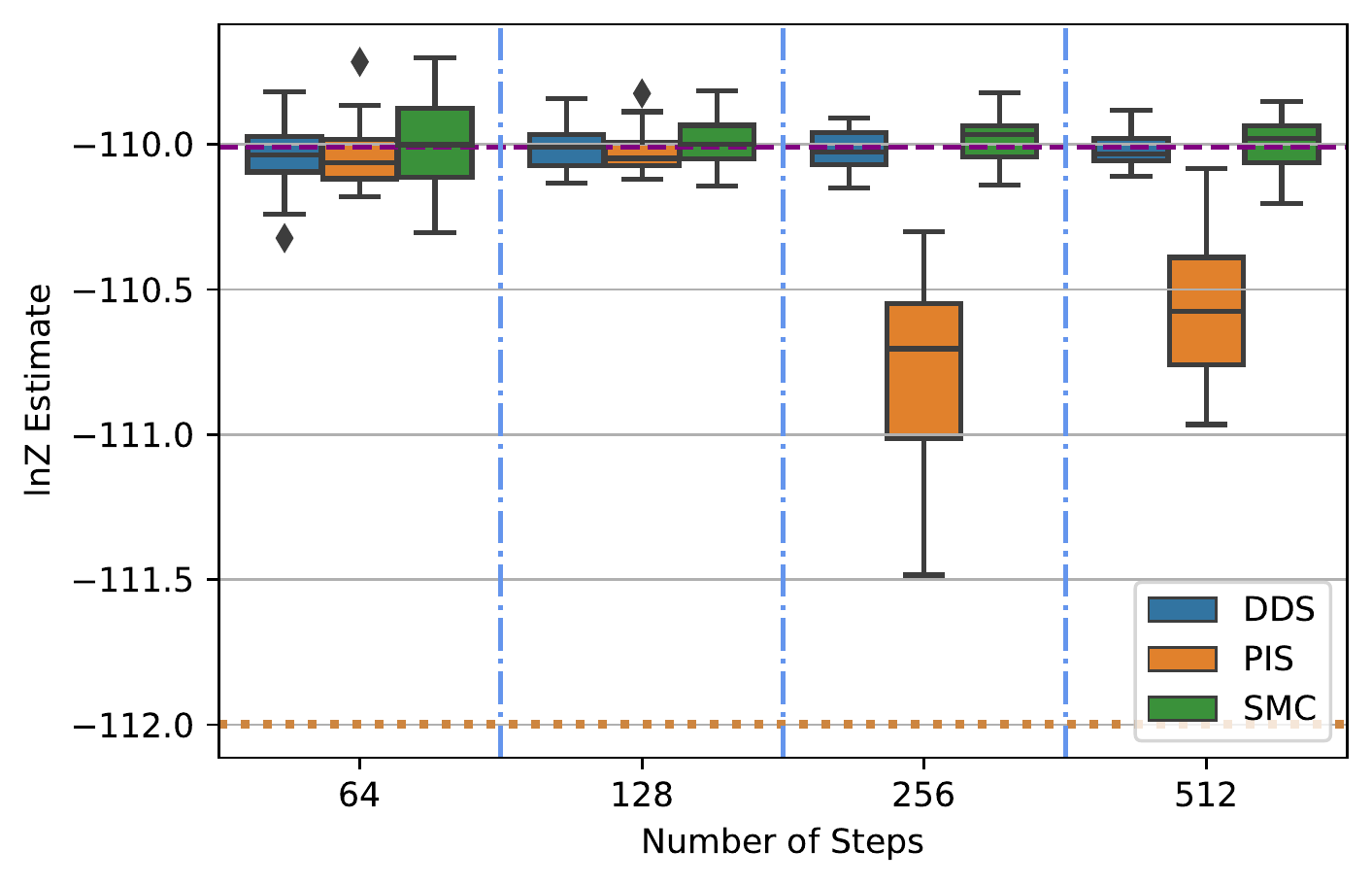}
    \caption{Results on pretrained VAE from \cite{arbel2021annealed}.}
 \label{fig:vae}
\end{figure*}

Following \cite{arbel2021annealed} we explore estimating the posterior of a pre-trained Variational Auto Encoder (VAE) \citep{kingma2013auto}. We found all approaches to reach a very small error for $\ln Z$ very quickly however as seen with PIS across experiments results became unstable for large $T$ and thus many hyper-parameters when tuning the volatility for PIS led to the loss exploding.

\subsection{Generated Images}\label{app:mnist}

In this section we provide some of the generated samples for the normalising flow evaluation task. In Figure \ref{fig:mnist_img} we can observe how both PIS and DDS are able to generate more image classes than SMC due to mode collapse. Out of the 3 approaches we can see that DDS mode collapses the least whilst SMC generates the higher quality images. Using a neural network with inductive biases such as a convolutional neural network \citep{lecun1998gradient} should improve the image quality over the simple feed-forward networks we have used, we believe this is the reason behind the low quality in the images as both minimising the path KL and sampling high quality images is a difficult task for a small feed-forward network.

\begin{figure*}[t!]
    \centering
    \begin{minipage}{0.32\linewidth }
    \centering
    \includegraphics[width=\linewidth ]{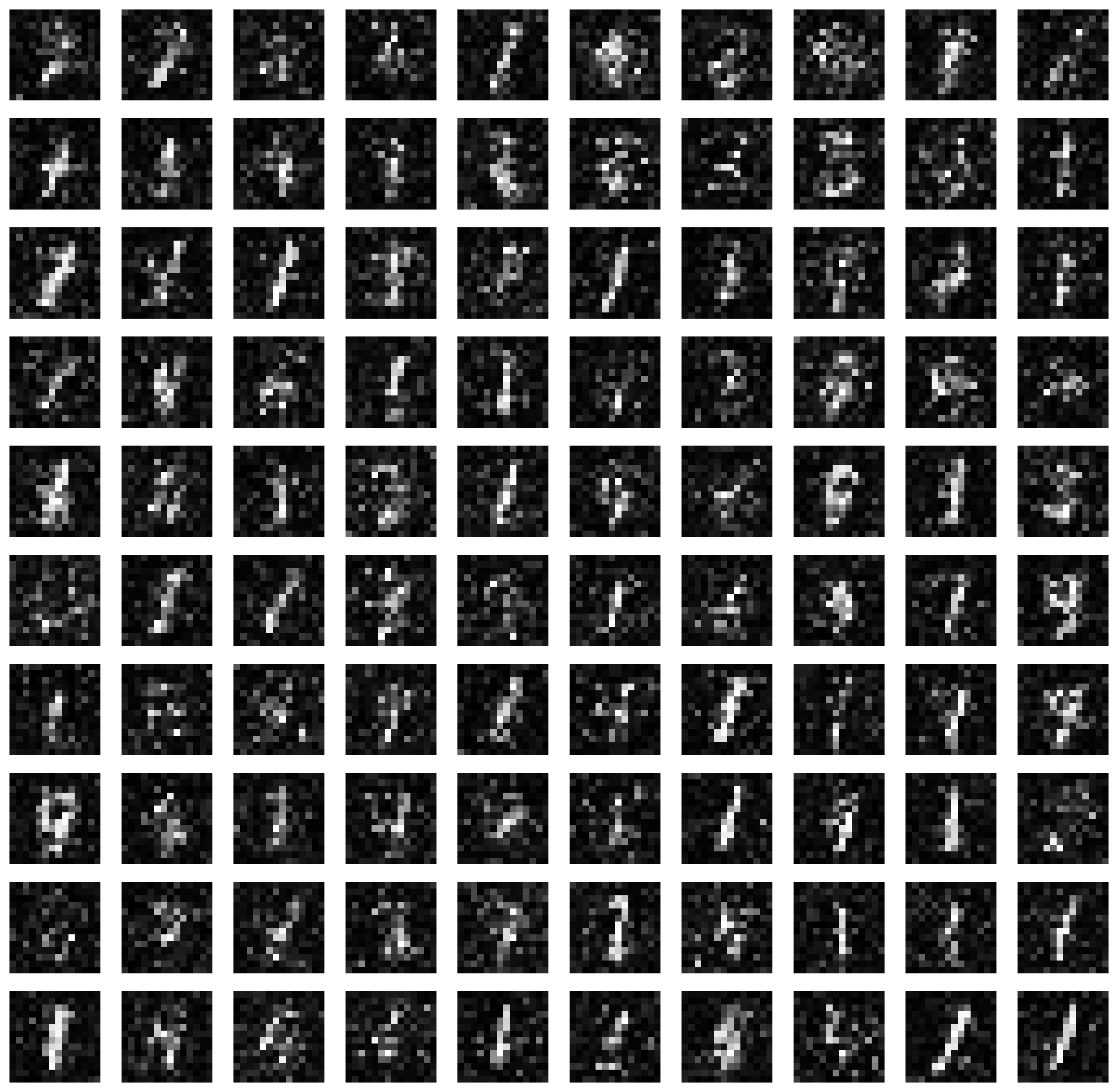}
    \end{minipage}
    \hspace*{\fill}\begin{minipage}{0.32\linewidth }
    \centering
    \includegraphics[width=\linewidth ]{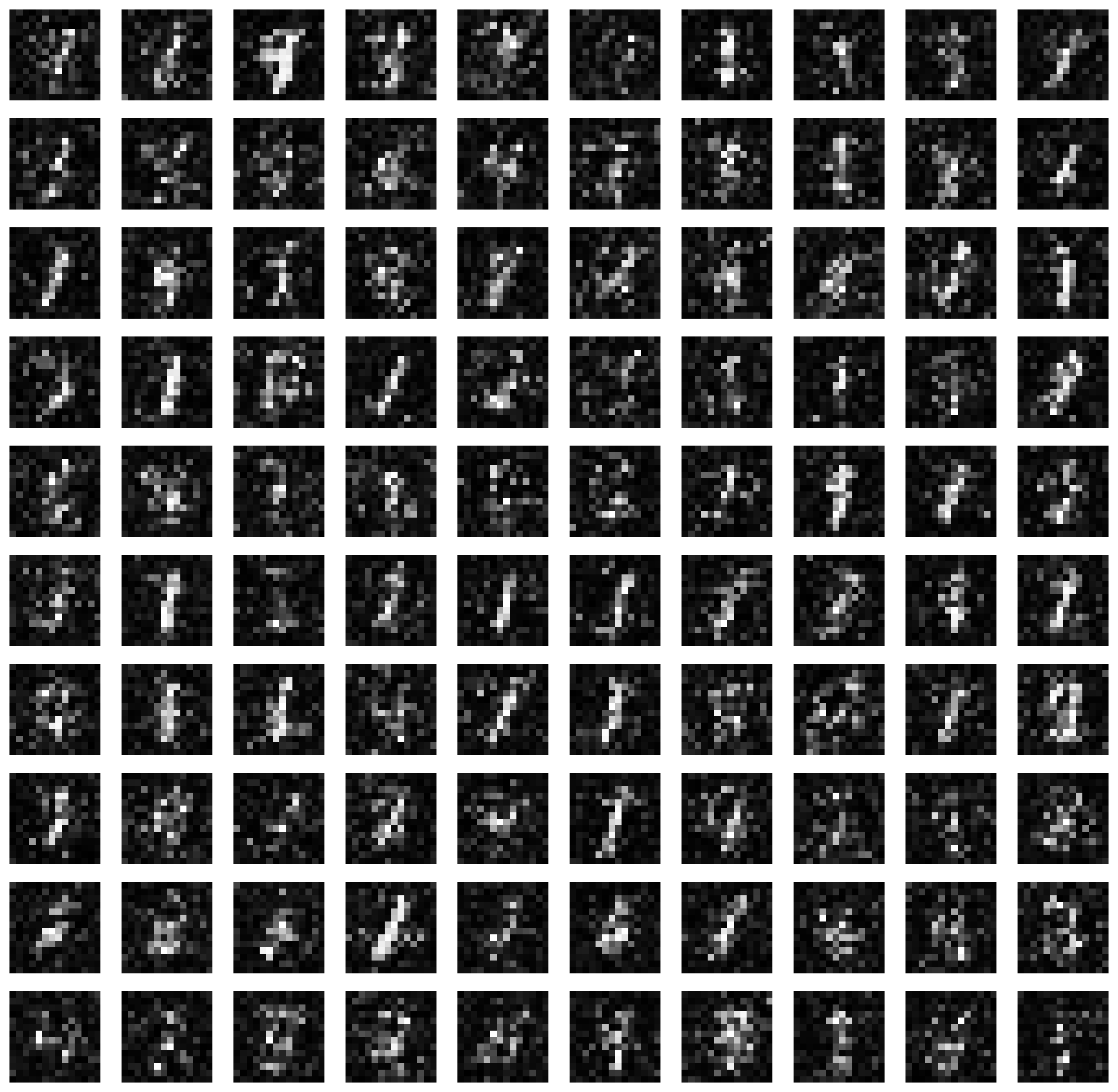}
    \end{minipage}
    \hspace*{\fill}\begin{minipage}{0.32\linewidth }
    \centering
    \includegraphics[width=\linewidth ]{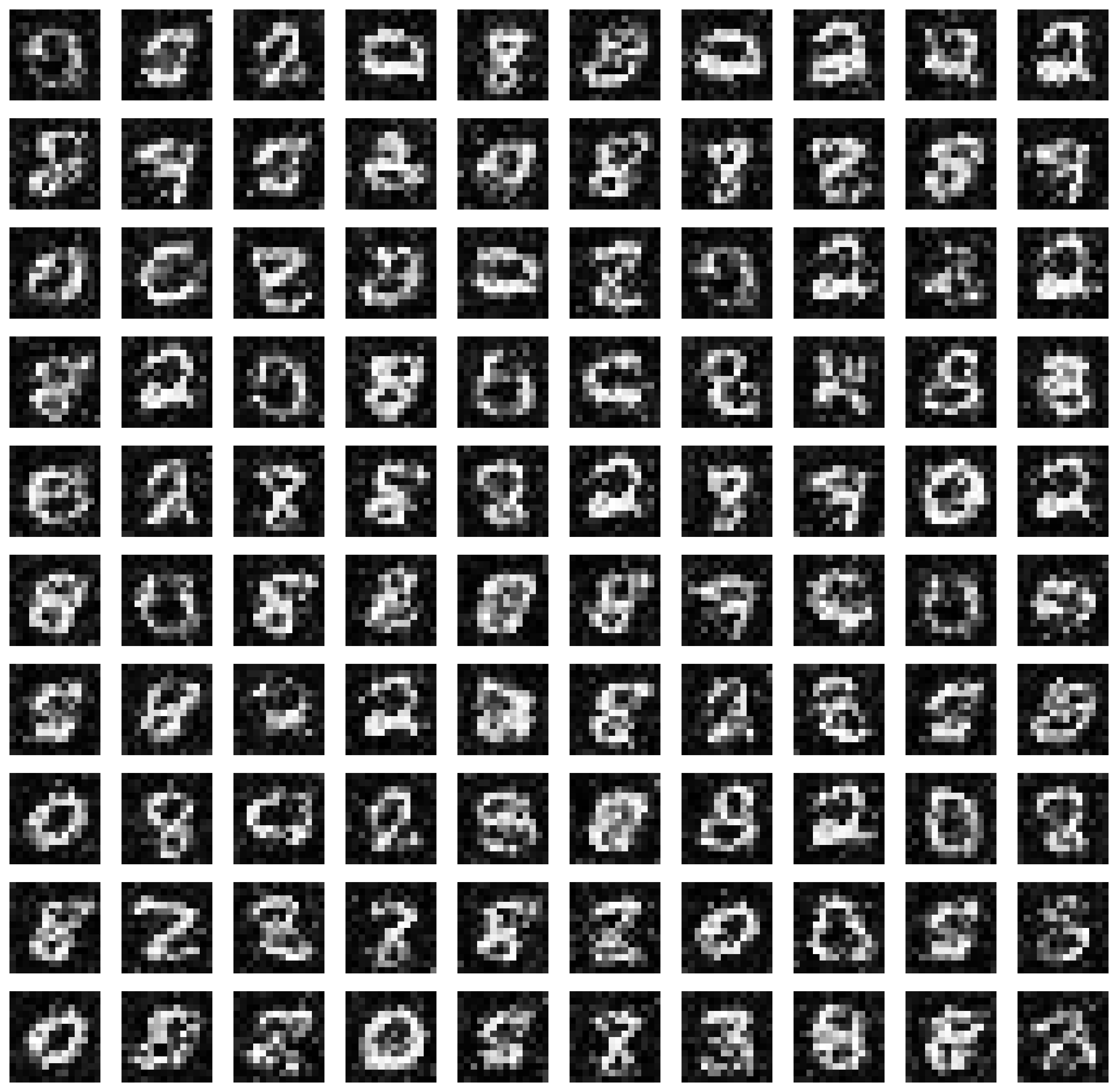}
    \end{minipage}
    \caption{a) DDS , b) PIS , c) SMC. We can see how SMC mode collapses significantly as it only seems to have 5 image classes. For PIS we identified 7 classes whilst for DDS we found 9 out of 10.} \label{fig:mnist_img}
\end{figure*}

\subsection{Euler--Mayurama vs Exponential Integrator}

In this section we demonstrate empirically how using the Euler Mayurama integrator directly on the DDS objective leads to overestimating $\ln Z$. We compare on the Funnel dataset for which we know $\ln Z$ to be $0$ and the Ionosphere dataset which is small and for which we have a reliable estimate of $\ln Z$ via a long run SMC. Results in Table \ref{fig:euvsexp} show case how for lower values of $k=64,128$ the Euler--Maruyama (EM) based approach significantly overestimates $\ln Z$ and overall does not perform well.  

\begin{table}[]
\adjustbox{max width=\textwidth}{
\begin{tabular}{@{}llccccc@{}}
\toprule
\multirow{2}{*}{} & \multicolumn{3}{c}{Funnel}                                                                             & \multicolumn{3}{c}{Ionosphere}                                                                                      \\ \cmidrule(l){2-7} 
                  & Exponential Integrator                 & Euler Mayurama                         & Ground Truth         & \multicolumn{1}{l}{Exponential Integrator} & Euler Mayurama                           & Ground Truth                \\ \midrule
$k=64$            & $-0.206 \pm 0.059$                     & $2.425 \pm 0.266$                      & \multirow{4}{*}{$0$} & $-111.693 \pm 0.169$                       & $-106.364 \pm 0.371$                     & \multirow{4}{*}{$-111.560$} \\
$k=128$           & $-0.176\pm0.099$ & $2.011\pm 0.532$                       &                      & $-111.587 \pm 0.136$                       & $-111.158\pm 0.399$                      &                             \\
$k=256$           & $-0.221 \pm 0.076$                     & \multicolumn{1}{l}{$-0.086 \pm 0.124$} &                      & $-111.575\pm 0.163$    & \multicolumn{1}{l}{$-112.271 \pm 0.366$} &                             \\
$k=512$           & $-0.176 \pm 0.068$                     & $-0.154 \pm 0.066$                     &                      & $-111.582\pm 0.136$                        & $-112.912 \pm 0.353$                     &                             \\ \bottomrule
\end{tabular}
}
\caption{Comparing EM vs Exponential integrators on Funnel and Ionosphere datasets. We can see how EM significantly overestimates $\ln Z$.\label{fig:euvsexp}}
\end{table}

\subsubsection{Detached Gradient Ablations} \label{sec:detach_exp}

In this section we perform an ablation over our proposed modification of the PIS-Grad network architecture. We compare the same architecture with and without the gradient attached. We found that detaching gradient led to a favourable performance as well as more stable training. Results are presented in table \ref{tab:detach} and were carried out for $K=128$ and using the best found diffusion coefficient from each task.  

We chose to explore this feature as it is known that optimization through unrolled computation graphs can be chaotic and introduce exploding/vanishing gradients~\citep{parmas2018pipps, metz2020using} which can numerically introduce bias. This has been successfully applied in related prior work~\citep{greff2019multi} where detached scores are provided as features to the recognition networks. Alternative approaches exist based on smoothing the chaotic loss-landscape~\citep{vicol2021unbiased} but we leave this for future work.

\if\cossqablation1
\subsection{Cosine based decay for PIS}

Unlike DDS, with PIS it is not immediately clear how to schedule the step-sizes in a similar manner to \cite{nichol2021improved}. The simplest approach would be to schedule $\delta$ (the discretisation step). We perform an ablation with the cosine schedule on the discretisation step with PIS.  Results are displayed in Table \ref{tab:cos_sq}, we can see that overall the cosine discretisation decreases or preserves the performance of PIS with the exception of the Sonar data-set in which it increases. Whilst there may be a way to improve PIS with a bespoke discretisation we were unable to find such in this work. Additionally we can see from Table \ref{tab:cos_sq} how DDS improves significantly across every task when using the cosine squared schedule compared to uniform.

\begin{table}[t]
\centering
\adjustbox{max width=\textwidth}{
\begin{tabular}{@{}llllllll@{}}
\toprule
Method          & Funnel             & LGCP                & VAE                   & Sonar                & Ionosphere           & Brownian           & NICE               \\ \midrule
PIS Cos decay & $-0.256 \pm 0.094$  & $501.060 \pm 0.881$ & $-110.025 \pm  0.0720$  & $-108.843 \pm 0.242$ & $-111.618 \pm 0.127$ & $1.035 \pm 0.097$  & $-4.212 \pm 0.620$ \\
PIS$_{\mathrm{uniform}}$ & $-0.268 \pm 0.07$  & $502.002 \pm 0.957$ & $-110.025 \pm0.070$  & $-109.349 \pm 0.356$ & $-111.602 \pm 0.145$ & $0.960 \pm 0.145$  & $-3.985 \pm 0.349$                 \\ 
DDS & $-0.176 \pm  0.098$  & $497.944 \pm 0.994$ & $-110.012 \pm 0.071$  & $-108.903 \pm 0.226$ & $-111.587 \pm 0.136$ & $1.047 \pm 0.156$  & $-3.490 \pm 0.568$                 \\ 
DDS$_{\mathrm{uniform}}$ & $-0.275 \pm  0.127$  & $477.680 \pm 1.338$ & $-110.245 \pm 0.188$  & $-109.556 \pm 0.468$ & $-111.736 \pm 0.161$ & $0.475 \pm 0.166$  & $-7.047 \pm 0.691$                 \\ \bottomrule
\end{tabular}\label{tab:cos_sq}
}
\caption{Ablation for $\cos^2$ scheduling.}
\end{table}

\fi

\begin{table}[t]
\centering
\adjustbox{max width=\textwidth}{
\begin{tabular}{@{}llllllll@{}}
\toprule
Method          & Funnel             & LGCP                & VAE                   & Sonar                & Ionosphere           & Brownian           & NICE               \\ \midrule
PIS Grad Detach & $-0.268 \pm 0.07$  & $502.002 \pm 0.957$ & $-110.025 \pm0.0704$  & $-109.349 \pm 0.356$ & $-111.602 \pm 0.145$ & $0.960 \pm 0.145$  & $-3.985 \pm 0.349$ \\
PIS Grad        & $-0.268 \pm  0.09$ & $501.273 \pm 0.798$ & $-110.033 \pm 0.0647$ & $-109.377 \pm 0.361$ & $-111.705 \pm 0.190$ & $-5.793 \pm 0.499$ & $-3.927 \pm  0.666$                 \\ \bottomrule
\end{tabular}
}
\caption{Results for PIS grad with and without detaching the gradient. \label{tab:detach}}
\end{table}

\subsection{Probability Flow ODE}\label{app:probaflowODE}

Figures \ref{fig:prob_flow_mix} and \ref{fig:flow_ode_simple} show samples obtained from the probability flow ODE of a trained DDS model. We can see that the probability flow ODE is able to perfectly sample from a uni-modal Gaussian, matters became more challenging with multi-modal distributions.  Initially we discretised the ODE (\ref{eq:ODEproposal}) using the same type of integrators as the SDEs to obtain $ y_{k+1} = y_{k} +  \delta \sigma^2 (1 - \sqrt{1-\alpha_{K-k} }) f_\theta (K-k, y_{k})$ for $y_0 \sim \mathcal{N}(0;\sigma^2 I)$. Unfortunately under this discretisation we found the probability flow ODE to become stiff with more complex distributions such as the mixture of Gaussians, resulting in strange effects in the samples (matching the modes but completely wrong shapes). By using a Heun integrator as proposed in \cite{karras2022elucidating} we were able to obtain better results, however we also found we needed to increase the network size to improve the results as well as train with a learning rate decay of $0.99$. With these extra features we were able to simulate a probability flow ODE that roughly matched the marginal densities of the SDE. However even with these fixes we still found the probability flow based estimator of $\ln Z$ to drastically overestimate, this is not entirely surprising as in discrete time the expectation of this estimator is not guaranteed to be an ELBO.

\subsection{Underdamped OU Results} \label{apdx:undexp}

We perform some additional experiments with the underdamped OU reference process. Similar to the damped setting we parametrise the network as:
\begin{align}
    f_\theta(k, x, p) = \mathrm{NN}_1(k, x, p; \theta)  + \mathrm{NN}_2(k; \theta) \nabla \ln \pi(x).
\end{align}
Unfortunately, unlike in DDS and PIS we cannot directly aid the update/proposal for $x_t$ with $ \nabla \ln \pi(x)$, thus this naive parametrisation is not the ideal inductive bias.

Results in Figures \ref{fig:benchmark_targets_udmp} and \ref{fig:logreg_targets_udmp} show some experiments for this approach. The results perform worse than DDS for a standard OU reference process and PIS. However they are still better than VI. We believe that future work exploring better inductive biases for the network $f_\theta(k, x, p)$ could narrow the performance gap in this approach. Additionally we highlight that in Figure \ref{fig:benchmark_targets_udmp} (Funnel Distribution) we can see the underdamped approach overestimates $\ln Z$ for $k=64$. We believe this may be due to numerical error when generating a sample.

\begin{figure}
    \centering
    \includegraphics[width=\linewidth]{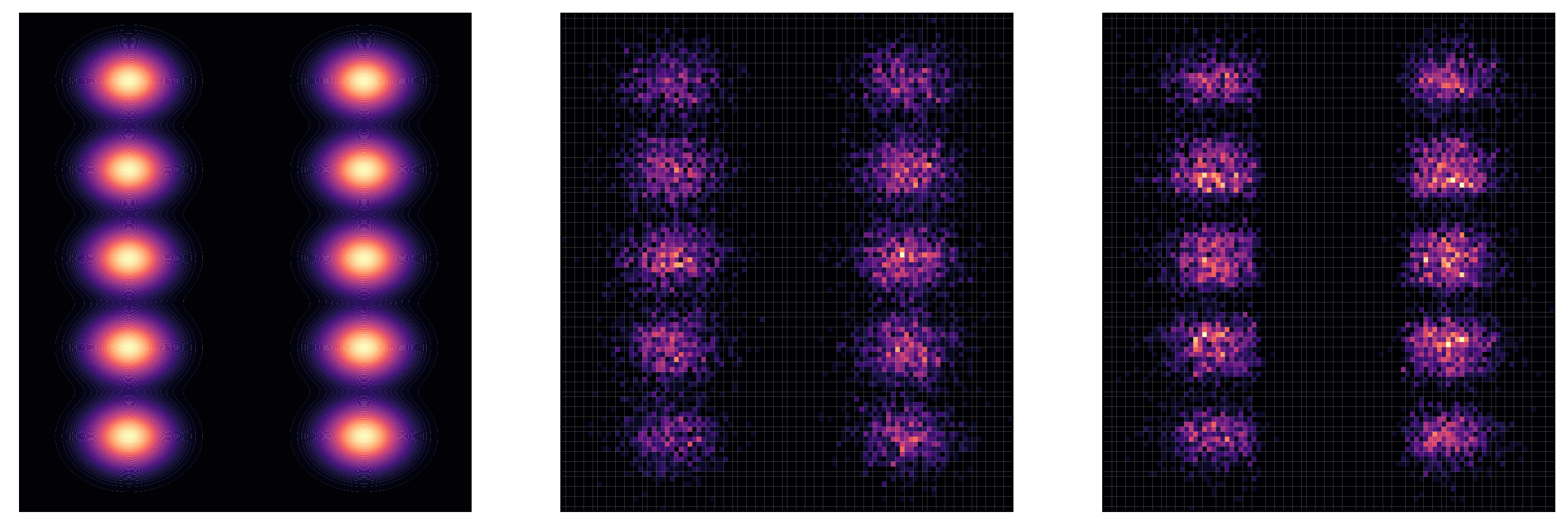}
    \caption{a) Target distribution (MoG), b) DDS SDE samples, c) trained probability flow ODE.}
    \label{fig:prob_flow_mix}
\end{figure}

\subsection{Box Plots for Main Results}

For completeness in this section we also present our results via box plots as can be seen in Figures \ref{fig:base_targets_old} and \ref{fig:logreg_targets_old}.

\section{Denoising Diffusion Samplers in Discrete-time: Further details} \label{apdx:discrete}

  \subsection{Forward Process and Its Time Reversal}\label{subsec:DDS-DT}
In discrete-time, we consider the following discrete-time ``forward'' Markov process 
\begin{equation}\label{eq:forwardprocessP}
    p(x_{0:K})=\pi(x_0) \prod_{k=1}^K p_{k|k-1}(x_k|x_{k-1}).
\end{equation}
Following (\ref{eq:OUrefintegrated}), we use $p_{k|k-1}(x_k|x_{k-1})=\mathcal{N}(x_k;\sqrt{1-\alpha_{k}}x_{k-1},\sigma^2 \alpha_k I)$. The parameters $(\alpha_{k})_{k=1}^K$ are such that $p_K(x_K) \approx \mathcal{N}(x_K;0,\sigma^ 2 I)$.

We propose here to sample approximately from $\pi$ by approximating the ancestral sampling scheme for (\ref{eq:forwardprocessP}) corresponding to the backward decomposition
\begin{equation}\label{eq:backwardprocessP}
    p(x_{0:K})=p_K(x_K)\prod_{k=1}^{K} p_{k-1|k}(x_{k-1}|x_{k}),~~\text{with}~~ p_{k-1|k}(x_{k-1}|x_{k})=\frac{p_{k-1}(x_{k-1})p_{k|k-1}(x_{k}|x_{k-1})}{p_{k}(x_{k})},
\end{equation}
where $p_0(x_0)=\pi(x_0)$. If we could thus sample $x_K \sim p_K(\cdot)$ then $x_{k-1} \sim p_{k-1|k}(\cdot |x_{k})$ for $k=K,...,1$ then $x_0$ would indeed be a sample from $\pi$. However as the marginal densities $(p_k)_{k=1}^K$ are not available in closed-form, we cannot implement exactly this ancestral sampling procedure. 

As we have by design $p_K(x_K) \approx \mathcal{N}(x_K;0,\sigma^2 I)$, we can simply initialize the ancestral sampling procedure by a Gaussian sample. The approximation of the backward Markov kernels $p_{k-1|k}(x_{k-1}|x_{k})$ is however more involved.

\subsection{Reference Process and Bellman Recursion}
We introduce the ``simple'' reference process $p^{\textup{ref}}(x_{0:K})$ defined by
\begin{equation}\label{eq:processPref}
    p^{\textup{ref}}(x_{0:K})=\mathcal{N}(x_0;0,\sigma^2 I)p(x_{1:K}|x_0)= \mathcal{N}(x_0;0,\sigma^2 I)\prod_{k=1}^K p_{k|k-1}(x_k|x_{k-1})
\end{equation}
which is designed by construction to admits the marginal distributions $ p^{\textup{ref}}_k(x_k)=\mathcal{N}(x_k;0,\sigma^2 I)$ for all $k$. It can be easily verified using the chain rule for KL that the extended target process $p$ is the distribution minimizing the reverse (or forward) KL discrepancy w.r.t. $p^{\textup{ref}}$ over the set of path measures $q(x_{0:K})$ with marginal $q_0(x_0)=\pi(x_0)$ at the initial time, i.e.
\begin{figure}
    \centering
    \includegraphics[width=0.9\linewidth]{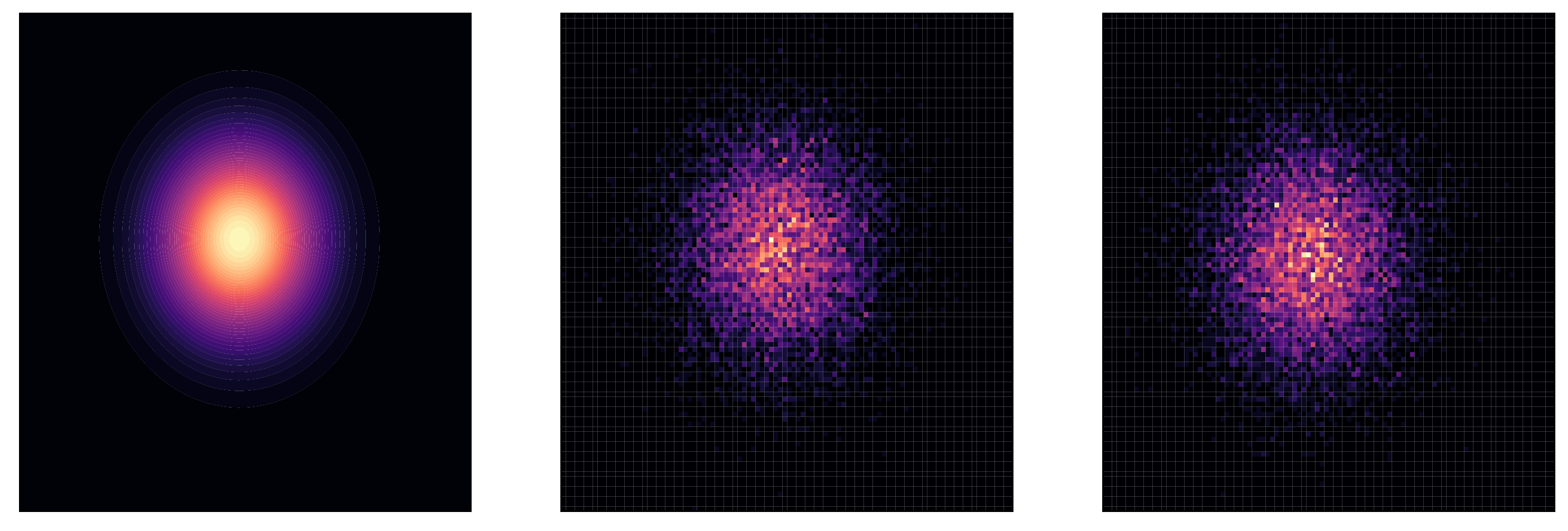}
    \caption{
    a) Target distribution $\gN(6,1)$, b) DDS SDE c) Trained DDS probability flow ODE.}
    \label{fig:flow_ode_simple}
\end{figure}
\begin{figure*}[t!]
    \centering
    \begin{minipage}{0.32\linewidth }
    \centering
    \includegraphics[width=\linewidth ]{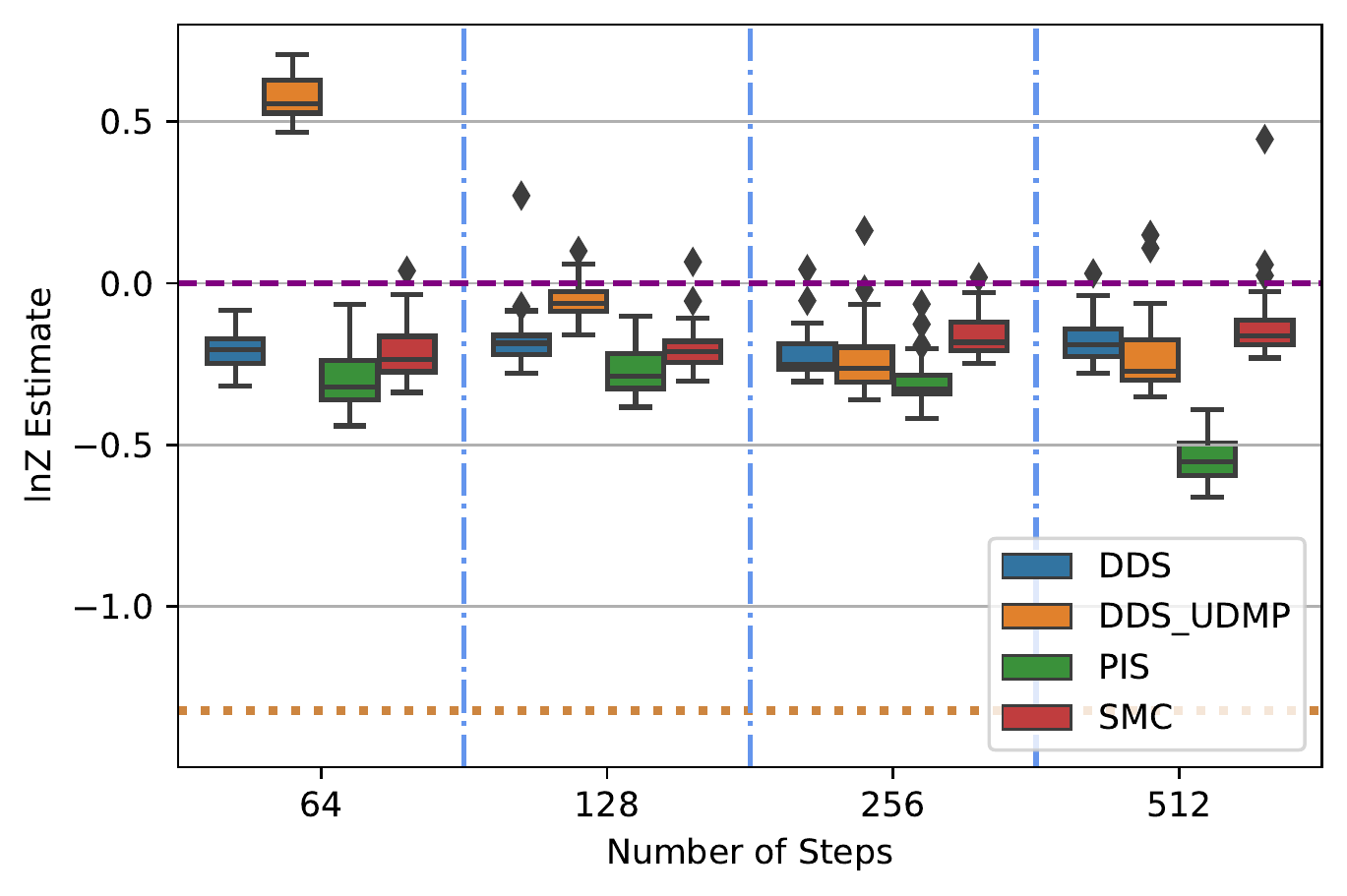}
    \end{minipage}
    \hspace*{\fill}\begin{minipage}{0.32\linewidth }
    \centering
    \includegraphics[width=\linewidth ]{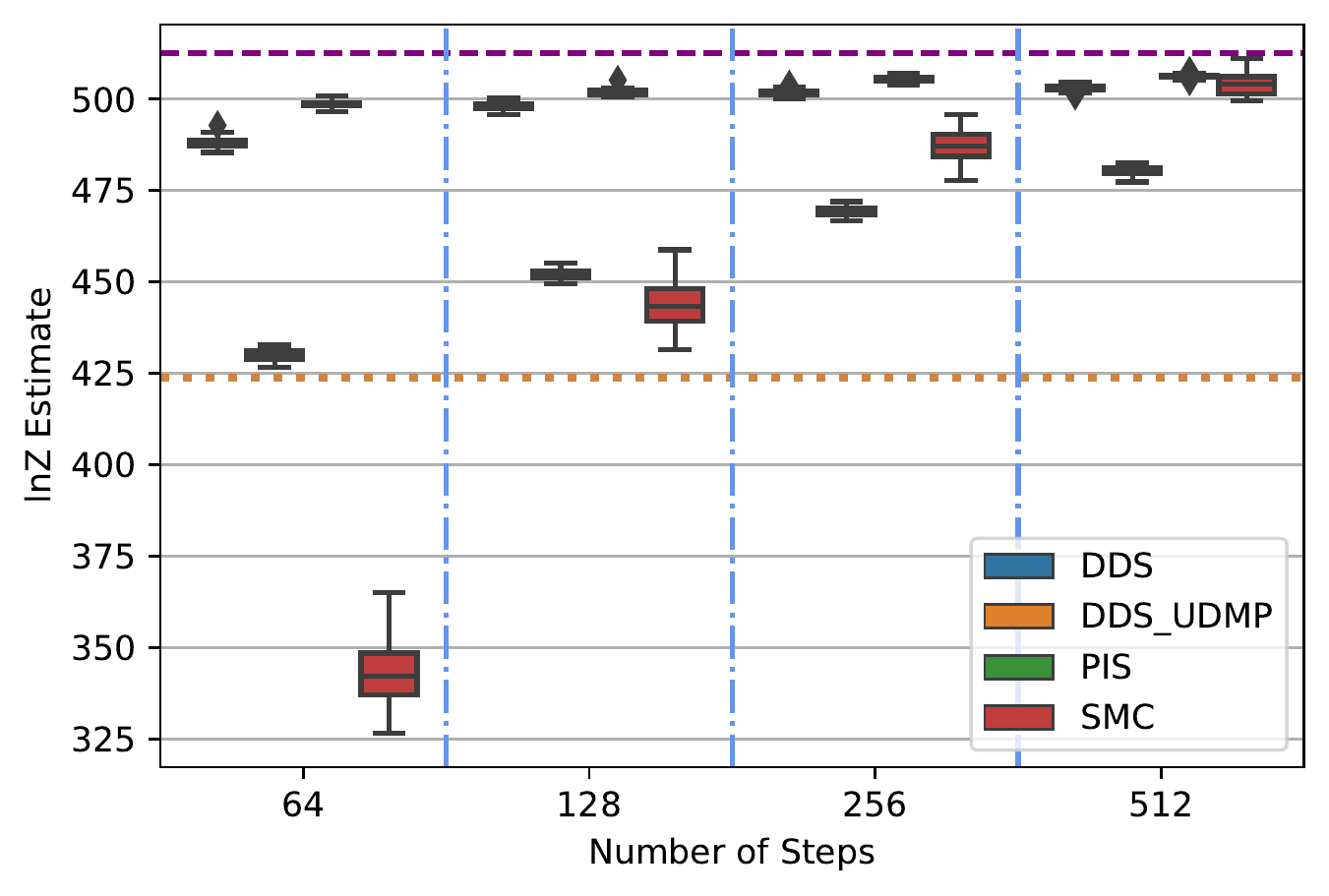}
    \end{minipage}
    \hspace*{\fill}\begin{minipage}{0.32\linewidth }
    \centering
    \includegraphics[width=\linewidth ]{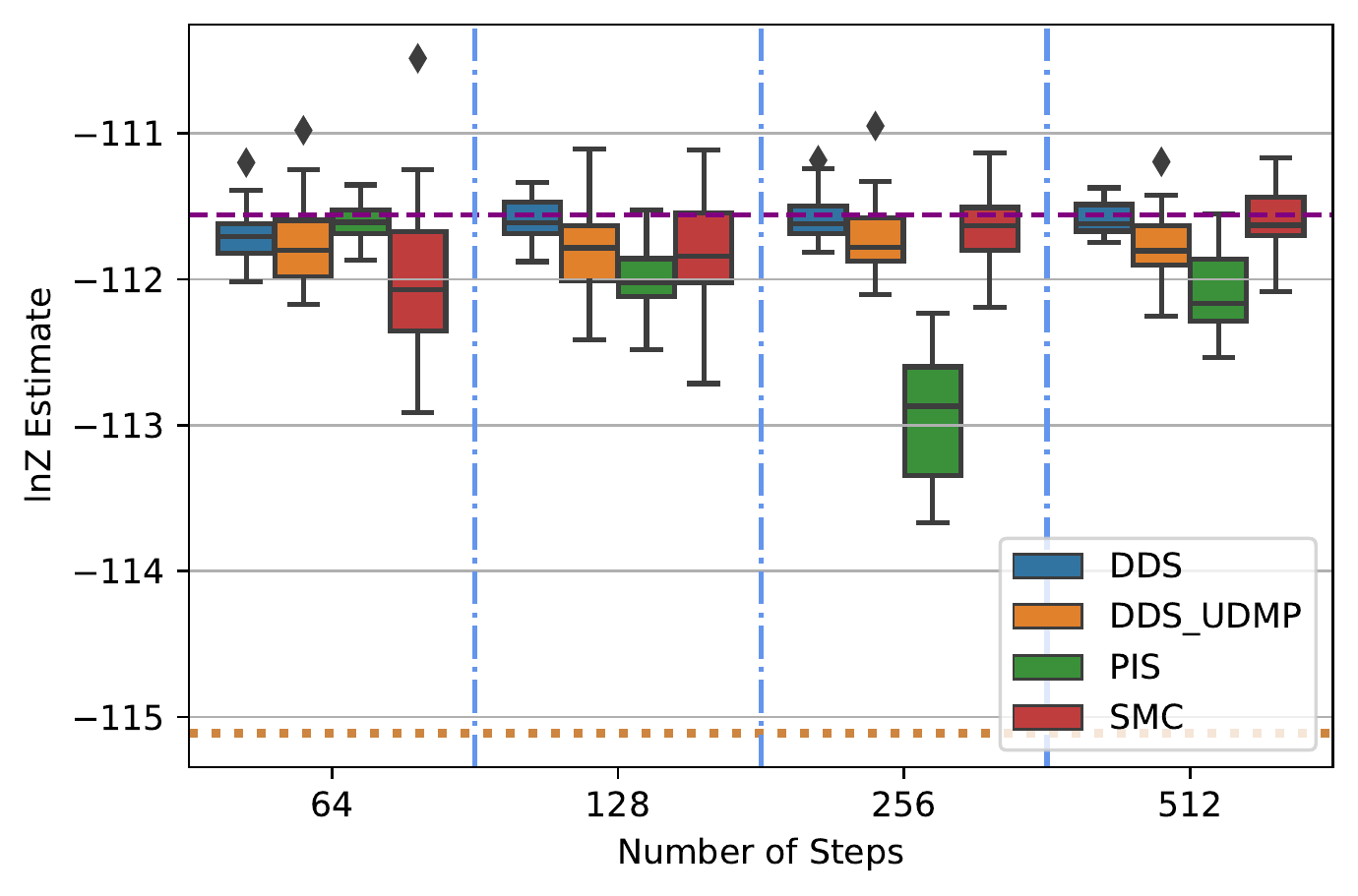}
    \end{minipage}
    \caption{$\log Z$ estimate as a function of number of steps $K$ - a) Funnel , b) LGCP, c) Logistic Ionosphere dataset. Yellow Dotted line is MF-VI and dashed magenta is the gold standard.} \label{fig:benchmark_targets_udmp}
\end{figure*}
\begin{figure*}[t!]
    \centering
    \begin{minipage}{0.33\linewidth }
    \centering
    \includegraphics[width=\linewidth ]{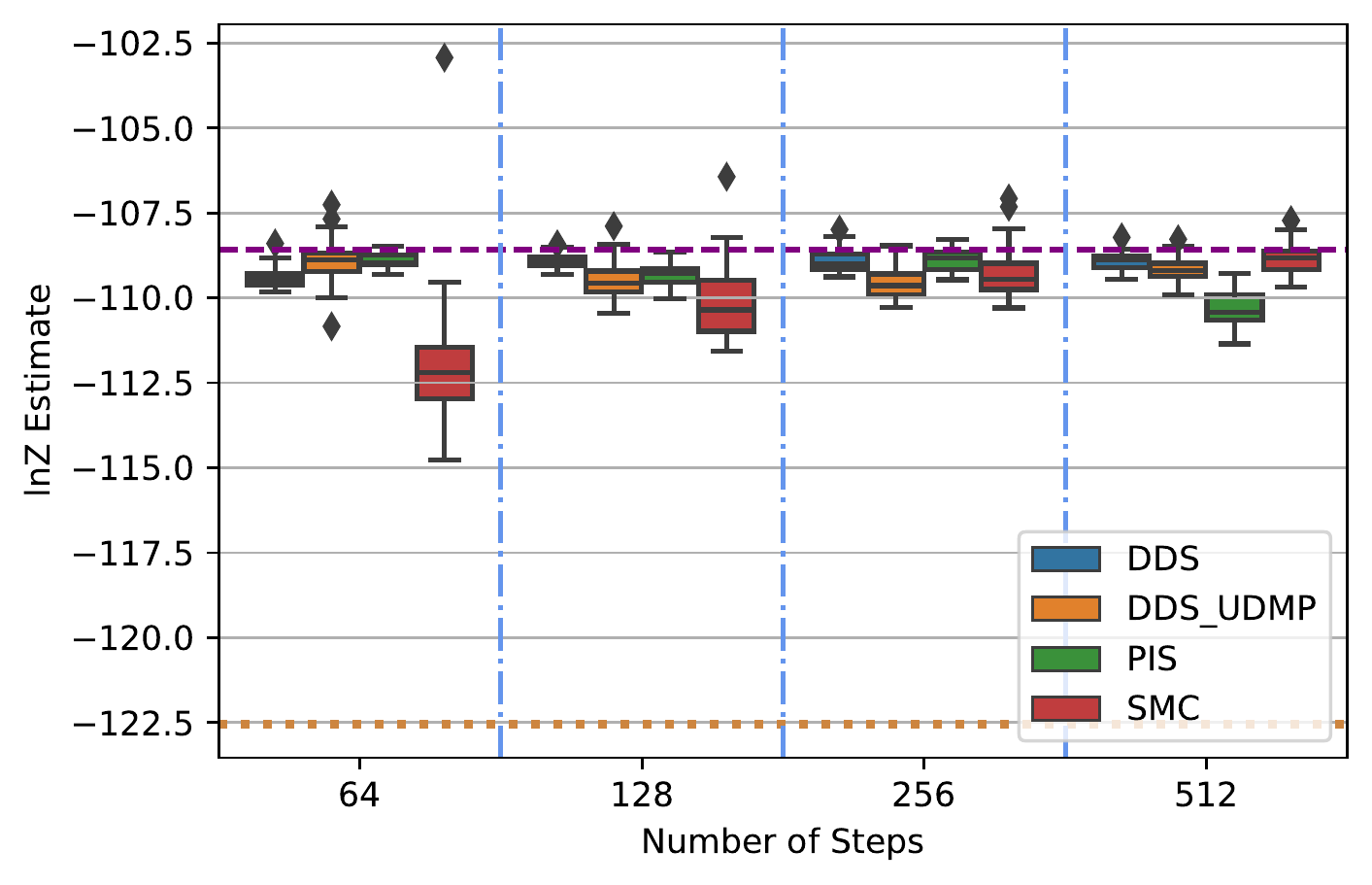}
    \end{minipage}
    \hspace*{\fill}\begin{minipage}{0.32\linewidth }
    \centering
    \includegraphics[width=\linewidth ]{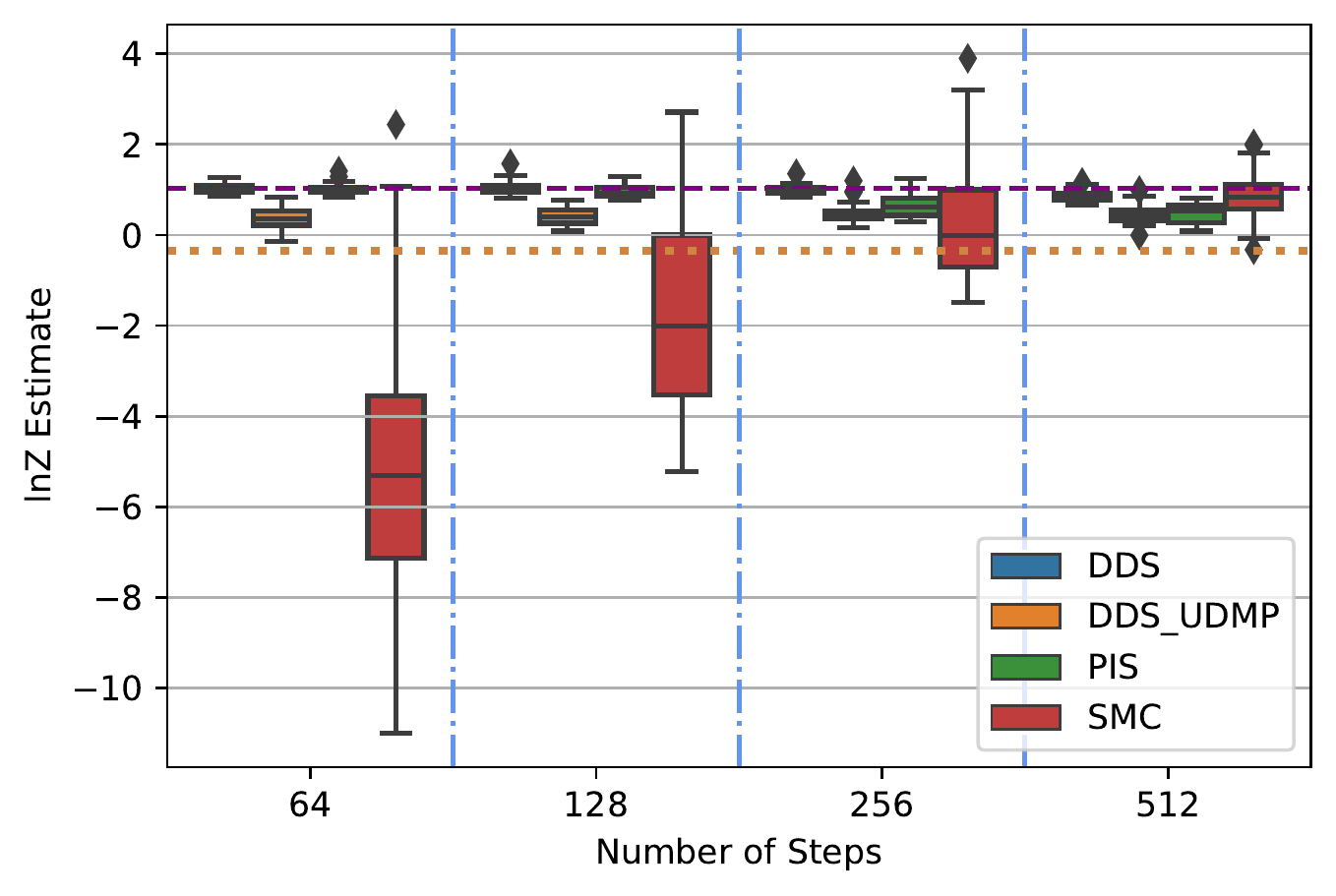}
    \end{minipage}
     \hspace*{\fill}
    \hspace*{\fill}\begin{minipage}{0.32\linewidth }
    \centering
    \includegraphics[width=\linewidth ]{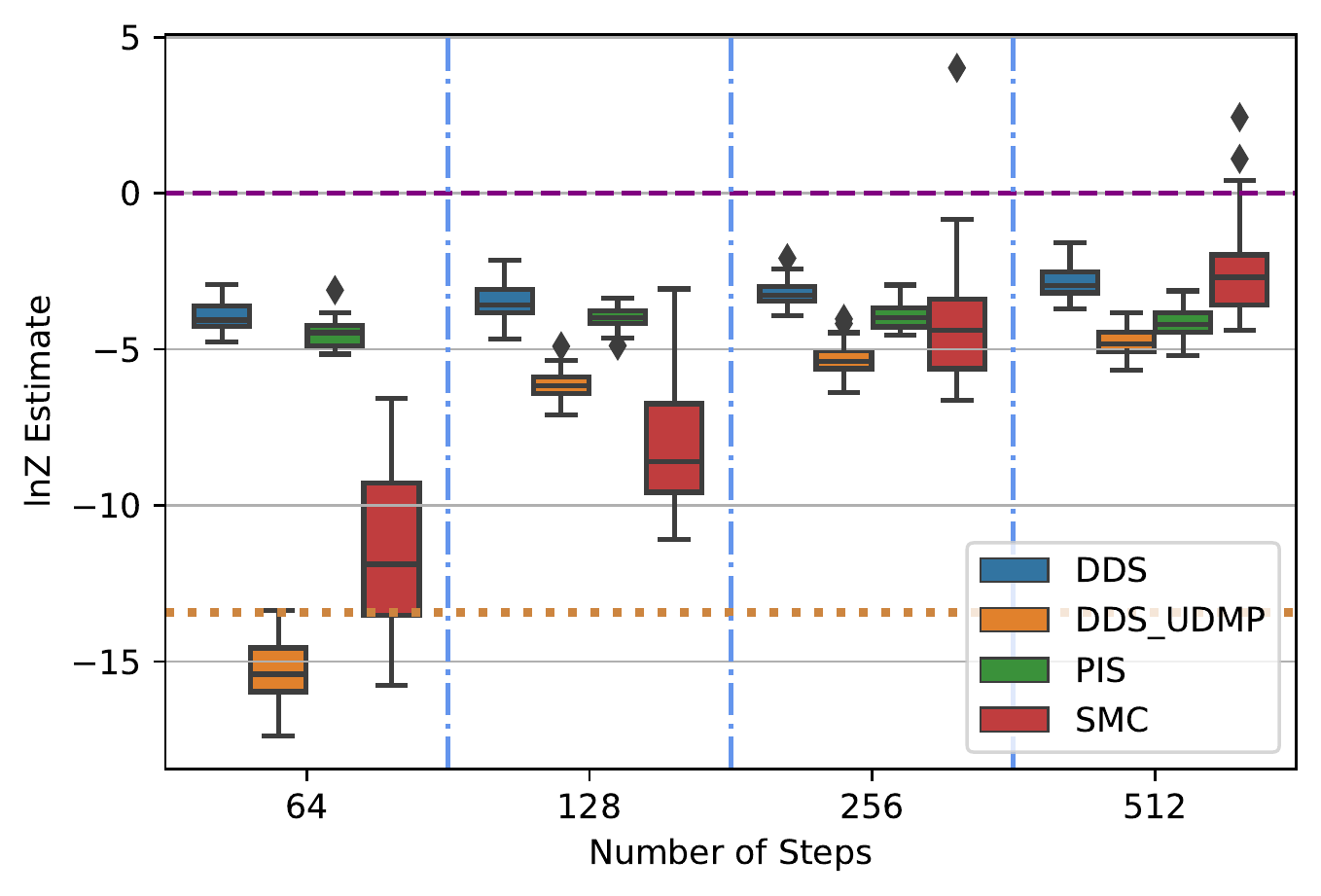}
    \end{minipage}
    
    \caption{$\log Z$ estimate as a function of number of steps $K$ - a) Logistic Sonar dataset, b) Brownian motion, c) NICE. Yellow dotted line is MF-VI and dashed magenta is the gold standard.} \label{fig:logreg_targets_udmp}
\end{figure*}
\begin{equation*}
    p = \argmin_q \Bigl\{\KL(q||p^{\textup{ref}}): q_0=\pi \Bigl\}.
\end{equation*}
The intractable backward Markov densities $p_{k-1|k}$ one wants to approximate can also be rewritten as twisted versions of the tractable backward Markov densities $p^{\textup{ref}}_{k-1|k}$ by some value functions $(\phi_k)_{k=0}^K$.
\begin{proposition}\label{prop:valuefunction}
We have for $k=1,...,K$
\begin{equation}\label{eq:backwardkernelvaluefunctions}
    p_{k-1|k}(x_{k-1}|x_{k})=\frac{\phi_{k-1}(x_{k-1}) p^{\textup{ref}}_{k-1|k}(x_{k-1}|x_{k})}{\phi_{k}(x_{k})},\quad\text{for}\quad \phi_k(x_k)=\frac{p_k(x_k)}{p^{\textup{ref}}_k(x_k)}.
\end{equation}
The value functions $(\phi_k)_{k=1}^K$ satisfy a forward Bellman type equation
\begin{align}\label{eq:valuefunctionBellman}
    \phi_k(x_k)=\int \phi_{k-1}(x_{k-1})p^{\textup{ref}}_{k-1|k}(x_{k-1}|x_{k})\mathrm{d}x_{k-1},\qquad \phi_0(x_0)=\frac{\pi(x_0)}{p^{\textup{ref}}_0(x_0)}.
\end{align}
It follows that
\begin{align}\label{eq:recursionvaluefunction}
    \phi_k(x_k)&= \int \phi_0(x_0)~~p^{\textup{ref}}_{0|k}(x_{0}|x_{k})\mathrm{d}x_{0}.
\end{align}
\end{proposition}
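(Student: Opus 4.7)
The plan is to exploit the fact that $p$ and $p^{\textup{ref}}$ share the same forward transitions $p_{k|k-1}$ by construction, and differ only in their initial condition. This common structure will collapse most of the calculation to a simple ratio argument.

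First I would prove (\ref{eq:backwardkernelvaluefunctions}) by applying Bayes' rule to both $p$ and $p^{\textup{ref}}$ to write
\begin{equation*}
p_{k-1|k}(x_{k-1}|x_k) = \frac{p_{k|k-1}(x_k|x_{k-1}) p_{k-1}(x_{k-1})}{p_k(x_k)},\qquad p^{\textup{ref}}_{k-1|k}(x_{k-1}|x_k) = \frac{p_{k|k-1}(x_k|x_{k-1}) p^{\textup{ref}}_{k-1}(x_{k-1})}{p^{\textup{ref}}_k(x_k)},
\end{equation*}
then take the ratio. The shared transition $p_{k|k-1}(x_k|x_{k-1})$ cancels, leaving exactly $\phi_{k-1}(x_{k-1})/\phi_k(x_k)$, which rearranges to the claimed identity.

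Next I would derive the Bellman recursion (\ref{eq:valuefunctionBellman}) by integrating the identity from the previous step in $x_{k-1}$. Since $p_{k-1|k}(\cdot|x_k)$ is a probability density, its integral is $1$, and moving $\phi_k(x_k)$ out of the integral yields $\phi_k(x_k) = \int \phi_{k-1}(x_{k-1}) p^{\textup{ref}}_{k-1|k}(x_{k-1}|x_k)\,\mathrm{d}x_{k-1}$. The initial condition $\phi_0(x_0)=\pi(x_0)/p^{\textup{ref}}_0(x_0)$ is then immediate from $p_0=\pi$ and the definition of $\phi_k$.

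Finally I would obtain (\ref{eq:recursionvaluefunction}) in one shot rather than by iterating the recursion. Because $p$ and $p^{\textup{ref}}$ share forward transitions, they also share the composed transitions $p_{k|0}=p^{\textup{ref}}_{k|0}$, so $p_k(x_k) = \int \pi(x_0) p^{\textup{ref}}_{k|0}(x_k|x_0)\,\mathrm{d}x_0$. Dividing by $p^{\textup{ref}}_k(x_k)$, inserting the factor $p^{\textup{ref}}_0(x_0)/p^{\textup{ref}}_0(x_0)$, and recognising $p^{\textup{ref}}_0(x_0) p^{\textup{ref}}_{k|0}(x_k|x_0)/p^{\textup{ref}}_k(x_k) = p^{\textup{ref}}_{0|k}(x_0|x_k)$ by Bayes gives the formula. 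There is no real obstacle here: the whole proposition is a bookkeeping exercise of Bayes' rule and the fact that the two path measures differ only in their marginal at time $0$; the only thing to be mildly careful about is implicitly assuming that $p^{\textup{ref}}_k(x_k)>0$ wherever $p_k(x_k)>0$, which holds since $p^{\textup{ref}}_k$ is a non-degenerate Gaussian.
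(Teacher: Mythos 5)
Your proof is correct and, for the first two identities, follows essentially the same route as the paper: Bayes' rule applied to the shared forward transition $p_{k|k-1}=p^{\textup{ref}}_{k|k-1}$ yields (\ref{eq:backwardkernelvaluefunctions}), and integrating that identity over $x_{k-1}$ using the normalisation of $p_{k-1|k}(\cdot|x_k)$ yields (\ref{eq:valuefunctionBellman}). The only divergence is in (\ref{eq:recursionvaluefunction}): the paper obtains it by iterating the Bellman recursion, whereas you derive it in one shot from $p_k(x_k)=\int \pi(x_0)\,p^{\textup{ref}}_{k|0}(x_k|x_0)\,\mathrm{d}x_0$ together with a single application of Bayes' rule to the reference process. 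Both are valid; your version makes transparent the underlying reason the formula holds (the two path measures share all multi-step transitions and differ only in the time-$0$ marginal), at the cost of the positivity caveat on $p^{\textup{ref}}_k$ that you correctly flag, while the paper's induction needs nothing beyond the recursion it has just established.
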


\begin{proof}
To establish (\ref{eq:backwardkernelvaluefunctions}), we use Bayes' rule
\begin{align*}
    p_{k-1|k}(x_{k-1}|x_{k})&=\frac{p_{k-1}(x_{k-1})p_{k|k-1}(x_{k}|x_{k-1})}{p_{k}(x_{k})}\\
    &=\frac{\phi_{k-1}(x_{k-1})p^{\textup{ref}}_{k-1}(x_{k-1})p_{k|k-1}(x_{k}|x_{k-1})}{\phi_{k}(x_{k})p^{\textup{ref}}_{k}(x_{k-1})}\\
    &=\frac{\phi_{k-1}(x_{k-1})p^{\textup{ref}}_{k-1|k}(x_{k-1}|x_{k})}{\phi_{k}(x_{k})},
\end{align*}

where we have used the fact that $p_k(x_k)=\phi_k(x_k)p^{\textup{ref}}_{k}(x_{k})$, $p^{\textup{ref}}_{k|k-1}(x_{k}|x_{k-1})=p_{k|k-1}(x_{k}|x_{k-1})$ and Bayes' rule again.
Now we have $\int p_{k-1|k}(x_{k-1}|x_{k}) \mathrm{d}x_{k-1}=1$ for any $x_k$, so it follows directly from the expression of this transition kernel that the value function satisfies
 \begin{align}\label{eq:valuefunctionBellman2}
    \phi_k(x_k)=\int \phi_{k-1}(x_{k-1})p^{\textup{ref}}_{k-1|k}(x_{k-1}|x_{k})\mathrm{d}x_{k-1},\qquad \phi_0(x_0)=\frac{\pi(x_0)}{p^{\textup{ref}}_0(x_0)}.
\end{align}
By iterating this recursion, it follows that
\begin{align}\label{eq:recursionvaluefunction2}
    \phi_k(x_k)&= \int \phi_0(x_0)~~p^{\textup{ref}}_{0|k}(x_{0}|x_{k})\mathrm{d}x_{0}.
\end{align}
\end{proof}

\begin{figure*} % [t!]
    \centering
    \begin{minipage}{0.32\linewidth }
    \centering
    \includegraphics[width=\linewidth ]{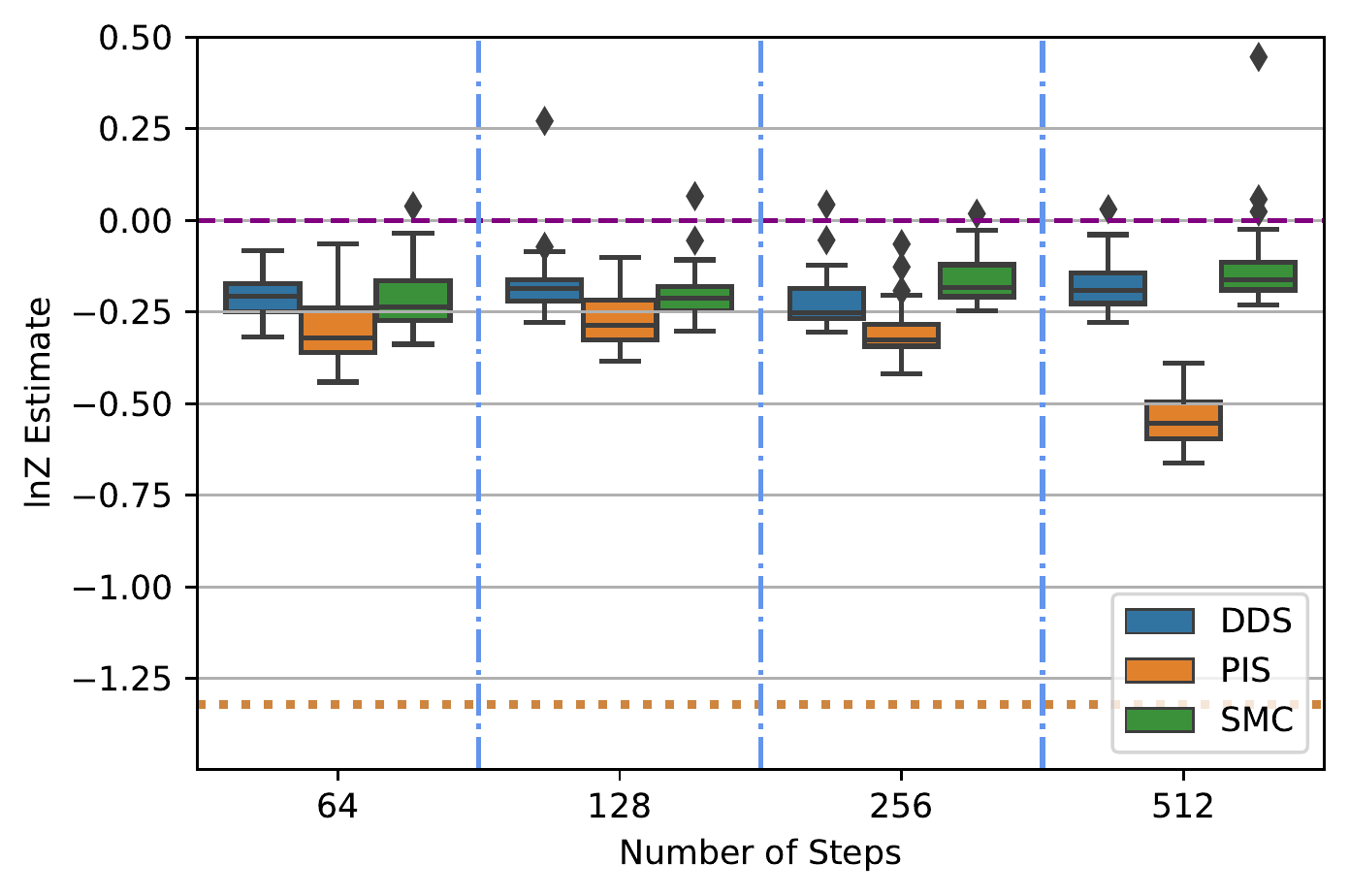}
    \end{minipage}
    \hspace*{\fill}\begin{minipage}{0.32\linewidth }
    \centering
    \includegraphics[width=\linewidth ]{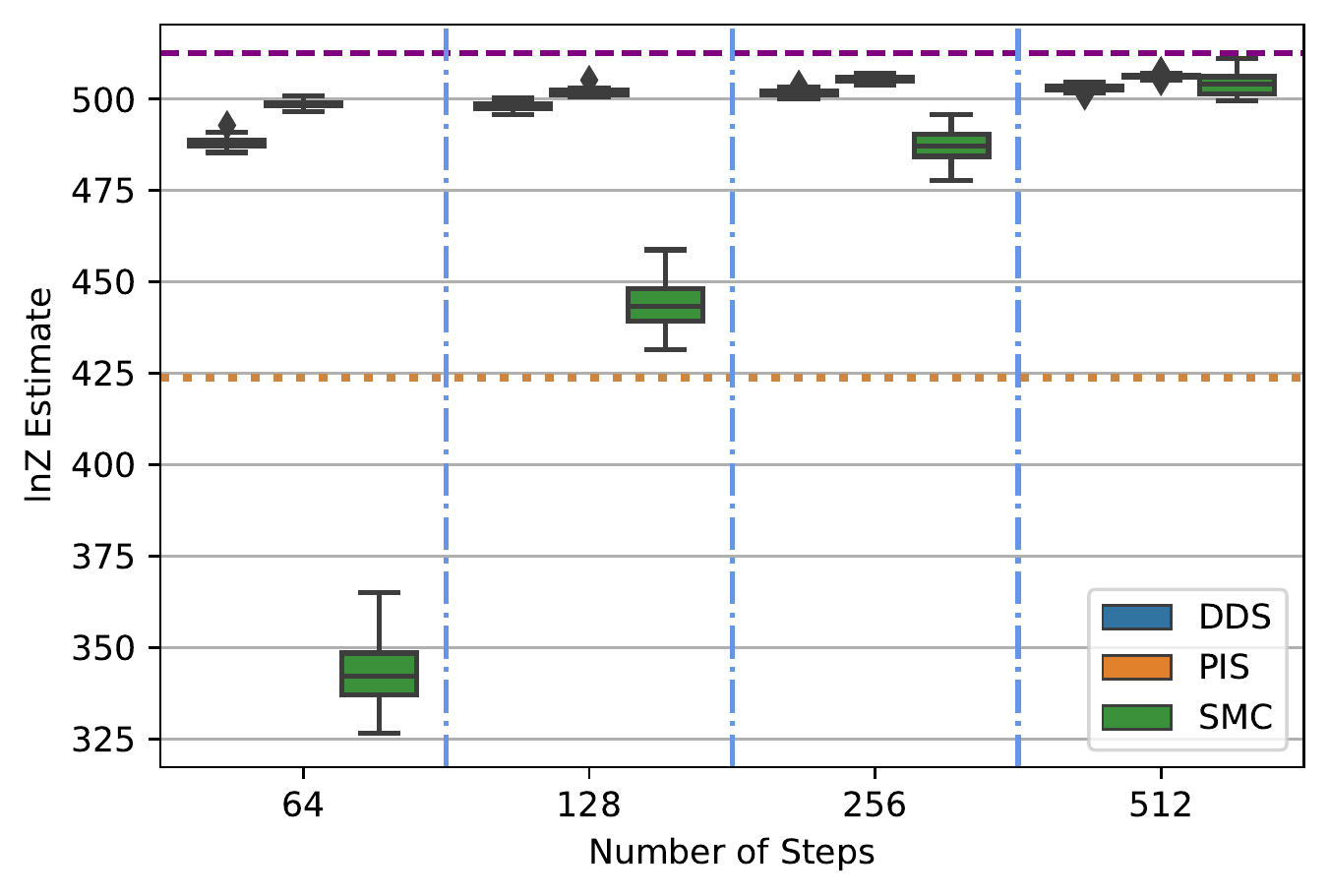}
    \end{minipage}
    \hspace*{\fill}\begin{minipage}{0.32\linewidth }
    \centering
    \includegraphics[width=\linewidth ]{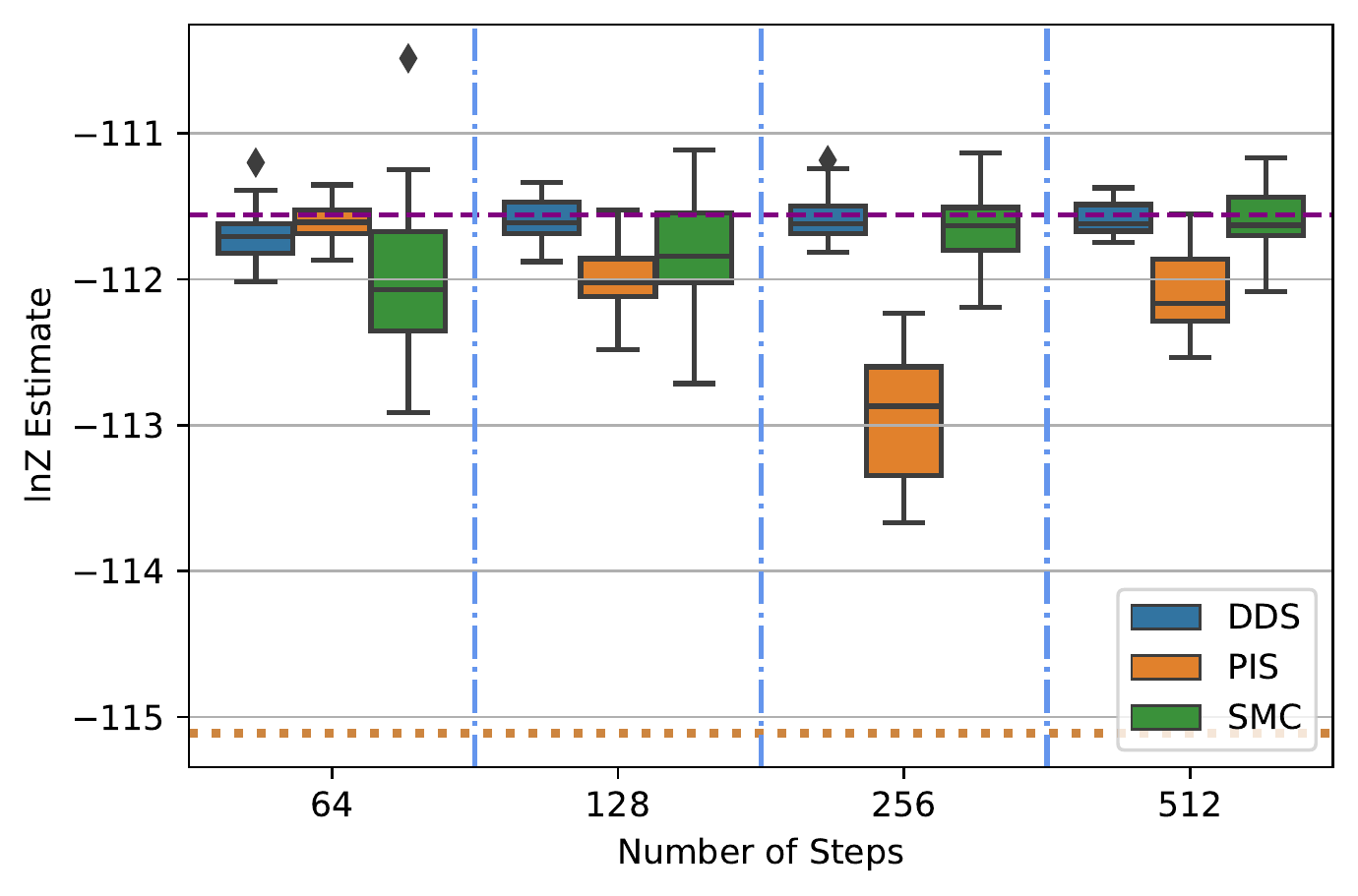}
    \end{minipage}
    \caption{$\log Z$ estimate as a function of number of steps $K$ - a) Funnel , b) LGCP, c) Logistic Ionosphere dataset. Yellow Dotted line is MF-VI and dashed magenta is the gold standard.} \label{fig:base_targets_old}
\end{figure*}
\begin{figure*}  %[t!]
    \centering
    \begin{minipage}{0.32\linewidth }
    \centering
    \includegraphics[width=\linewidth ]{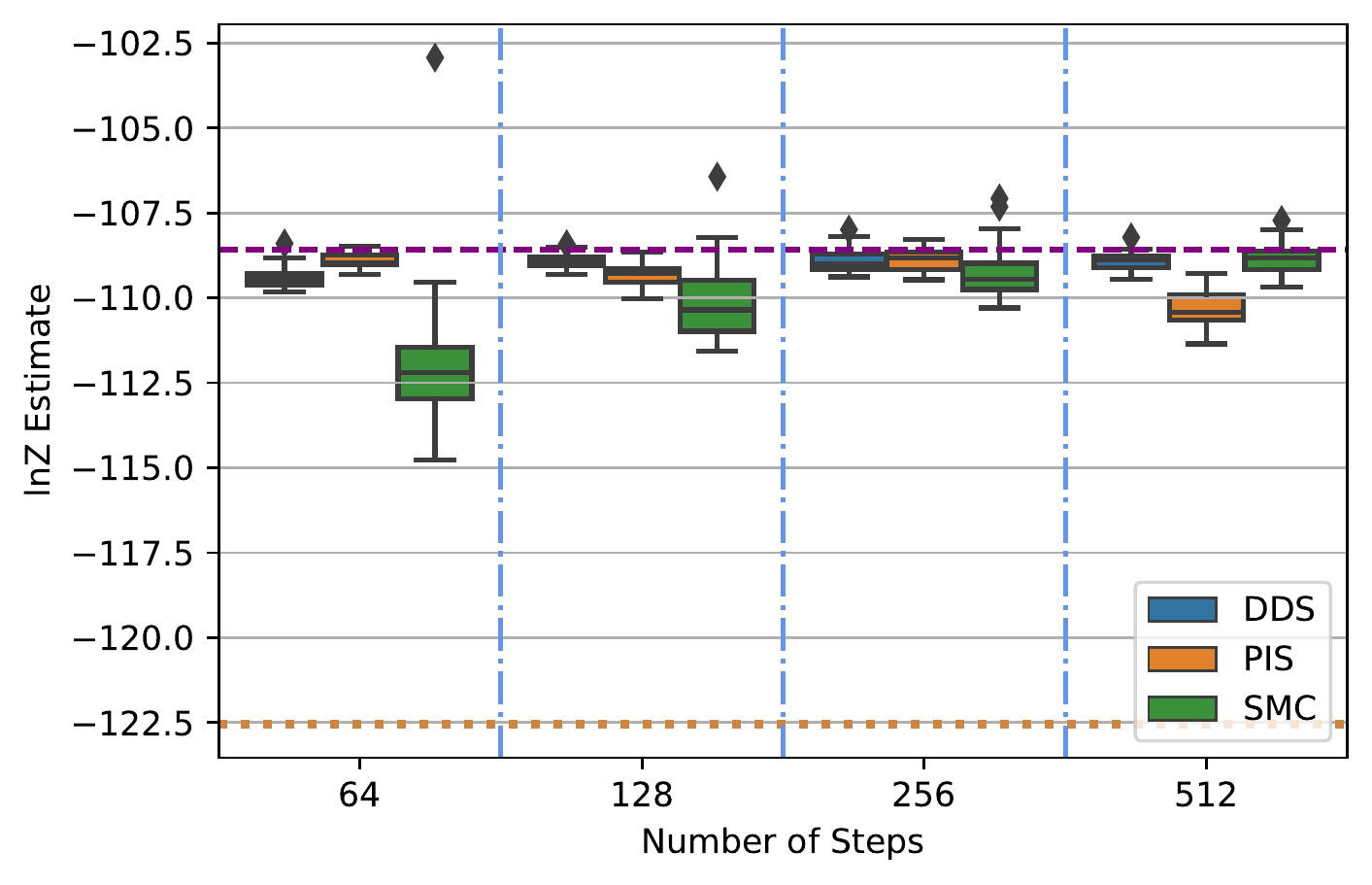}
    \end{minipage}
    \hspace*{\fill}\begin{minipage}{0.32\linewidth }
    \centering
    \includegraphics[width=\linewidth ]{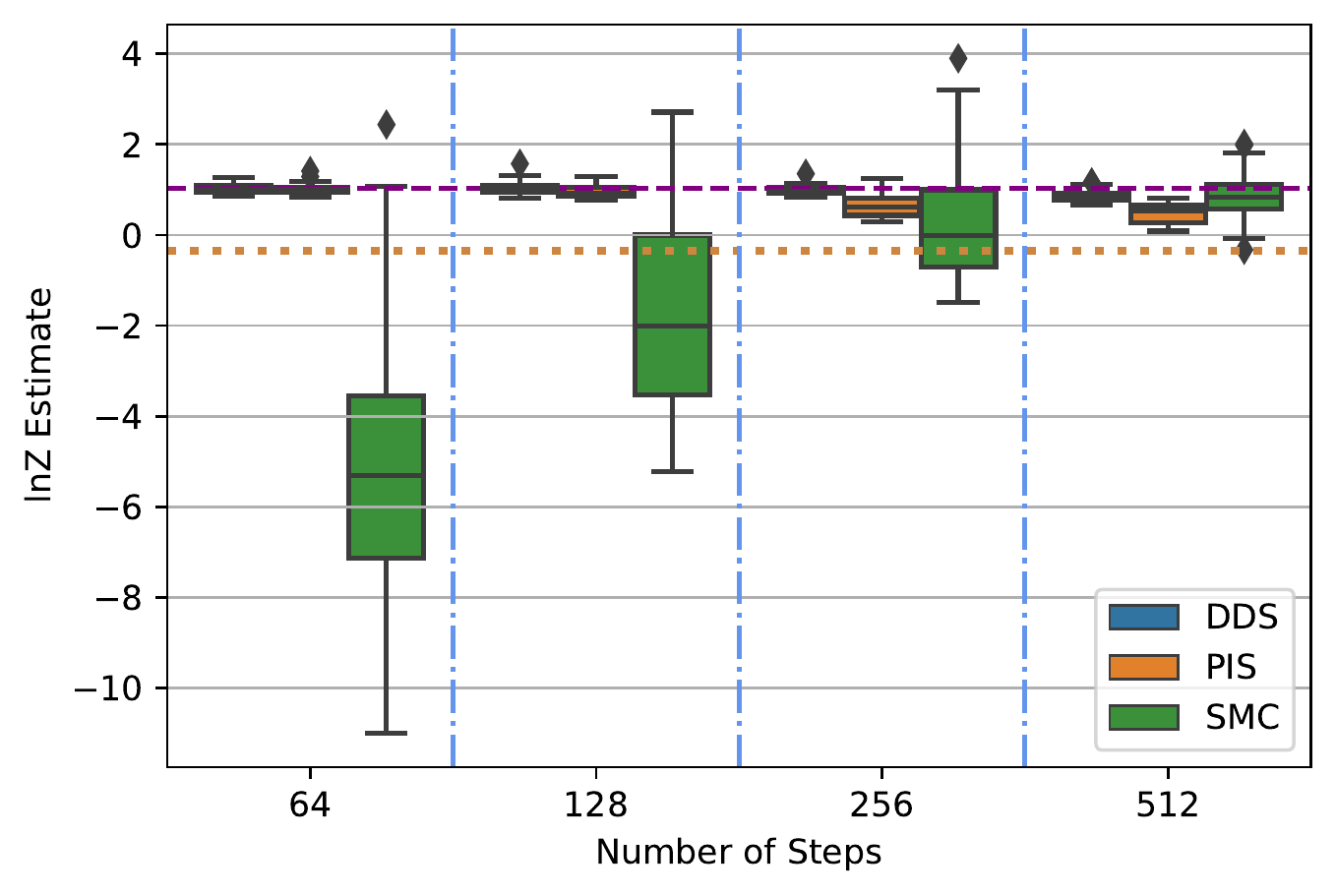}
    \end{minipage}
     \hspace*{\fill}
    \hspace*{\fill}\begin{minipage}{0.32\linewidth }
    \centering
    \includegraphics[width=\linewidth ]{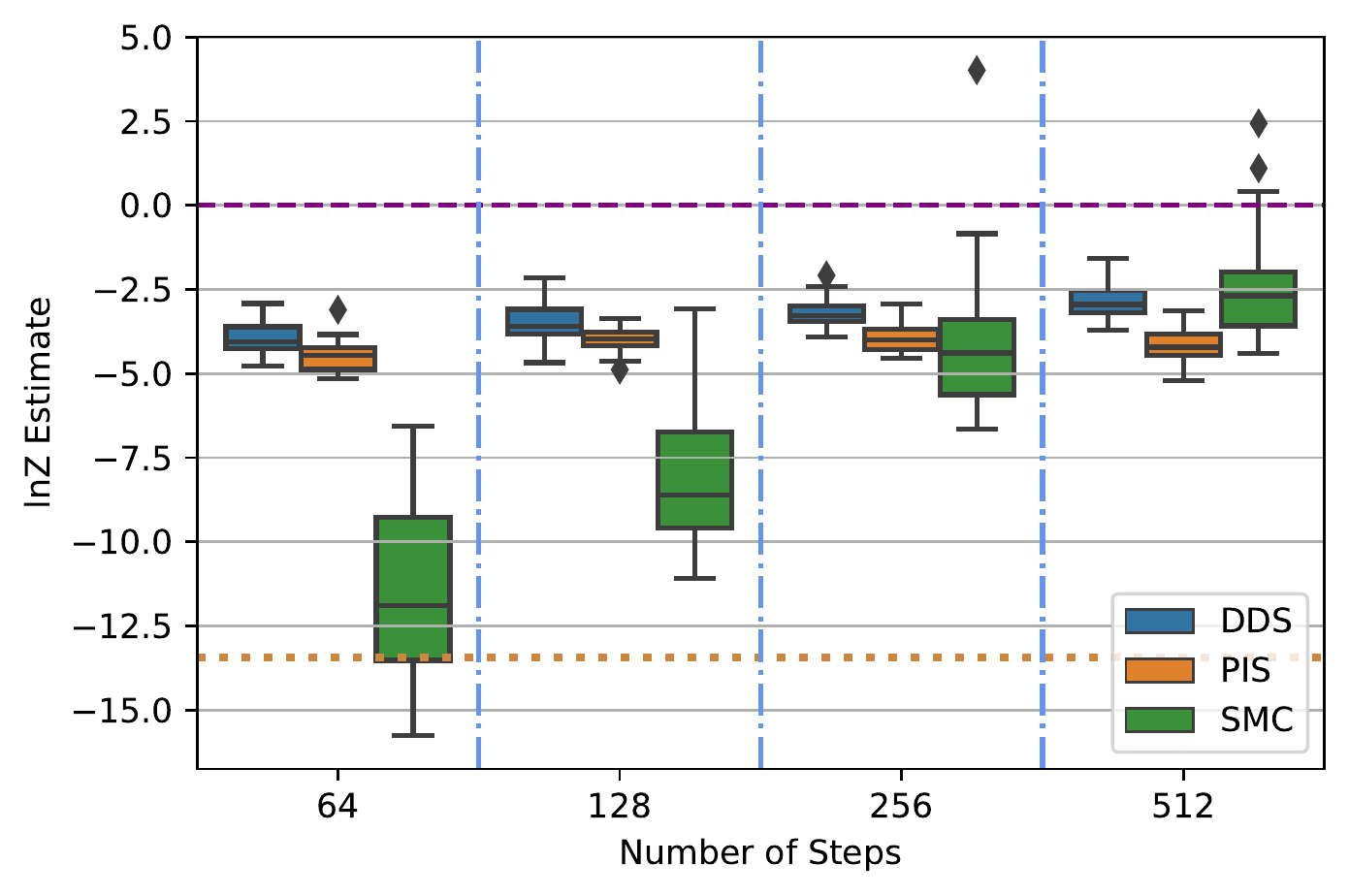}
    \end{minipage}
    
    \caption{$\log Z$ estimate as a function of number of steps $K$ - a) Logistic Sonar dataset, b) Brownian motion, c) NICE. Yellow dotted line is MF-VI and dashed magenta is the gold standard.} \label{fig:logreg_targets_old}
\end{figure*}

\subsection{Approximating the Backward kernels}
We want to approximate the backward Markov transitions $p_{k-1|k}$.  From (\ref{eq:backwardkernelvaluefunctions}), this would require not only to approximate the value functions - Monte Carlo estimates based on (\ref{eq:recursionvaluefunction}) are high variance - but also to sample from the resulting twisted kernel which is difficult.
However, if we select $\beta_k \approx 0$, then we have $\phi_{k-1}(x)\approx \phi_k(x)$ and by a Taylor expansion of $\nabla \log \phi_k$ around $x_k$ we obtain
\begin{align}
     p_{k-1|k}(x_{k-1}|x_{k})&=p^{\textup{ref}}_{k-1|k}(x_{k-1}|x_{k}) \exp(\log \phi_{k-1}(x_{k-1})- \log \phi_{k}(x_{k})) \nonumber\\
     &\approx \mathcal{N}(x_{k-1};\sqrt{1-\alpha_k}x_k+\sigma^2(1-\sqrt{1-\alpha_k}) \nabla \log \phi_k(x_k))
     \label{eq:approxDTreversalvalue}.
\end{align}
By differentiating the identity (\ref{eq:recursionvaluefunction}) w.r.t. $x_k$, we obtain the following expression for $\phi_k$
\begin{equation}\label{eq:derivativevaluefunction}
    \nabla \log \phi_k(x_k)=\int \nabla \log p^{\textup{ref}}_{0|k}(x_0|x_{k})~~  p_{0|k}(x_{0}|x_{k})\mathrm{d}x_{0}.
\end{equation}
Alternatively, we can also use $\nabla \log \phi_k(x_k)=\nabla \log p_k(x_k)+\tfrac{x}{\sigma^2}$ where $\nabla \log p_k(x_k)$ can be easily shown to satisfy 
\begin{equation}\label{eq:scoreidentity}
    \nabla \log p_k(x_k) = \int \nabla \log p_{k|0}(x_k|x_0)~~p_{0|k}(x_0|x_k)\mathrm{d}x_0.
\end{equation}
For DDPM, the conditional expectation in (\ref{eq:scoreidentity}) is reformulated as the solution to a regression problem, leveraging the fact that we can easily obtain samples from $ \pi(x_0)p_{k|0}(x_k|x_0)$ as $\pi$ is the data distribution in this context. In the Monte Carlo sampling context considered here, we could sample approximately from $p_{0|k}(x_0|x_k) \propto \pi(x_0)p_{k|0}(x_k|x_0)$ using MCMC so as to approximate the conditional expectations in (\ref{eq:derivativevaluefunction}) and (\ref{eq:scoreidentity}) but this would defeat the purpose of the proposed methodology. We propose instead to approximate $\nabla \log \phi_k$ by minimizing a suitable criterion.

In practice, we will consider a distribution $q^{\theta}(x_{0:K})$ approximating $p(x_{0:K})$ of the form
\begin{equation}\label{eq:proposalqreversetime}
    q^{\theta}(x_{0:K})=\mathcal{N}(x_K;0,\sigma^2 I)\prod_{k=1}^K q^\theta_{k-1|k}(x_{k-1}|x_k),
\end{equation}
i.e. we select $q^{\theta}_K(x_{K})=q_K(x_{K})=\mathcal{N}(x_K;0,\sigma^2 I)$ as $p_K(x_K)\approx \mathcal{N}(x_K;0,\sigma^2 I)$. We want $q^\theta_{k-1|k}(x_{k-1}|x_k)$ to approximate $p_{k-1|k}(x_{k-1}|x_k)$, i.e. $f_\theta(k,x_k) \approx \nabla \log \phi_k(x_k)$. 
Inspired by (\ref{eq:approxDTreversalvalue}), we will consider an approximation of the form 
\begin{equation} \label{eq:algo_param}
    q^\theta_{k-1|k}(x_{k-1}|x_k)=\mathcal{N}(x_{k-1};\sqrt{1-\alpha_k}x_k+\sigma^2(1-\sqrt{1-\alpha_k}) f_\theta(k,x_k),\sigma^2 \alpha_k I).
\end{equation}
where $f_\theta(k,x_k)$ is differentiable w.r.t. $\theta$.

\subsection{Hyperparameters}

\subsubsection{Fitted Hyperparameters}

To aid reproducibility we report all fitted hyperparameters for each of our methods and PIS across all experiments in Tables \ref{hype:base} and \ref{hype:logreg}.

\subsubsection{Optimisation Hyperparameters}

As mentioned in the experimental section across all experiments modulo the Funnel we use the Adam optimiser with a learning rate of $0.0001$ with no learning decay and $11000$ training iterations, for the rest of the optimisation parameters use the default settings as provided by the Optax library \citep{optax2020github} which are $b_1=0.9, b_2=0.999, \epsilon=10^{-8}$ naming as per \cite{Kingma:2014}.

From the \href{https://github.com/qsh-zh/pis}{github repository} of \cite{zhangyongxinchen2021path} we were only able to find hyperparameters reported for the Funnel distribution. In order to first reproduce their results we used the a learning rate of $0.005$ and a learning rate decay of $0.95$ as per their implementation, their results were initially not reproducible due to a bug in setting $\sigma_f=1$ despite comparing to methods at the less favourable values of $\sigma_f=3$. For $\sigma_f=1$ we were able to reproduce their results. However we report results at $\sigma_f=3$ as this is the traditional value used for this loss. As no other optimisation configuration files were reported we used the more conservative learning rate of $0.0001$ since PIS was very unstable for $0.005$ with decay $0.95$ across many of our tasks. Finally we would like to clarify that the exact same optimiser settings where used for both PIS and DDS in order to ensure a fair comparison.

\subsubsection{Drift and Gradient Clipping}

We follow the same gradient clipping as in \cite{zhangyongxinchen2021path} that is :
\begin{align}
f_\theta(k, x) \!=\!\textcolor{blue}{ \mathrm{clip}\Big(}\mathrm{NN}_1(k, x;\theta) + \mathrm{NN}_2(k;\theta) \odot \textcolor{red}{\mathrm{clip}}\textcolor{red}{\big(}\nabla \ln  \pi(x)\textcolor{red}{,-10^2, 10^2 \big)}  \textcolor{blue}{, -10^4, 10^4  \Big)}
\end{align}

\begin{landscape}

\begin{table}[]
\adjustbox{max totalheight=2cm}{

\begin{tabular}{@{}llllllllll@{}}
\toprule
        & \multicolumn{3}{l}{funnel}                                                               & \multicolumn{3}{l}{lgcp}                                                           & \multicolumn{3}{l}{ion}                                                                    \\ \midrule
        & DDS                            & PIS              & UDMP                                 & DDS                         & PIS              & UDMP                              & DDS                             & PIS              & UDMP                                  \\
$K=64$  & $\sigma=1.075,\alpha=1.075$    & $\sigma=1.068$   & $\sigma= 1.85, \alpha= 1.67, m= 0.9$ & $\sigma= 2.1, \alpha=1.50$  & $\sigma= 1.068$  & $\sigma=1.1, \alpha=2.5, m= 0.4$  & $\sigma = 0.688, \alpha= 1.463$ & $\sigma =0.253$  & $\sigma= 0.6, \alpha= 3.85, m= 0.600$ \\
$K=128$ & $\sigma=1.075, \alpha= 0.6875$ & $\sigma= 0.416$  & $\sigma=1.85, \alpha= 3.7, m=0.9$    & $\sigma= 2.1, \alpha=0.75$  & $\sigma = 0.742$ & $\sigma=1.4, \alpha=2.5, m=0.4$   & $\sigma =0.3, \alpha=1.075$     & $\sigma= 0.09$   & $\sigma=0.6, \alpha =3.85, m = 0.600$ \\
$K=256$ & $\sigma=1.85, \alpha=0.3$      & $\sigma= 0.742$  & $\sigma=1.075, \alpha = 2.5, m= 0.9$ & $\sigma= 2.1, \alpha=0.900$ & $\sigma = 0.579$ & $\sigma=1.4, \alpha=4.5, m = 0.4$ & $\sigma= 0.3, \alpha =0.688$    & $\sigma = 0.416$ & $\sigma= 0.6, \alpha=3.85, m= 1.0$    \\
$K=512$ & $\sigma=1.463, \alpha=0.3$     & $\sigma = 0.253$ & $\sigma =0.688, \alpha=3.7, m= 0.9$  & $\sigma =2.1, \alpha=1.500$ & $\sigma= 0.416$  & $\sigma=1.7, \alpha=4.5, m= 0.4$  & $\sigma =0.688, \alpha =0.688$  & $\sigma= 0.253$  & $\sigma=0.6, \alpha=3.85, m=1.0$      \\ \bottomrule
\end{tabular}
}
\caption{Fitted hyperparameters for funnel, lgcp and ion experiments. \label{hype:base}}
\end{table}

\begin{table}[]
\adjustbox{max totalheight=1.7cm}{
\begin{tabular}{@{}lllllllllllll@{}}
\toprule
        & \multicolumn{3}{l}{lr\_sonar}                                                      & \multicolumn{3}{l}{vae}                                 & \multicolumn{3}{l}{brownian}                                                       & \multicolumn{3}{l}{nice}                                                           \\ \midrule
        & DDS                         & PIS             & UDMP                               & DDS                           & PIS              & UDMP & DDS                        & PIS              & UDMP                               & DDS                          & PIS             & UDMP                              \\
$K=64$  & $\sigma= 0.3, \alpha=1.650$ & $\sigma= 0.253$ & $\sigma= 1.15, \alpha=1.7, m= 2.2$ & $\sigma= 0.61, \alpha=2.2$    & $\sigma=0.253$   & NA   & $\sigma= 0.1, \alpha=2.35$ & $\sigma =0.084$  & $\sigma=0.115, \alpha=4.8, m=2.2$  & $\sigma= 1.5, \alpha =2.125$ & $\sigma =0.75$  & $\sigma=1.2, \alpha= 1.0, m= 0.9$ \\
$K=128$ & $\sigma=0.3, \alpha=1.2$    & $\sigma= 0.253$ & $\sigma= 0.55, \alpha=1.7, m= 3.1$ & $\sigma 0.61, \alpha=1.670$   & $\sigma= 0.2523$ & NA   & $\sigma =0.1, \alpha=1.8$  & $\sigma= 0.093$  & $\sigma=0.115, \alpha=4.8, m=2.2$  & $\sigma= 1.5, \alpha =1.75$  & $\sigma =0.588$ & $\sigma=1.2, \alpha= 1.0 m=1.65$  \\
$K=256$ & $\sigma=0.3, \alpha=0.75$   & $\sigma= 0.253$ & $\sigma=0.55, \alpha= 2.9, m= 2.2$ & $\sigma= 0.61, \alpha=1.140$  & $\sigma= 0.506$  & NA   & $\sigma =0.1, \alpha=2.35$ & $\sigma= 0.043$  & $\sigma=0.115, \alpha=3.75, m=2.2$ & $\sigma= 1.5, \alpha =1.75$  & $\sigma =0.425$ & $\sigma=1.2, \alpha= 2.5, m=0.9$  \\
$K=512$ & $\sigma=0.3, \alpha=0.75$   & $\sigma= 0.253$ & $\sigma= 0.55, \alpha=2.9, m=3.1$  & $\sigma =0.61, \alpha =1.670$ & $\sigma=0.416$   & NA   & $\sigma =0.1, \alpha=1.8$  & $\sigma= 0.0408$ & $\sigma=0.115, \alpha=4.8, m=2.2$  & $\sigma= 1.5, \alpha =2.5$   & $\sigma =0.263$ & $\sigma=1.2, \alpha= 2.5, m=1.65$ \\ \bottomrule
\end{tabular}
}
\caption{Fitted hyperparameters for sonar, brownian, vae and nice experiments. \label{hype:logreg}}
\end{table}

\end{landscape}

\end{document}